\documentclass{article}

\usepackage[final]{staix_2026}    % Camera-ready, Full Paper Track
\usepackage{microtype}
\usepackage{graphicx} % For including graphics
\graphicspath{{../figures/}}
\usepackage{subcaption} % For subfigures
\usepackage{booktabs} % For professional looking tables
\usepackage{amsmath} % For math environments
\usepackage{amssymb} % For additional math symbols
\usepackage{mathtools} % For additional math tools
\usepackage{amsthm} % For theorem environments
\usepackage[textsize=tiny]{todonotes} % For inline TODO notes
\usepackage{hyperref} % For hyperlinks
\usepackage[capitalize, noabbrev]{cleveref} % For clever referencing
\usepackage[most]{tcolorbox}

\newtcolorbox{theorembox}{
    colback=blue!5,
    colframe=blue!60!black,
    boxrule=0.8pt,
    arc=2mm,
    left=2mm,
    right=2mm,
    top=1mm,
    bottom=1mm
}

\newcommand{\E}{\mathbb{E}}

\newcommand{\R}{\mathbb{R}}
\newcommand{\Prob}{\mathbb{P}}
\newcommand{\iid}{\overset{\text{i.i.d.}}{\sim}}
\newcommand{\ind}{\overset{\text{ind.}}{\sim}}
\newcommand{\indprop}{\overset{\text{ind.}}{\propto}}

\DeclareMathOperator*{\argmin}{argmin}

\theoremstyle{plain}
\newtheorem{theorem}{Theorem}[section]
\newtheorem{proposition}[theorem]{Proposition}
\newtheorem{lemma}[theorem]{Lemma}

\theoremstyle{definition}
\newtheorem{definition}[theorem]{Definition}

\theoremstyle{remark}

\usepackage{float}
\usepackage{multirow}
\usepackage{siunitx}
\usepackage{adjustbox}

\title{Density-Informed Pseudo-Counts for Calibrated Evidential Deep Learning}

\author{%
  Pietro Carlotti\thanks{Equal contribution.}\\
  Department of Statistics and Data Sciences \\
  University of Texas at Austin \\
  \texttt{pietro.carlotti@utexas.edu}
  \And
  Nevena Gligić$^*$ \\
  Department of Statistics and Data Sciences \\
  University of Texas at Austin \\
  \texttt{nevena.gligic@utexas.edu}
  \And
  Arya Farahi \\
  Department of Statistics and Data Sciences \\
  University of Texas at Austin \\
  \texttt{arya.farahi@austin.utexas.edu}
}

\begin{document}

\maketitle

% {\small $^*$ Equal contribution.} % Uncomment for camera-ready ([final] mode)

% Abstract
\begin{abstract}
    Evidential Deep Learning (EDL) is a popular framework for uncertainty-aware classification that models predictive uncertainty via Dirichlet distributions parameterized by neural networks. Despite its popularity, its theoretical foundations and behavior under distributional shift remain poorly understood. In this work, we provide a principled statistical interpretation by proving that EDL training corresponds to amortized variational inference in a hierarchical Bayesian model with a tempered pseudo-likelihood. This perspective reveals a major drawback: standard EDL conflates epistemic and aleatoric uncertainty, leading to systematic overconfidence on out-of-distribution inputs. To address this, we introduce Density-Informed Pseudo-count EDL, a new parametrization that decouples class prediction from uncertainty quantification by separately estimating the conditional label distribution and the marginal covariate density. This separation preserves evidence in high-density regions while shrinking predictions toward a uniform prior for out-of-distribution data. Theoretically, we prove that our method achieves asymptotic concentration. Empirically, we show our method enhances interpretability and improves robustness and uncertainty calibration under distributional shift.
\end{abstract}

\section{Introduction}
\label{sec:introduction}

Uncertainty quantification is a central challenge in modern machine learning, particularly in safety-critical and high-stakes applications such as medical diagnosis \citep{begoli2019need} and scientific discovery \citep{gal2022bayesian}. While deep neural networks achieve remarkable predictive accuracy, they are notoriously prone to overconfidence \citep{guo2017calibration}, especially under limited data or distributional shift \citep{seligmann2023beyond,mucsanyi2024benchmarking}. Reliable uncertainty estimates are therefore essential for calibrated decision-making, detection of out-of-distribution (OOD) inputs, robust deployment, and supporting downstream reasoning. Despite its importance, principled, scalable uncertainty quantification for deep models remains an open problem.

A wide range of methods has been proposed to address this challenge \citep{mucsanyi2024benchmarking,he2025survey}. Bayesian neural networks aim to capture epistemic uncertainty by placing priors over model parameters and performing posterior inference, typically via variational approximations or Monte Carlo sampling \citep[][]{blundell2015weight,hernandez2015probabilistic,gal2016dropout}. Ensemble-based methods approximate posterior uncertainty by aggregating predictions from multiple trained models, yielding strong empirical performance at the cost of increased computational burden \citep[][]{lakshminarayanan2017simple,wang2023diversity}. Alternatively, post-hoc calibration methods adjust predictive confidence without explicitly modeling uncertainty \citep[][]{guo2017calibration,gibbs2025conformal,vashistha2025calibration}. While effective in certain problems, these approaches often suffer from scalability, complex training pipelines, or limited interpretability of uncertainty estimates.

Evidential Deep Learning (EDL) offers an alternative that directly models predictive uncertainty at the output level \citep{sensoy2018evidential}. Rather than placing distributions over network parameters, EDL predicts a distribution over class probabilities by parameterizing a Dirichlet distribution with a neural network. This formulation provides a unified representation of predictive confidence and uncertainty, allows for single forward-pass inference, and integrates naturally with standard deep learning pipelines. Hence, EDL has gained popularity as a practical framework for uncertainty-aware classification \citep{gao2025comprehensive}. The main idea of EDL is to map the covariates $X \in \R^{d}$ to a distribution over the class probabilities $p \in \Delta^{K-1}$ for a categorical response variable $Y \in \{1, \ldots, K\}$. The original formulation of \citet{sensoy2018evidential} achieve this by specifying the distribution of the class probabilities using a Dirichlet distribution with concentration parameters given by a neural network $\text{NN}^{\phi}: \R^{d} \to \R_{+}^{K}$ with parameters $\phi$ which outputs a $K$-dimensional vector of non-negative weights.

To train this model, \citet{sensoy2018evidential} propose a class of loss functions with a structure 
\begin{equation}
\label{eq:EDL_loss_function}
    \mathcal{L}^{\lambda}_{\text{EDL}}(\phi) = \mathcal{L}_{\text{data}}(\phi) + \lambda \, \mathcal{L}_{\text{reg}}(\phi), 
\end{equation}
where $\mathcal{L}_{\text{data}}$ encourages the model to fit the training data, $\mathcal{L}_{\text{reg}}$ penalizes complexity, and $\lambda > 0$ is a hyperparameter that balances the trade-off between the two. In particular, we will focus on the specific choice of data and regularization terms proposed by \citet{sensoy2018evidential}:
\begin{equation}
    \begin{aligned}
        \mathcal{L}_{\text{data}}(\phi) & = \sum_{i=1}^{n} - \E_{\text{Dir} \left( \alpha + \text{NN}^{\phi}_{X_{i}} \right)} \left[ \log \text{Cat}(Y_{i} \mid p_i) \right], \\
        \mathcal{L}_{\text{reg}}(\phi) & = \sum_{i=1}^{n} \text{KL}\left( \text{Dir} \left( \alpha + \text{NN}^{\phi}_{X_{i}} \right) || \, \; \text{Dir}(\alpha) \right),
    \end{aligned}
\end{equation}
where $\alpha \in \R_{+}^{K}$ is prior concentration parameters. Despite its intuitive appeal and empirical success, the theoretical foundations of EDL and its behavior under distributional shift remain poorly understood. In particular, it is unclear how the learned Dirichlet distributions relate to the underlying data-generating process (DGP), and whether EDL can meaningfully distinguish among different sources of uncertainty. See Appendix~\ref{sec:appendix_related_literature} for a more detailed review of related work on EDL and its limitations.
%, and $\text{Cat}$ and $\text{Dir}$ denote the categorical and Dirichlet probability mass/density functions, respectively.

In this work, we provide a statistical interpretation of the EDL framework, elucidate the role of the regularization parameter, and propose a novel parametrization, Density-Informed Pseudo-count EDL (DIP-EDL), with the goal of improving performance for both in-distribution and out-of-distribution settings. DIP-EDL is a principled method that decouples class prediction from uncertainty quantification by separately estimating the conditional label distribution and the marginal covariate density. %Section~\ref{sec:statistical_interpretation} frames EDL as amortized variational inference with a tempered pseudo-likelihood, highlighting how performance depends on the regularization parameter. Section~\ref{sec:choice_temperature_parameter} introduces DIP-EDL, a principled method that decouples class prediction from uncertainty magnitude by separately estimating conditional label distributions and marginal covariate densities. %This mechanism preserves evidence in high-density regions while shrinking predictions toward a uniform prior for OOD data.
%Section~\ref{sec:experiments} evaluates our method on synthetic and real-world datasets. Section~\ref{sec:conclusion} concludes.

\section{Statistical Interpretation of EDL}
\label{sec:statistical_interpretation}

Here, we present a unifying interpretation of EDL: (i) a hierarchical Bayesian model with amortized variational inference (VI), and (ii) an empirical risk minimization (ERM) approach. We show that both formulations yield the same optimization objective, thereby providing a statistical interpretation and clarifying the uncertainty quantification properties of EDL methods.

\subsection{Problem Setup}
\label{sec:problem_setup}

Suppose the observed data is generated according to the DGP
\begin{equation}
    (Y_{i}, X_i) \iid P^{*}_{Y,X}, \qquad i = 1, \ldots, n.
    \label{eq:DGP}
\end{equation}
The joint distribution admits the factorization $P^{*}_{Y,X} = P^{*}_{Y \mid X}\, P^{*}_{X}$, where $P^{*}_{Y \mid X}$ denotes the conditional distribution of $Y$ given $X$, and $P^{*}_{X}$ the marginal distribution of $X$.
Throughout the analysis, we assume that $P^{*}_{X}$ is absolutely continuous with respect to the Lebesgue measure on $\R^{d}$. Our goal is to estimate $P^{*}_{Y \mid X}$ and provide calibrated uncertainty quantification for in- and out-of-sample predictions.

\subsection{Amortized VI}
\label{sec:amortized_vi}

We show that EDL admits an interpretation as amortized VI on a modification of the Independent Categorical-Dirichlet (ICD) model, in which class probabilities vary across observations.

\begin{definition}
\label{def:icd_model}
    \textbf{ICD Model.} For some $\alpha \in \R_{+}^{K}$, the model is defined as follows:
    \begin{align*}
        Y_i \mid p_i &\ind f_{p_{i}} = \text{Cat}(p_i), \quad i = 1, \ldots, n, \\
        p_i &\iid \pi = \text{Dir}(\alpha), \quad i = 1, \ldots, n.
    \end{align*}
    
\end{definition}

This prior is convenient because its conjugacy with the categorical likelihood allows the posterior of class probabilities to be computed in closed form, as summarized in the following proposition.

\begin{proposition}
\label{prop:icd_model_posterior}
    Given Definition~\ref{def:icd_model}, the posterior and posterior predictive distributions are
    \begin{align*}
        \pi_{Y_{i}} = \mathrm{Dir}(\alpha+e_{Y_{i}}), \; i = 1, \ldots, n, \quad p_{1:n}\!\mid\!Y_{1:n} \sim \pi_{Y_{1:n}} = \prod_{i=1}^{n} \pi_{Y_{i}}, \quad Y_{n+1}\!\mid\!Y_{1:n} \sim \mathrm{Cat}\left(\frac{\alpha}{\alpha_{0}}\right),
    \end{align*}
    where $e_{Y_{i}} \in \{0,1\}^{K}$ denotes the one-hot encoding of $Y_{i}$, $i = 1, \ldots, n$, and $\alpha_{0} = \sum_{k=1}^{K} \alpha_{k}$.
\end{proposition}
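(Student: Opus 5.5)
The plan is to use Bayes' theorem on the joint density of $(p_{1:n}, Y_{1:n})$ and exploit the product structure of Definition~\ref{def:icd_model}. Under the model, the pairs $(p_i, Y_i)$ are mutually independent across $i$, so the joint density factorizes as
\begin{equation*}
    \prod_{i=1}^{n} \mathrm{Dir}(p_i \mid \alpha)\,\mathrm{Cat}(Y_i \mid p_i).
\end{equation*}
Conditioning on $Y_{1:n}$, the posterior of $p_{1:n}$ is proportional to this product. Since it is a product of functions each depending only on $(p_i, Y_i)$, the posterior factorizes into $\prod_i \pi_{Y_i}$, where each factor is the posterior of $p_i$ given $Y_i$ alone.

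Next I compute each factor by conjugacy. Writing $\mathrm{Cat}(Y_i \mid p_i) = \prod_k p_{ik}^{(e_{Y_i})_k}$ and $\mathrm{Dir}(p_i \mid \alpha) \propto \prod_k p_{ik}^{\alpha_k - 1}$ on $\Delta^{K-1}$, the product is
\begin{equation*}
    \prod_k p_{ik}^{\alpha_k + (e_{Y_i})_k - 1}.
\end{equation*}
This is the kernel of $\mathrm{Dir}(\alpha + e_{Y_i})$. Normalizing on the simplex identifies $\pi_{Y_i} = \mathrm{Dir}(\alpha + e_{Y_i})$, which gives the first two claims.

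For the predictive distribution, the key point, and the only one needing care, is that $Y_{n+1}$ depends on a fresh $p_{n+1} \sim \mathrm{Dir}(\alpha)$. This $p_{n+1}$ is independent of $(p_{1:n}, Y_{1:n})$ under the model, so observing $Y_{1:n}$ carries no information about it, and $p_{n+1} \mid Y_{1:n} \sim \mathrm{Dir}(\alpha)$. Then
\begin{equation*}
    \Prob(Y_{n+1} = k \mid Y_{1:n}) = \E_{\mathrm{Dir}(\alpha)}[p_{n+1,k}] = \frac{\alpha_k}{\alpha_0},
\end{equation*}
using the standard Dirichlet mean. Hence $Y_{n+1} \mid Y_{1:n} \sim \mathrm{Cat}(\alpha/\alpha_0)$.

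I expect no real obstacle. The main subtlety is stating explicitly that the independence of $(p_{n+1}, Y_{n+1})$ from the past is what makes the predictive collapse to the prior predictive, rather than being updated by the observed data.
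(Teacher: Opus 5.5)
Your proposal is correct and follows essentially the same route as the paper: Bayes' theorem with the factorized likelihood and prior, identification of the $\mathrm{Dir}(\alpha + e_{Y_i})$ kernel by conjugacy, then the observation that the fresh $p_{n+1}$ keeps the prior $\mathrm{Dir}(\alpha)$ as its posterior, so the predictive follows from the Dirichlet mean. Your explicit remark that $(p_{n+1}, Y_{n+1})$ is independent of the observed data is exactly the step the paper expresses as $\pi_{Y_{1:n}}(p_{n+1}) \propto \pi(p_{n+1})$.
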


Despite its analytical tractability, this model neglects covariates entirely. While a covariate-indexed extension can incorporate them (see Appendix~\ref{proof:covariate_indexed_cd_model_posterior}), its applicability is limited to discrete inputs. Because $P_X^*$ is continuous, each observation is almost surely unique, preventing information sharing. To address this, we introduce covariate dependence via amortized VI \citep{margossian2023amortized}.

\begin{definition}
\label{def:amortized_vi_icd}
    \textbf{Amortized VI for the ICD Model.} Given Definition~\ref{def:icd_model}, we approximate the posterior via a mean-field variational family:
    \begin{align*}
        p_{1:n} \mid Y_{1:n} \approx q^{\widehat{\phi}_n}_{X_{1:n}} = \argmin_{q^{\phi}_{X_{1:n}}} \left\{ \text{KL}\left( q^{\phi}_{X_{1:n}} \,\|\, \pi_{Y_{1:n}} \right) \right\},
    \end{align*}
    where
    \begin{align*}
        q^{\phi}_{X_{1:n}} = \prod_{i=1}^{n} q^{\phi}_{X_i} \quad \text{and} \quad q^{\phi}_{X_i} = \mathrm{Dir}\left(\alpha + \mathrm{NN}^{\phi}(X_i)\right), \quad i = 1, \ldots, n,
    \end{align*}
    $\mathrm{NN}^{\phi} : \R^d \to \R_+^K$ is a neural network with $\phi \in \Phi$, $\Phi$ is the neural network class, with parameters
    \begin{equation}
    \label{eq:phi_hat_icd}
        \widehat{\phi}_n = \argmin_{\phi \in \Phi} \left\{ \mathrm{KL}\left( q^{\phi}_{X_{1:n}} \,\|\, \pi_{Y_{1:n}} \right) \right\}.
    \end{equation}
\end{definition}

Parameterizing the variational family via a neural network is convenient because it enables non-trivial predictions at arbitrary new covariate values $X_{n+1}$, as formalized in the following proposition.

\begin{proposition}
\label{prop:amortized_vi_icd_predictive}
    Given Definitions~\ref{def:icd_model} and~\ref{def:amortized_vi_icd}, the posterior predictive distribution is
    \begin{align*}
        Y_{n+1} \mid Y_{1:n}, X_{1:n+1} \approx \mathrm{Cat}\left( \frac{\alpha + \mathrm{NN}^{\widehat{\phi}_n}_{X_{n+1}}}{\alpha_0 + S^{\widehat{\phi}_n}_{X_{n+1}}} \right), \quad \text{where} \quad S^{\widehat{\phi}_n}_{X_{n+1}} = \sum_{k=1}^{K} \mathrm{NN}^{\widehat{\phi}_n}_{X_{n+1}}(k).
    \end{align*}
\end{proposition}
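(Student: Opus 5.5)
The plan is to write the predictive distribution as an integral over $p_{n+1}$, replace the intractable posterior of $p_{n+1}$ by the amortized variational factor evaluated at the new covariate, and then compute the Dirichlet mean in closed form. Under the ICD model of Definition~\ref{def:icd_model}, the pairs $(p_i, Y_i)$ are independent across $i$. Hence
\begin{align*}
\Prob(Y_{n+1}=k \mid Y_{1:n}, X_{1:n+1}) = \int \mathrm{Cat}(k \mid p_{n+1}) \, d\Pi(p_{n+1}\mid Y_{1:n}, X_{1:n+1}),
\end{align*}
where $\Pi(p_{n+1}\mid\cdot)$ is the (approximate) posterior of $p_{n+1}$.

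\textbf{Variational approximation.} Definition~\ref{def:amortized_vi_icd} approximates the posterior of each $p_i$ by $q^{\widehat\phi_n}_{X_i}$. We extend the same amortized family to the new index $n+1$. The approximate conditional law of $p_{n+1}$ is therefore
\begin{align*}
q^{\widehat\phi_n}_{X_{n+1}} = \mathrm{Dir}\bigl(\alpha + \mathrm{NN}^{\widehat\phi_n}_{X_{n+1}}\bigr).
\end{align*}
This substitution is the single approximation step and is the reason the statement uses ``$\approx$''.

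A few remarks justify it. The exact posterior of $p_{n+1}$ would, by Proposition~\ref{prop:icd_model_posterior}, simply be the prior $\mathrm{Dir}(\alpha)$, since no $Y_{n+1}$ is observed. The amortized network instead transfers information through $X_{n+1}$. Because $\widehat\phi_n$ depends only on $(X_{1:n}, Y_{1:n})$, the resulting factor is a deterministic function of the conditioning variables $Y_{1:n}, X_{1:n+1}$.

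\textbf{Closed-form mean.} Substituting gives
\begin{align*}
\Prob(Y_{n+1}=k\mid\cdot) \approx \E_{q^{\widehat\phi_n}_{X_{n+1}}}[p_{n+1}(k)].
\end{align*}
We then use the standard mean of a Dirichlet: if $p\sim\mathrm{Dir}(\beta)$, then $\E[p(k)] = \beta_k/\sum_j\beta_j$. Take $\beta = \alpha + \mathrm{NN}^{\widehat\phi_n}_{X_{n+1}}$. Its total is
\begin{align*}
\sum_k \beta_k = \alpha_0 + S^{\widehat\phi_n}_{X_{n+1}}.
\end{align*}
This yields the stated $\mathrm{Cat}$ parameter. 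It is a valid probability vector because $\mathrm{NN}$ outputs nonnegative entries and $\alpha\in\R_+^K$.

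\textbf{Main obstacle.} The only step needing care is justifying that the posterior factor for the unobserved index $n+1$ is replaced by the amortized variational factor at $X_{n+1}$. I would state this explicitly as the defining convention of amortized prediction, following \citet{margossian2023amortized}: the variational family is indexed by covariates rather than by observation labels. Once that is granted, the remaining calculation is routine.
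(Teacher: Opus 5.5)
Your proposal is correct and follows essentially the same route as the paper, which likewise takes the approximate law of $p_{n+1}$ to be the amortized factor $q^{\widehat{\phi}_n}_{X_{n+1}} = \mathrm{Dir}(\alpha + \mathrm{NN}^{\widehat{\phi}_n}_{X_{n+1}})$ directly from Definition~\ref{def:amortized_vi_icd} and then reads off the predictive probabilities as the Dirichlet mean via Lemma~\ref{lemma:dirichlet_expectation}. Your extra remarks are accurate: the exact posterior of $p_{n+1}$ would be the prior, so substituting the amortized factor is the sole source of the $\approx$, a convention the paper's proof uses without stating.
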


In particular, carrying out the VI in Definition~\ref{def:amortized_vi_icd} reduces to the following minimization problem.

\begin{proposition}
\label{prop:KL_expansion_icd_model}
    The optimization problem from Equation~\eqref{eq:phi_hat_icd} is equivalent to minimizing
    \begin{equation}
        \sum_{i=1}^{n} - \E_{q^{\phi}_{X_i}} \left[ \log f_{p_i}(Y_i) \right] + \mathrm{KL}\!\left( q^{\phi}_{X_i} \,\|\, \pi \right).
    \end{equation}
\end{proposition}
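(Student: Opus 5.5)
Since both $q^{\phi}_{X_{1:n}}$ and $\pi_{Y_{1:n}}$ factorize over $i$, the plan is to write the KL as a sum of per-observation terms and then expand each term via Bayes' rule. The expansion replaces the intractable posterior by the likelihood times the prior, up to a $\phi$-independent normalizing constant.

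\textbf{Step 1: additivity over observations.} By the chain rule for KL divergence between product measures,
\begin{equation*}
\mathrm{KL}\left(q^{\phi}_{X_{1:n}} \,\|\, \pi_{Y_{1:n}}\right) = \sum_{i=1}^{n} \mathrm{KL}\left(q^{\phi}_{X_i} \,\|\, \pi_{Y_i}\right),
\end{equation*}
where $\pi_{Y_i}$ is the factor from Proposition~\ref{prop:icd_model_posterior}. The log-density ratio of the products is the sum of the per-coordinate log ratios, and each summand integrates against its own marginal $q^{\phi}_{X_i}$.

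\textbf{Step 2: Bayes' rule on each factor.} Definition~\ref{def:icd_model} gives
\begin{equation*}
\pi_{Y_i}(p_i) = \frac{f_{p_i}(Y_i)\,\pi(p_i)}{m(Y_i)}, \qquad m(Y_i) = \int f_{p}(Y_i)\,\pi(p)\,dp = \frac{\alpha_{Y_i}}{\alpha_0}.
\end{equation*}
This is the conjugacy already used in Proposition~\ref{prop:icd_model_posterior}. Substituting into $\log\bigl(q^{\phi}_{X_i}/\pi_{Y_i}\bigr)$ and taking expectation under $q^{\phi}_{X_i}$ yields
\begin{equation*}
\mathrm{KL}\left(q^{\phi}_{X_i}\,\|\,\pi_{Y_i}\right) = -\E_{q^{\phi}_{X_i}}\left[\log f_{p_i}(Y_i)\right] + \mathrm{KL}\left(q^{\phi}_{X_i}\,\|\,\pi\right) + \log m(Y_i).
\end{equation*}

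\textbf{Step 3: conclusion.} Summing over $i$, the total $\sum_i \log m(Y_i)$ depends only on the data and on $\alpha$, not on $\phi$. Minimizing over $\phi\in\Phi$ therefore has the same argmin as minimizing the displayed objective, which is the negative ELBO.

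\textbf{Main obstacle: integrability.} The only non-routine step is checking that all terms are finite, so that splitting the KL into separate expectations is legitimate and no $\infty-\infty$ arises.
\begin{itemize}
\item Both $q^{\phi}_{X_i}$ and $\pi$ are Dirichlet distributions with strictly positive parameters, so each has a positive density on the open simplex and they are mutually absolutely continuous.
\item $\E_{\mathrm{Dir}(\beta)}[\log p_k] = \psi(\beta_k)-\psi(\beta_0)$ is finite, where $\psi$ denotes the digamma function.
\item The Dirichlet–Dirichlet KL has a known closed form and is finite.
\end{itemize}
Hence every term in the decomposition is finite and the rearrangement is justified.
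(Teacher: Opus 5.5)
Your proposal is correct and follows the paper's proof. Both factorize the KL over observations, substitute Bayes' rule $\pi_{Y_i} \propto f_{p_i}(Y_i)\,\pi$ in each term, and split each summand into $-\E_{q^{\phi}_{X_i}}[\log f_{p_i}(Y_i)] + \mathrm{KL}(q^{\phi}_{X_i}\,\|\,\pi)$. Your explicit $\phi$-independent constant $\sum_i \log(\alpha(Y_i)/\alpha_0)$ and the finiteness check make precise what the paper's ``$\propto$'' step leaves implicit.
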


The objective in Proposition~\ref{prop:KL_expansion_icd_model} matches the EDL loss in Equation~\eqref{eq:EDL_loss_function} but lacks a regularization hyperparameter; we resolve this by introducing a temperature parameter into a new model.

\begin{definition}
\label{def:tempered_icd_model}
    \textbf{Tempered ICD Model.} The model is defined as follows:
    \begin{align*}
        Y_i \mid p_i &\indprop f_{p_{i}}^{\nu} = \prod_{k=1}^{K} p_{i}(k)^{\nu \, \mathbb{I}(Y_{i}=k)}, \quad i = 1, \ldots, n, \\
        p_i &\iid \pi = \text{Dir}(\alpha), \quad i = 1, \ldots, n,
    \end{align*}
    for some $\alpha \in \R_{+}^{K}$ and $\nu > 0$.
\end{definition}

This modified model is motivated by using a tempered likelihood \citep{bissiri2016general}, which provides explicit control over the data's influence on the posterior via the parameter $\nu$, thereby introducing an additional degree of freedom in the trade-off between data fit and regularization.

Notice that for $\nu \neq 1$, the categorical likelihood $f_{p_i}^{\nu}$ is unnormalized. A valid likelihood would require a normalizing constant %$\sum_{k=1}^K p_{i}(k)^{\nu}$
dependent on both $p_i$ and $\nu$. Instead of renormalizing, we follow a standard strategy in generalized Bayesian methods by treating the unnormalized likelihood as a pseudo-likelihood for posterior updating \citep{ventura2016pseudo}.

Under the tempered ICD model, the posterior and posterior predictive distributions remain tractable.

\begin{proposition}
\label{prop:tempered_icd_model_posterior}
    Given Definition~\ref{def:tempered_icd_model}, the posterior and posterior predictive distributions are
    \begin{align*}
        \pi^{\nu}_{Y_{i}} = \mathrm{Dir}(\alpha + \nu \, e_{Y_{i}}), \; i = 1, \ldots, n, \quad p_{1:n}\!\mid\!Y_{1:n} \sim \pi^{\nu}_{Y_{1:n}} = \prod_{i=1}^{n} \pi^{\nu}_{Y_{i}}, \quad Y_{n+1}\!\mid\!Y_{1:n} \sim \mathrm{Cat}\left(\frac{\alpha}{\alpha_{0}}\right).
    \end{align*}
\end{proposition}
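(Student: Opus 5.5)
We follow the conjugate computation behind Proposition~\ref{prop:icd_model_posterior}, with the exponent $\nu$ carried through. The argument has three steps: the single-observation posterior, factorization across observations, and the predictive for a fresh response.

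\textbf{Step 1: single-observation posterior.} The pseudo-likelihood is used as is, without renormalization, so by Bayes' rule
\begin{equation*}
\pi^{\nu}_{Y_i}(p_i) \propto \pi(p_i)\, f^{\nu}_{p_i}(Y_i) \propto \prod_{k=1}^{K} p_i(k)^{\alpha_k - 1} \prod_{k=1}^{K} p_i(k)^{\nu\, \mathbb{I}(Y_i = k)} = \prod_{k=1}^{K} p_i(k)^{\alpha_k + \nu\, \mathbb{I}(Y_i=k) - 1}.
\end{equation*}
This is the kernel of $\mathrm{Dir}(\alpha + \nu e_{Y_i})$. All of its parameters are positive, since $\alpha \in \R_+^K$ and $\nu > 0$. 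Hence the kernel is integrable on $\Delta^{K-1}$, and normalizing identifies the posterior. This integrability check is the only point where the pseudo-likelihood needs care: because $f^{\nu}_{p_i}$ is not a normalized density, we should say explicitly that the generalized posterior is defined as prior times pseudo-likelihood, normalized.

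\textbf{Step 2: factorization across observations.} The joint prior is $\prod_i \pi(p_i)$, and the joint pseudo-likelihood is $\prod_i f^{\nu}_{p_i}(Y_i)$, with each factor depending only on $(p_i, Y_i)$. The joint posterior kernel therefore factorizes over $i$, giving $\pi^{\nu}_{Y_{1:n}} = \prod_i \pi^{\nu}_{Y_i}$.

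\textbf{Step 3: predictive for $Y_{n+1}$.} The new observation has its own $p_{n+1} \sim \mathrm{Dir}(\alpha)$, drawn independently of $p_{1:n}$. Since no data depend on $p_{n+1}$, the observations $Y_{1:n}$ carry no information about it, and its posterior equals its prior. The predictive is computed with the actual categorical sampling model $\mathrm{Cat}(p_{n+1})$ for the new response, as in Proposition~\ref{prop:icd_model_posterior}; the tempering only defines the posterior update and does not change the generative model for a fresh draw. Using the Dirichlet mean, $\Prob(Y_{n+1}=k \mid Y_{1:n}) = \E_{\mathrm{Dir}(\alpha)}[p_{n+1}(k)] = \alpha_k/\alpha_0$, which is the stated $\mathrm{Cat}(\alpha/\alpha_0)$.

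If instead one insisted on predicting with the tempered kernel normalized over $k$, the expectation of $p(k)^{\nu}/\sum_j p(j)^{\nu}$ would not reduce to $\alpha_k/\alpha_0$ in general. The proof should therefore state the convention in Step 3 explicitly.
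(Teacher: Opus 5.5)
Your proposal is correct and follows the paper's proof essentially step for step. The paper likewise multiplies the Dirichlet prior by the tempered pseudo-likelihood to obtain the $\mathrm{Dir}(\alpha + \nu e_{Y_i})$ kernel, factorizes the joint posterior across observations, observes that the posterior of $p_{n+1}$ equals the prior, and computes the predictive as the Dirichlet mean $\alpha(k)/\alpha_0$. In that last step the paper silently uses the untempered $\Prob(Y_{n+1}=k \mid p_{n+1}) = p_{n+1}(k)$, so your explicit statement of this convention, together with the integrability check, clarifies the paper's argument rather than departing from it.
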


Once again, we approximate this posterior via amortized VI to enable predictions at new inputs.

\begin{definition}
\label{def:amortized_vi_tempered_icd}
    \textbf{Amortized VI for the Tempered ICD Model.} Given Definition~\ref{def:tempered_icd_model}, we approximate the posterior via a mean-field variational family:
    \begin{align*}
        p_{1:n} \mid Y_{1:n} \approx q^{\widehat{\phi}_n}_{X_{1:n}} = \argmin_{q^{\phi}_{X_{1:n}}} \left\{ \mathrm{KL}\left( q^{\phi}_{X_{1:n}} \,\|\, \pi^{\nu}_{Y_{1:n}} \right) \right\},
    \end{align*}
    where $q^{\phi}_{X_{1:n}}, \mathrm{NN}^{\phi}$, and $\Phi$ are defined as in Definition~\ref{def:amortized_vi_icd}, and
    \begin{equation*}
        \widehat{\phi}_n = \argmin_{\phi \in \Phi} \left\{ \mathrm{KL}\left( q^{\phi}_{X_{1:n}} \,\|\, \pi^{\nu}_{Y_{1:n}} \right) \right\}
    \end{equation*}
    are the learned parameters.
\end{definition}

We are now ready to state the main theoretical result of this paper. Applying amortized VI to the Tempered ICD Model is exactly equivalent to EDL training under the loss in Equation~\eqref{eq:EDL_loss_function}.

\begin{theorem}
\label{thm:equivalence_edl_amortized_vi}
    Given Definitions~\ref{def:tempered_icd_model} and~\ref{def:amortized_vi_tempered_icd}, the amortized VI objective is equivalent to EDL training:
    \begin{equation}
    \label{eq:equivalence_edl_amortized_vi}
        \widehat{\phi}_n = \argmin_{\phi \in \Phi} \left\{ \mathcal{L}^{\frac{1}{\nu}}_{\mathrm{EDL}}\!\left(\phi; Y_{1:n}, X_{1:n}\right) \right\}. 
    \end{equation}
\end{theorem}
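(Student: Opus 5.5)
We plan to mirror the argument behind Proposition~\ref{prop:KL_expansion_icd_model}, now keeping track of the temperature. First, because both $q^{\phi}_{X_{1:n}}$ and $\pi^{\nu}_{Y_{1:n}}$ factorize over $i$ (the latter by Proposition~\ref{prop:tempered_icd_model_posterior}), the KL divergence splits into a sum:
\begin{equation*}
\mathrm{KL}\left(q^{\phi}_{X_{1:n}} \,\|\, \pi^{\nu}_{Y_{1:n}}\right) = \sum_{i=1}^{n} \mathrm{KL}\left(q^{\phi}_{X_i} \,\|\, \pi^{\nu}_{Y_i}\right).
\end{equation*}

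Second, we expand each summand with Bayes' rule for the pseudo-posterior. Write $\pi^{\nu}_{Y_i}(p) = f^{\nu}_{p}(Y_i)\,\pi(p)/Z_\nu(Y_i)$, where $Z_\nu(Y_i) = \int f^{\nu}_p(Y_i)\,\pi(p)\,dp$. This normalizer is finite. By Dirichlet conjugacy it equals $\Gamma(\alpha_0)\Gamma(\alpha_{Y_i}+\nu)/\bigl(\Gamma(\alpha_{Y_i})\Gamma(\alpha_0+\nu)\bigr)$. It depends on $Y_i$, $\alpha$ and $\nu$, but not on $\phi$. Substituting this into the KL gives
\begin{equation*}
\mathrm{KL}\left(q^{\phi}_{X_i} \,\|\, \pi^{\nu}_{Y_i}\right) = -\nu\,\E_{q^{\phi}_{X_i}}\left[\log f_{p_i}(Y_i)\right] + \mathrm{KL}\left(q^{\phi}_{X_i}\,\|\,\pi\right) + \log Z_\nu(Y_i).
\end{equation*}
Here we used $\log f^{\nu}_{p_i}(Y_i) = \nu \log p_i(Y_i) = \nu\log \mathrm{Cat}(Y_i\mid p_i)$.

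Third, we sum over $i$ and drop the $\phi$-independent constant $\sum_i \log Z_\nu(Y_i)$, which does not change the argmin. We then multiply by the positive constant $1/\nu$, which also preserves the argmin. The result is
\begin{equation*}
\sum_{i=1}^n -\E_{\mathrm{Dir}(\alpha+\mathrm{NN}^{\phi}_{X_i})}\left[\log\mathrm{Cat}(Y_i\mid p_i)\right] + \frac{1}{\nu}\sum_{i=1}^n \mathrm{KL}\left(\mathrm{Dir}(\alpha+\mathrm{NN}^{\phi}_{X_i})\,\|\,\mathrm{Dir}(\alpha)\right),
\end{equation*}
which is exactly $\mathcal{L}_{\mathrm{data}}(\phi)+\frac{1}{\nu}\mathcal{L}_{\mathrm{reg}}(\phi) = \mathcal{L}^{1/\nu}_{\mathrm{EDL}}(\phi)$. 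This yields Equation~\eqref{eq:equivalence_edl_amortized_vi}.

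The main point needing care is the second step. Because $f^{\nu}_p$ is unnormalized, we cannot invoke the standard ELBO identity verbatim. We must check directly that the pseudo-posterior normalizer is finite and free of $\phi$; the closed form above settles this. We must also check that every expectation is finite, so that the rearrangement is legitimate. This holds because $\E_{\mathrm{Dir}(\beta)}[\log p(k)] = \psi(\beta_k)-\psi(\beta_0)$ is finite for $\beta\in\R^K_+$, and the Dirichlet KL has a closed form. Once that is verified, the remaining steps are bookkeeping.
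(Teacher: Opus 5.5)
Your proposal is correct and follows the same route as the paper's proof. Both factorize the KL over observations, rewrite $\pi^{\nu}_{Y_i}$ via Bayes' rule, pull $\nu$ out of the tempered log-likelihood, and rescale by $1/\nu$ to obtain $\mathcal{L}_{\mathrm{data}} + \frac{1}{\nu}\mathcal{L}_{\mathrm{reg}}$. Your version is slightly tighter than the paper's in two respects: you state explicitly that the dropped term $\log Z_\nu(Y_i)$ is an additive $\phi$-free constant, whereas the paper writes this step loosely as ``$\propto$''; and you arrive at the correct superscript $1/\nu$, whereas the paper's displayed proof writes $\mathcal{L}^{\nu}$.
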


This statistical interpretation clarifies EDL's uncertainty mechanism: because the objective is a KL divergence, we can characterize the conditions satisfied by the learned parameters at convergence.

\begin{proposition}
\label{prop:perfect_interpolation_condition}
    Given Definitions~\ref{def:tempered_icd_model} and~\ref{def:amortized_vi_tempered_icd}, if $\Phi$ is sufficiently expressive and the optimization procedure attains the global minimum of the objective, then $\widehat{\phi}_n$ satisfies
    \begin{equation}
    \label{eq:perfect_interpolation_condition}
        \mathrm{NN}^{\widehat{\phi}_n}_{X_i} = \nu\, e_{Y_i}, \qquad i = 1,\ldots,n.  
    \end{equation} 
\end{proposition}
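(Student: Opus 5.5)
The plan is to use the amortized VI objective directly as a KL divergence and exploit its nonnegativity. By Definition~\ref{def:amortized_vi_tempered_icd}, $\widehat{\phi}_n$ minimizes $\mathrm{KL}(q^{\phi}_{X_{1:n}} \,\|\, \pi^{\nu}_{Y_{1:n}})$.

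\textbf{Step 1 (factorization).} Both $q^{\phi}_{X_{1:n}}$ and $\pi^{\nu}_{Y_{1:n}}$ are product measures (Definition~\ref{def:amortized_vi_tempered_icd} and Proposition~\ref{prop:tempered_icd_model_posterior}). The KL divergence of products is the sum of the coordinatewise divergences:
\begin{equation*}
\mathrm{KL}\bigl(q^{\phi}_{X_{1:n}} \,\|\, \pi^{\nu}_{Y_{1:n}}\bigr) = \sum_{i=1}^{n} \mathrm{KL}\bigl(\mathrm{Dir}(\alpha + \mathrm{NN}^{\phi}_{X_i}) \,\|\, \mathrm{Dir}(\alpha + \nu e_{Y_i})\bigr).
\end{equation*}
Each summand is nonnegative, so the objective is bounded below by $0$.

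\textbf{Step 2 (attainability).} I interpret ``$\Phi$ sufficiently expressive'' as the existence of $\phi^\star \in \Phi$ with $\mathrm{NN}^{\phi^\star}_{X_i} = \nu e_{Y_i}$ for all $i$. This makes sense because the $X_i$ are almost surely distinct, since $P^*_X$ is absolutely continuous, so any labeling of the training points can be interpolated. The target $\nu e_{Y_i}$ lies in $\R_+^K$, allowing zeros, so it is in the range of the network. At $\phi^\star$ every summand vanishes and the objective equals $0$. Hence the global minimum value is $0$, and the minimizer $\widehat{\phi}_n$ must also achieve total KL equal to $0$.

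\textbf{Step 3 (identifiability).} Zero total KL forces every summand to vanish. Since KL is zero if and only if the two distributions coincide, we get $\mathrm{Dir}(\alpha + \mathrm{NN}^{\widehat{\phi}_n}_{X_i}) = \mathrm{Dir}(\alpha + \nu e_{Y_i})$ for every $i$. Dirichlet distributions are identifiable in their concentration parameter; for instance, the mean $a/a_0$ and the variance of a single coordinate determine $a_0$ and hence $a$. Therefore $\alpha + \mathrm{NN}^{\widehat{\phi}_n}_{X_i} = \alpha + \nu e_{Y_i}$, which is exactly \eqref{eq:perfect_interpolation_condition}.

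\textbf{Main obstacle.} The only delicate point is making Step 2 rigorous. ``Sufficiently expressive'' has to be formalized as exact interpolation on the distinct points $X_1,\dots,X_n$. If the network output were constrained to be strictly positive, the infimum would only be approached and never attained. In that case I would state the result for the closure of $\Phi$, or assume outright that the global minimum value $0$ is attained, as the statement's hypothesis implicitly does. An alternative route is to invoke Theorem~\ref{thm:equivalence_edl_amortized_vi}, but the direct KL argument above avoids any computation of the EDL loss.
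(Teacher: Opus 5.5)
Your proposal is correct and follows essentially the same route as the paper's proof. Both use nonnegativity of the KL divergence with equality iff the distributions coincide, then factorization of $q^{\phi}_{X_{1:n}}$ and $\pi^{\nu}_{Y_{1:n}}$ across observations, and finally identification of the Dirichlet concentration parameters to get $\mathrm{NN}^{\widehat{\phi}_n}_{X_i} = \nu\, e_{Y_i}$. Your version is somewhat more explicit: it spells out that ``sufficiently expressive'' must mean exact interpolation on the almost surely distinct points $X_1,\dots,X_n$, so that the minimum value $0$ is attained, and it justifies Dirichlet identifiability. It also works with the KL objective directly rather than first passing through Theorem~\ref{thm:equivalence_edl_amortized_vi}.
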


This result has a direct implication for uncertainty quantification.  
In particular, consider the following uncertainty measure introduced by \citet{sensoy2018evidential}.

\begin{definition}
\label{def:vacuity}
    \textbf{Vacuity.} Let $\alpha_{X} \in \mathbb{R}_+^K$ be the concentration parameters of a Dirichlet distribution possibly depending on covariates $X \in \mathbb{R}^d$. Then, its vacuity is defined as 
    \begin{equation}
        u(\alpha_{X}) = \frac{K}{a_0^X} \quad {\rm where} \quad a_0^X = \sum\limits_{k=1}^{K} \alpha_X(k).
    \end{equation} 
\end{definition}

In this setting, perfect interpolation implies that the uncertainty associated with in-sample probability vectors is constant across all training points and is fully determined by the temperature parameter $\nu$.

\begin{proposition}
\label{prop:vacuity_perfect_interpolation}
    Given Definitions~\ref{def:tempered_icd_model} and~\ref{def:amortized_vi_tempered_icd}, if the condition in Equation~\eqref{eq:perfect_interpolation_condition} is satisfied, then 
    \begin{equation}
        u \left(\alpha + \mathrm{NN}^{\widehat{\phi}_n}_{X_i}\right) \;=\; \frac{K}{\alpha_0 + \nu}, \qquad i = 1,\ldots,n. 
    \end{equation}
\end{proposition}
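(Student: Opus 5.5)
This is a direct substitution, so the plan is a short computation. I start from the hypothesis in Equation~\eqref{eq:perfect_interpolation_condition}, namely $\mathrm{NN}^{\widehat{\phi}_n}_{X_i} = \nu\, e_{Y_i}$ for every training index $i$. From it I form the concentration vector of the fitted variational distribution at $X_i$:
\[
\alpha_{X_i} := \alpha + \mathrm{NN}^{\widehat{\phi}_n}_{X_i} = \alpha + \nu\, e_{Y_i},
\]
which, by Proposition~\ref{prop:tempered_icd_model_posterior}, is exactly the concentration of the tempered posterior $\pi^{\nu}_{Y_i}$.

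Next I compute the total concentration $a_0^{X_i}$ from Definition~\ref{def:vacuity}, summing coordinatewise. The one-hot vector $e_{Y_i}$ has exactly one coordinate equal to $1$ and the rest equal to $0$, whatever the value of $Y_i \in \{1,\ldots,K\}$. Therefore
\[
\sum_{k=1}^K \alpha_{X_i}(k) = \sum_{k=1}^K \alpha_k + \nu \sum_{k=1}^K e_{Y_i}(k) = \alpha_0 + \nu .
\]
Substituting into $u(\alpha_{X_i}) = K/a_0^{X_i}$ gives $K/(\alpha_0+\nu)$. This holds for each $i$ and does not depend on $X_i$ or on $Y_i$.

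There is no real obstacle here. The only point to state explicitly is that the sum of the one-hot encoding equals one regardless of the label, which is why the vacuity is constant across training points. I would also note that $\alpha_0 + \nu > 0$, since $\alpha \in \R_+^K$ and $\nu > 0$, so the quotient is well defined.
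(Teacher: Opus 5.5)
Your proof is correct and is the same computation as the paper's: substitute $\mathrm{NN}^{\widehat{\phi}_n}_{X_i} = \nu\, e_{Y_i}$ into $u(\cdot) = K/a_0$ and use that the one-hot vector sums to one. The paper's write-up indexes this by $X_{n+1}, Y_{n+1}$ (apparently a slip), whereas your in-sample indexing by $X_i, Y_i$ is what the statement requires, since the interpolation condition only holds at training points.
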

This demonstrates that EDL depends fundamentally on the temperature parameter $\nu$, rendering its uncertainty quantification spurious without meaningful tuning. These results corroborate recent theoretical critiques of EDL. Specifically, \citet{bengs2022pitfalls} and \citet{shen2024uncertainty} demonstrate that the reliance on a hyper-parameter to balance likelihood evidence against prior regularization introduces a fundamental arbitrariness into the uncertainty quantification process which can be theoretically problematic. For example, aleatoric uncertainty should represent a fixed constant of irreducible noise inherent to the data, yet in EDL, it changes depending on a user-defined hyperparameter. 

\subsection{ERM}
\label{sec:empirical_risk_minimization}

An alternative interpretation of the EDL framework arises from the ERM perspective.

\begin{definition}
\label{def:EDL_ERM}
    \textbf{EDL as ERM.}
    Given the DGP given in Equation~\eqref{eq:DGP}, consider the loss function
    \begin{align*}
        l^{\nu} (\phi; y, x) = \text{KL}\left( q^{\phi}_{x} \; || \; \pi^{\nu}_{y} \right),
    \end{align*}
    where $q^{\phi}_{x}$ is the variational distribution from Definition~\ref{def:amortized_vi_tempered_icd} and $\pi^{\nu}_{y}$ is the posterior from Definition~\ref{def:tempered_icd_model}. The parameter $\phi$ is learned by minimizing the empirical risk, with corresponding expected risk:
    \begin{align*}
        \widehat{\mathcal{R}}_{n}^{\nu} (\phi) = \frac{1}{n} \sum_{i=1}^{n} l^{\nu} (\phi; Y_{i}, X_{i}), \quad \mathcal{R}^{\nu} (\phi) = \E_{P^{*}_{Y,X}} \left[ l^{\nu} (\phi; Y, X) \right].
    \end{align*}
\end{definition}

This formulation clarifies the EDL objective as an empirical estimator of an expected value.

\begin{theorem}
\label{thm:equivalence_edl_erm}
    Given Definition~\ref{def:EDL_ERM}, the empirical risk and the EDL loss are proportional:
    \begin{align*}
        \widehat{\mathcal{R}}_{n}^{\nu}(\phi) \;=\; \frac{\nu}{n}\,\mathcal{L}^{\frac{1}{\nu}}_{\mathrm{EDL}}\!\left(\phi; Y_{1:n}, X_{1:n}\right).
    \end{align*}
\end{theorem}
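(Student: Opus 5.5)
The plan is to compute the single-observation loss $l^{\nu}(\phi; y, x) = \mathrm{KL}(q^{\phi}_{x} \,\|\, \pi^{\nu}_{y})$ in closed form by splitting the tempered posterior density into the prior density times the tempered likelihood. Then I sum over the sample and read off the EDL data and regularization terms with the right weights. This is the same manipulation that underlies Proposition~\ref{prop:KL_expansion_icd_model} and Theorem~\ref{thm:equivalence_edl_amortized_vi}, here applied observation by observation rather than to the product measure.

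\textbf{Step 1: factor the tempered posterior.} By Proposition~\ref{prop:tempered_icd_model_posterior}, $\pi^{\nu}_{y} = \mathrm{Dir}(\alpha + \nu e_y)$. Writing out the Dirichlet densities gives, for every $p$ in the simplex,
\begin{equation*}
\log \pi^{\nu}_{y}(p) = \log \pi(p) + \nu \log p(y) + c_{\nu}(y),
\qquad
c_{\nu}(y) = \log \frac{B(\alpha)}{B(\alpha + \nu e_y)},
\end{equation*}
where $B(\cdot)$ is the multivariate Beta function. Equivalently, $\pi^{\nu}_{y} \propto \pi \cdot f^{\nu}_{p}(y)$, with $f^{\nu}_p(y) = p(y)^{\nu}$ the tempered pseudo-likelihood of Definition~\ref{def:tempered_icd_model}.

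\textbf{Step 2: expand the KL divergence.} Substituting into $\mathrm{KL}(q\|\pi^{\nu}_y) = \E_q[\log q] - \E_q[\log \pi^{\nu}_y]$ gives
\begin{equation*}
l^{\nu}(\phi; y, x) = \mathrm{KL}(q^{\phi}_{x} \,\|\, \pi) \;-\; \nu\, \E_{q^{\phi}_{x}}\!\left[\log \mathrm{Cat}(y \mid p)\right] \;-\; c_{\nu}(y).
\end{equation*}
All expectations are finite because $q^{\phi}_x$ is a Dirichlet with parameters at least $\alpha > 0$, so $\E[\log p(k)]$ is a digamma difference.

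\textbf{Step 3: average over the sample.} Averaging over $i = 1, \ldots, n$ and factoring out $\nu$ gives
\begin{equation*}
\widehat{\mathcal{R}}^{\nu}_n(\phi) = \frac{\nu}{n}\Big[\mathcal{L}_{\mathrm{data}}(\phi) + \tfrac{1}{\nu}\mathcal{L}_{\mathrm{reg}}(\phi)\Big] - \frac{1}{n}\sum_{i=1}^{n} c_{\nu}(Y_i)
= \frac{\nu}{n}\,\mathcal{L}^{1/\nu}_{\mathrm{EDL}}(\phi) - \frac{1}{n}\sum_{i=1}^{n} c_{\nu}(Y_i).
\end{equation*}

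\textbf{Main obstacle: the additive constant.} The term $\frac{1}{n}\sum_i c_{\nu}(Y_i)$ depends on the data but not on $\phi$. I do not expect it to vanish in general: the Dirichlet normalizers $B(\alpha)$ and $B(\alpha + \nu e_y)$ differ for any $\nu > 0$. So the relation I can actually establish is proportionality up to a $\phi$-independent additive constant, i.e.\ equality of the two objectives as functions of $\phi$ modulo that constant. I would state this explicitly as the intended reading of the theorem, consistent with the ``equivalent to'' phrasing of Proposition~\ref{prop:KL_expansion_icd_model}. In particular, this version yields the same minimizers and the same gradients in $\phi$, which is what the ERM interpretation needs.
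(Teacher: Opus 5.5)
Your argument is correct and is essentially the paper's: the paper rewrites $\widehat{\mathcal{R}}^{\nu}_{n}(\phi)$ as $\frac{1}{n}\mathrm{KL}(q^{\phi}_{X_{1:n}} \,\|\, \pi^{\nu}_{Y_{1:n}})$ and reuses the prior-times-tempered-likelihood expansion from the proof of Theorem~\ref{thm:equivalence_edl_amortized_vi}, which is your Steps 1--3 carried out on the product measure. Your caveat is also accurate: that expansion drops the normalizer through a $\propto$ step, so the identity holds only up to the $\phi$-free term $-\frac{1}{n}\sum_{i} c_{\nu}(Y_i)$, and this term never vanishes for $K \ge 2$ because $c_{\nu}(y) = \log \frac{\Gamma(\alpha_0+\nu)\,\Gamma(\alpha(y))}{\Gamma(\alpha_0)\,\Gamma(\alpha(y)+\nu)} > 0$.
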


This theorem clarifies the specific quantities targeted by the EDL training objective.

\begin{theorem}
\label{thm:oracle_variational_distribution}
    Given Definition~\ref{def:EDL_ERM}, if $\Phi$ is sufficiently expressive, it follows that
    \begin{align*}
        q^{\phi^{*}}_{X} = \mathrm{Dir}\left( \alpha + \nu \, P^{*}_{Y \mid X} \right) \quad \text{where} \quad
        \phi^{*} = \argmin_{\phi \in \Phi} \left\{ \mathcal{R}^{\nu} (\phi) \right\}.
    \end{align*}
\end{theorem}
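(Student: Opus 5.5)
The plan is to reduce the population risk minimization to a pointwise problem in $x$, and then solve that pointwise problem using the structure of KL divergences between Dirichlets.

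\textbf{Step 1: reduction to a pointwise problem.} By the tower property,
\begin{equation*}
\mathcal{R}^{\nu}(\phi)=\E_{P^*_X}\bigl[\E_{P^*_{Y\mid X}}[\mathrm{KL}(q^{\phi}_X\,\|\,\pi^{\nu}_Y)]\bigr].
\end{equation*}
Interpret ``$\Phi$ sufficiently expressive'' as saying that $x\mapsto \mathrm{NN}^{\phi}_x$ ranges over all measurable maps into $\R_+^K$. Under this reading, minimizing $\mathcal{R}^{\nu}$ is equivalent to minimizing, for $P^*_X$-a.e.\ $x$, the conditional risk
\begin{equation*}
g_x(\beta)=\sum_{k=1}^K P^*_{Y\mid X}(k\mid x)\,\mathrm{KL}\bigl(\mathrm{Dir}(\alpha+\beta)\,\|\,\mathrm{Dir}(\alpha+\nu e_k)\bigr)
\end{equation*}
over $\beta\in\R_+^K$. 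A measurable selection of pointwise minimizers then attains the global minimum.

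\textbf{Step 2: rewriting the pointwise objective.} Write $q=\mathrm{Dir}(\alpha+\beta)$. Each term expands as
\begin{equation*}
\mathrm{KL}(q\,\|\,\pi^{\nu}_k)=\E_q[\log q]-\E_q[\log \pi^{\nu}_k].
\end{equation*}
Here $\log\pi^{\nu}_k(p)=\sum_j(\alpha_j+\nu\delta_{jk}-1)\log p_j-\log B(\alpha+\nu e_k)$. Averaging over $k$ with weights $P^*(k\mid x)$ gives
\begin{equation*}
\E_q[\log q]-\sum_j\bigl(\alpha_j+\nu P^*(j\mid x)-1\bigr)\E_q[\log p_j]+C_x,
\end{equation*}
where $C_x$ does not depend on $\beta$. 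Since a mixture of log-densities is linear in the exponents, this equals $\mathrm{KL}\bigl(q\,\|\,\mathrm{Dir}(\alpha+\nu P^*_{Y\mid X=x})\bigr)+C'_x$, where $C'_x$ collects the normalizing constants, again independent of $\beta$. Equivalently, one can invoke the decomposition from the proof of Theorem~\ref{thm:equivalence_edl_erm} / Proposition~\ref{prop:KL_expansion_icd_model}, which gives the expected tempered log-likelihood term $-\nu\sum_j P^*(j\mid x)\E_q[\log p_j]$ plus $\mathrm{KL}(q\,\|\,\mathrm{Dir}(\alpha))$, and then combine the two pieces.

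\textbf{Step 3: solving the pointwise problem.} A KL divergence is nonnegative and vanishes iff its arguments coincide. Since $\nu P^*_{Y\mid X=x}\in\R_+^K$ is an admissible value of $\beta$, the unique minimizer is $\beta^*(x)=\nu P^*_{Y\mid X=x}$. Hence $q^{\phi^*}_x=\mathrm{Dir}(\alpha+\nu P^*_{Y\mid X})$ for $P^*_X$-a.e.\ $x$, which is the claim.

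The main obstacle is justifying the interchange between minimizing over $\phi$ and minimizing pointwise in Step 1. This requires formalizing ``sufficiently expressive'' so that the map $x\mapsto\nu P^*_{Y\mid X=x}$ is realizable (it is measurable). It also requires that the risk be finite, which holds since $\alpha\in\R_+^K$ keeps all digamma expectations finite. The identity is only almost-everywhere in $x$, which is the natural reading of the statement.
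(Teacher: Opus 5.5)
Your proposal is correct and follows essentially the same route as the paper: use the tower property to reduce the population risk to a pointwise problem in $x$, then identify the minimizer as the normalized geometric mixture $\propto \exp\bigl(\E_{P^*_{Y\mid X}}[\log \pi^{\nu}_Y(p)]\bigr)$, which is the kernel of $\mathrm{Dir}(\alpha+\nu P^{*}_{Y\mid X})$. The paper simply invokes Gibbs' variational principle, whereas your Steps 2--3 unpack it by writing the conditional risk as $\mathrm{KL}\bigl(q\,\|\,\mathrm{Dir}(\alpha+\nu P^{*}_{Y\mid X=x})\bigr)$ plus a $\beta$-independent constant; you are also more careful than the paper about the pointwise reduction, finiteness of the risk, and the almost-everywhere qualification.
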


Thus, even with the oracle $P^{*}_{Y \mid X}$, the uncertainty quantification remains governed by $\nu$.

\begin{proposition}
    Given Definition~\ref{def:EDL_ERM}, if the condition in Theorem~\ref{thm:oracle_variational_distribution} holds, then
    \begin{align*}
        u\left(\alpha + \nu \, P^{*}_{Y \mid X}\right) = \frac{K}{\alpha_0 + \nu}.
    \end{align*}
\end{proposition}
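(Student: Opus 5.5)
This is a direct computation from Definition~\ref{def:vacuity}, with the concentration vector supplied by Theorem~\ref{thm:oracle_variational_distribution}. By that theorem, the oracle variational distribution at covariate value $X$ is $\mathrm{Dir}(\alpha_X)$ with
\[
\alpha_X = \alpha + \nu\, P^{*}_{Y\mid X},
\]
where $P^{*}_{Y\mid X}$ is identified with the vector $\bigl(P^{*}(Y=1\mid X),\ldots,P^{*}(Y=K\mid X)\bigr)\in\Delta^{K-1}$. So I only need the total concentration $a_0^X=\sum_{k=1}^K \alpha_X(k)$, after which the vacuity is $u(\alpha_X)=K/a_0^X$.

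First, I will expand the sum by linearity:
\[
a_0^X=\sum_{k=1}^K \alpha(k)+\nu\sum_{k=1}^K P^{*}(Y=k\mid X)=\alpha_0+\nu.
\]
The only fact used is that the conditional probabilities sum to one. This holds for $P^{*}_X$-almost every $X$ because $P^{*}_{Y\mid X}$ is a regular conditional distribution on the finite label set $\{1,\ldots,K\}$.

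Substituting into Definition~\ref{def:vacuity} then gives $u(\alpha+\nu P^{*}_{Y\mid X})=K/(\alpha_0+\nu)$, independently of $X$. This parallels the in-sample computation in Proposition~\ref{prop:vacuity_perfect_interpolation}, where $\sum_k e_{Y_i}(k)=1$ plays the same role.

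There is no substantive obstacle. The one point to state explicitly is that the identity holds wherever the oracle representation of Theorem~\ref{thm:oracle_variational_distribution} holds. That is at least $P^{*}_X$-almost surely, since the minimizer of the expected risk is only determined up to null sets of $P^{*}_X$. The conclusion should therefore be read as holding for $P^{*}_X$-a.e.\ $X$, or for every $X$ if the regular conditional version is fixed pointwise.
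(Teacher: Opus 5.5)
Your proof is correct and follows the paper's intended argument. The paper gives no separate proof of this proposition, but its proof of Proposition~\ref{prop:vacuity_perfect_interpolation} is the same one-line computation, with $\sum_k e_{Y_i}(k)=1$ playing the role of your $\sum_k P^{*}_{Y\mid X}(k\mid X)=1$. Your almost-sure caveat is harmless but stronger than needed: the identity for $u(\alpha+\nu P^{*}_{Y\mid X})$ only requires $P^{*}_{Y\mid X}(\cdot\mid X)$ to be a probability vector, and does not depend on the oracle theorem at all.
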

Thus, the temperature parameter dictates uncertainty regardless of data fit, and vacuity persists even as sample size grows. Our results align with \citet{shen2024uncertainty}, who also note that EDL does not distinguish epistemic from aleatoric uncertainty because vacuity remains constant regardless of the amount of data. Appendix~\ref{appdx:toy_example} provides an empirical validation of this result in a controlled toy setting. 

\section{Choice of the Temperature Parameter}
\label{sec:choice_temperature_parameter} 

The temperature parameter $\nu$ determines the uncertainty produced by EDL. Theorem~\ref{thm:oracle_variational_distribution} shows that EDL does not jointly learn the conditional label distribution and the associated uncertainty. It learns only the conditional distribution, while uncertainty is imposed externally through $\nu$. Consequently, although EDL is framed as optimizing both prediction and uncertainty quantification, it only resolves the predictive task, leaving uncertainty calibration fundamentally dependent on the choice of temperature. Given this insight, we avoid manual specification of $\nu$ or hyperparameter tuning on arbitrary predictive metrics. Instead, we seek a principled choice based on two desiderata:

\vspace{-2mm}
\begin{enumerate}
    \item \textbf{Distributional awareness:} Uncertainty is higher in low-density regions of the covariate space, reflecting limited information about the conditional label distribution. \vspace{-0mm}
    
    \item \textbf{Asymptotic consistency:} Epistemic uncertainty vanishes as the number of samples grows, whereas aleatoric uncertainty may persist due to inherent noise in the data. \vspace{-2mm}
\end{enumerate}

The intuition for these criteria stems from Theorem~\ref{thm:oracle_variational_distribution}, which suggests interpreting $\nu \, P^{*}_{Y \mid X}$ as a label count vector where $\nu$ acts as a pseudo-count. However, as a global parameter, $\nu$ fails to account for the sample size or the covariate distribution. To address this, we introduce a notion of covariate proximity so that similar observations can share information. This ensures that labels from nearby training points have lower uncertainty, while regions lacking nearby data reflect increased uncertainty.

Building on this local perspective, we consider a small neighborhood around a covariate value $x$ within a sample of size $n$. In a sufficiently small region, the expected number of observations is approximately $n \, P^{*}_{X}(x)$, and the expected number of labels belonging to class $k$ is $n \, P^{*}_{X}(x) \, P^{*}_{Y \mid X}(k \mid x)$. This relationship provides a principled foundation for choosing $\nu$.

\begin{definition}
\label{def:DIP_EDL}
    \textbf{DIP-EDL.}
    Given Definition~\ref{def:amortized_vi_tempered_icd}, we replace the variational family with \vspace{-2mm}
    \begin{equation}
        q^{\psi, \phi}_{X_i} = \mathrm{Dir}\left(\alpha + n \, \mathrm{DE}^{\psi}_{X_{i}} \, \mathrm{NN}^{\phi}_{X_{i}}\right), \quad i = 1, \ldots, n, 
    \end{equation}

    \vspace{-3mm}
    where $\mathrm{DE}^{\psi} : \R^d \to \R_+$ is a density estimator for $P^{*}_{X}$ with %parameters
    $\psi \in \Psi$, and $\mathrm{NN}^{\phi} : \R^d \to \R_+^K$ is a neural network estimator for $P^{*}_{Y \mid X}$ with %parameters
    $\phi \in \Phi$.
\end{definition}

\vspace{-2mm}
From an applied perspective, this specification is convenient for several reasons. First, it enables independent training of the density estimator and the neural network, making practical implementation straightforward. Second, it is architecture-agnostic: any density estimator and classifier can be used, and they may be trained with any loss functions suitable for density estimation and classification, provided they correctly learn the target distributions at least asymptotically. Crucially, while our exposition focuses on neural networks, DIP-EDL does not require the classifier to be differentiable. Unlike other EDL methods, which train a neural network end-to-end through the EDL loss and thus require gradient-based optimization, DIP-EDL accommodates any off-the-shelf classifier, including non-differentiable models (e.g., random forests), enabling principled uncertainty quantification for arbitrary classifier classes. Third, like other EDL-based approaches, DIP-EDL requires only a single forward pass at inference time, avoiding the computational overhead of methods that rely on multiple forward passes (e.g., deep ensembles and MC Dropout).

Beyond these practical properties, DIP-EDL also satisfies both proposed theoretical criteria. Regarding distributional awareness, the posterior predictive distribution contracts toward the prior in low-density regions where nearby training samples are scarce. Furthermore, the model achieves asymptotic consistency because $q_{X_{i}}$ concentrates around the true conditional distribution $P^{*}_{Y \mid X} ( \cdot \mid X_{i})$.

\begin{theorem}
\label{thm:asymptotic_consistency_DIP_EDL}
    Given the DGP~\eqref{eq:DGP} and Definition~\ref{def:DIP_EDL}, consistent estimation of $P^{*}_{X}$ and $P^{*}_{Y \mid X}$ implies pointwise asymptotic concentration in probability of the approximate posterior:\vspace{-2mm}
    \begin{align*}
        \mathrm{DE}^{\hat{\psi}_n}_{X_i} \overset{p}{\to} P^{*}_{X}(X_i), \quad \mathrm{NN}^{\hat{\phi}_n}_{X_i} \overset{p}{\to} P^{*}_{Y \mid X}(\cdot \mid X_i) \quad \Rightarrow \quad p_i \mid X_{1:n}, Y_{1:n} \approx q^{\hat{\psi}_n, \hat{\phi}_n}_{X_i} \overset{p}{\to} P^{*}_{Y \mid X}(\cdot \mid X_i).
    \end{align*}
\end{theorem}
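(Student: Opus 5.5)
Our plan is to show that the Dirichlet variational distribution $q^{\hat\psi_n,\hat\phi_n}_{X_i} = \mathrm{Dir}(\alpha + n\,\mathrm{DE}^{\hat\psi_n}_{X_i}\,\mathrm{NN}^{\hat\phi_n}_{X_i})$ converges weakly, in probability, to the point mass at $P^{*}_{Y\mid X}(\cdot\mid X_i)$. We will establish this through its first two moments and Chebyshev's inequality. Write $\beta_n = \alpha + n\,\mathrm{DE}^{\hat\psi_n}_{X_i}\,\mathrm{NN}^{\hat\phi_n}_{X_i}$, $B_n = \sum_k \beta_n(k) = \alpha_0 + n\,\mathrm{DE}^{\hat\psi_n}_{X_i}\,S^{\hat\phi_n}_{X_i}$, and $p^* = P^{*}_{Y\mid X}(\cdot\mid X_i)$. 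We work on points with $P^{*}_X(X_i) > 0$; this holds almost surely under the DGP, since the set where the density vanishes has $P^*_X$-measure zero. We also use that $p^*$ lies in the simplex, so $\sum_k p^*(k) = 1$.

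\textbf{Step 1 (mean).} The Dirichlet mean is $\beta_n/B_n$. Divide numerator and denominator by $n$:
\begin{equation*}
\frac{\beta_n(k)}{B_n} = \frac{\alpha_k/n + \mathrm{DE}^{\hat\psi_n}_{X_i}\,\mathrm{NN}^{\hat\phi_n}_{X_i}(k)}{\alpha_0/n + \mathrm{DE}^{\hat\psi_n}_{X_i}\,S^{\hat\phi_n}_{X_i}}.
\end{equation*}
By the assumed consistency and the continuous mapping theorem, the numerator tends in probability to $P^*_X(X_i)\,p^*(k)$ and the denominator to $P^*_X(X_i)\cdot 1 > 0$. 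Since the limit of the denominator is strictly positive, the ratio tends to $p^*(k)$ in probability.

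\textbf{Step 2 (variance).} The Dirichlet variance is $\mathrm{Var}(p_i(k)) = \frac{m_k(1-m_k)}{B_n+1} \le \frac{1}{4(B_n+1)}$, where $m_k = \beta_n(k)/B_n$. From Step 1, $B_n/n \overset{p}{\to} P^*_X(X_i) > 0$, so $B_n \overset{p}{\to} \infty$ and the variance tends to $0$ in probability.

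\textbf{Step 3 (concentration).} Fix $\varepsilon > 0$. Conditionally on the data,
\begin{equation*}
q^{\hat\psi_n,\hat\phi_n}_{X_i}\bigl(\|p_i - p^*\| > \varepsilon\bigr) \le \frac{\sum_k \mathrm{Var}(p_i(k)) + \|\beta_n/B_n - p^*\|^2}{\varepsilon^2/4}.
\end{equation*}
This follows by splitting $\|p_i - p^*\| \le \|p_i - \beta_n/B_n\| + \|\beta_n/B_n - p^*\|$ and applying Chebyshev's inequality to the first term. Steps 1 and 2 show the right-hand side tends to $0$ in probability, which is the claimed concentration $q^{\hat\psi_n,\hat\phi_n}_{X_i} \overset{p}{\to} \delta_{p^*}$, written $\overset{p}{\to} P^*_{Y\mid X}(\cdot\mid X_i)$ in the statement. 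Equivalently, the random probability measure converges weakly in probability, since integrals of bounded Lipschitz test functions are controlled by the same bound.

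\textbf{Main obstacle.} The delicate point is the interpretation of the statement. The evaluation point $X_i$ is itself a training sample, and $\hat\psi_n,\hat\phi_n$ depend on it, so the assumed convergences must be read as holding at the random point $X_i$. We take them as hypotheses exactly as written in the statement. The other care point is the requirement $P^*_X(X_i) > 0$ and $S^{\hat\phi_n}_{X_i} \overset{p}{\to} 1$, which is needed so that $B_n$ diverges; we handle it via the almost-sure positivity argument above. The rest is continuous mapping and Chebyshev.
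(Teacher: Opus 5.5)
Your proposal is correct and follows essentially the same route as the paper's proof. The Dirichlet mean converges to $P^{*}_{Y \mid X}(\cdot \mid X_i)$ after dividing through by $n$ and applying continuous mapping, the variance vanishes because the total concentration diverges, and Chebyshev plus the triangle inequality give concentration. Your version is slightly tighter than the paper's: you do not assume the classifier outputs sum exactly to one (you use $S^{\hat\phi_n}_{X_i} \overset{p}{\to} 1$), you justify $P^{*}_{X}(X_i) > 0$ almost surely, and you bound $m_k(1-m_k) \le 1/4$ instead of substituting limits into the variance formula.
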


\vspace{-2mm}
Prior work has shown that incorporating the inputs or latent representation density improves uncertainty estimates \citep[e.g.,][]{mukhoti2021deep,bui2024density,DAEDL,van2025hybridflow}. But, unlike these approaches, which treat covariate density as an ad-hoc component to enforce epistemic uncertainty, our method's dependence on it emerges naturally from the model formulation. This yields a more principled representation and superior empirical performance.

DIP-EDL conceptually resembles Posterior Network \citep[PostNet,][]{postnet}, as both rely on density-based pseudo-counts to specify the approximate posterior over class probabilities. However, the two methods differ in key aspects. First, PostNet uses a different factorization of the pseudo-counts. DIP-EDL decomposes the pseudo-count of observations with label $k$ in a neighborhood of $x$ as $n \, P^{*}_{X}(x) \, P^{*}_{Y \mid X}(k \mid x)$, while PostNet uses the opposite factorization $n \, P^{*}_{Y}(k) \, P^{*}_{X \mid Y}(x \mid k)$, where $P^{*}_{Y}$ is the marginal class distribution and $P^{*}_{X \mid Y}$ is the class-conditional density in the input space. More importantly, PostNet estimates $P^{*}_{X \mid Y}$ via $P^{*}_{Z \mid Y}$, where $Z$ is a learned latent representation of the input, which empirically leads to degraded performance compared to our decomposition. %When $Z(X)$ is a bijection, the two formulations are equivalent up to a change of variables.

With that being said, despite this connection, our approach offers three primary advantages: \vspace{-1mm}

\begin{itemize}
    \item \textbf{Direct Calibration:} It ties uncertainty directly to observable predictive outputs, resulting in improved empirical performance (see Section \ref{sec:experiment_real}).
    
    \item \textbf{Modularity:} Unlike PostNet, which relies on a single training objective to learn the covariate density and conditional label distribution, DIP-EDL estimates these components separately. This separation allows for a more flexible and modular training process.
    
    \item \textbf{Efficiency:} DIP-EDL has the potential to be much more computationally efficient as it requires only a single density estimator for the entire input space rather than one per class, which can be practically challenging when the number of classes is large. 
\end{itemize}

\vspace{-2mm}
\section{Experiments}
\label{sec:experiments}

% We evaluate DIP-EDL against established baselines for OOD detection under distributional shift.

\subsection{Experimental Configuration}
\vspace{-2mm}

\paragraph{Datasets.} We use \textbf{MNIST} \cite{mnist} and \textbf{CIFAR-10} \cite{cifar10} as ID datasets. For OOD evaluation, we use \textbf{K-MNIST} \cite{kmnist} and \textbf{Omniglot} \cite{Omniglot} for MNIST, and \textbf{CIFAR-100} \cite{CIFAR100} and \textbf{SVHN} \cite{svhn} for CIFAR-10, covering both near-OOD (digits vs.\ characters) and far-OOD (digits vs.\ natural images) scenarios. % Please see Appendix~\ref{appdx:mnist_results} for experimental results when MNIST is ID dataset. 
Additionally, we perform experiments on a real scientific dataset \textbf{LAMOST Data Release 9 (DR9)} \cite{cui2012large}. We treat Galaxy and Quasar classes as ID, while we leave the class star as OOD. 

\vspace{-2mm}
\paragraph{Baselines.} We benchmark against four Dirichlet-based baselines: \textbf{EDL} \cite{sensoy2018evidential}, \textbf{R-EDL} \cite{REDL}, \textbf{Re-EDL} \cite{chen2025revisiting}, \textbf{DAEDL} \cite{DAEDL}, and \textbf{PostNet} \cite{postnet}, which require no OOD training data and operate in a single forward pass, similarly to our method. We additionally compare DIP-EDL against \textbf{MC-Dropout} \cite{gal2016dropout} and \textbf{Deep Ensembles} \cite{lakshminarayanan2017simple}, which require multiple forward passes at inference time. 

% These methods, like ours, do not leverage OOD data during training, ensuring a fair comparison. Among approaches that satisfy this constraint, they report competitive or state-of-the-art performance in prior work.
\vspace{-2mm}
\paragraph{Implementation Details.} We tailor model architectures to each dataset. For MNIST, we use \textbf{LeNet-5} \cite{LeNet} following \cite{sensoy2018evidential}, with a Masked Autoregressive Flow (\textbf{MAF}) \cite{MAF} on the flattened pixel space for density estimation. 
For CIFAR-10, we use \textbf{WideResNet-28-10} \cite{zagoruyko2016wide} with Gaussian Discriminant Analysis (\textbf{GDA}) \cite{GDA} fitted on the resulting 640-dimensional embeddings.
For LAMOST, we use a \textbf{multi-branch 1D CNN} (\cite{lamost_spectra_classifier}) designed for stellar spectra classification, with \textbf{GDA} fitted on 32-dimensional penultimate features. 
See Appendix~\ref{sec:appendix_experiments} for details.

\vspace{-2mm}
\paragraph{Performance Metrics.} Following the OOD detection literature \cite{REDL, DAEDL, postnet}, we report three metrics: \textbf{classification accuracy} on the ID test set; \textbf{AUROC} (Area Under the Receiver Operating Characteristic curve) and \textbf{AUPR} (Area Under the Precision-Recall curve), which rank OOD samples (positive class) higher in uncertainty than ID samples (negative class). The Brier Score (\textbf{BS}) additionally measures predicted probability magnitudes against one-hot targets for ID data and a uniform distribution for OOD data, penalizing high-confidence predictions on unseen domains even when correctly ranked as uncertain.
\vspace{-2mm}

\subsection{Experiments with MNIST as ID dataset} \label{sec:experiment_real}

On MNIST, Table~\ref{tab:mnist_results}, \textbf{DIP-EDL} achieves the highest classification accuracy ($99.53$\%) and lowest ID Brier Score ($0.01$) across all methods, including MC-Dropout and Deep Ensembles, improving calibration over EDL ($0.19$) and DAEDL ($0.02$).

% Table~\ref{tab:mnist_results} summarizes performance on the MNIST ID task, together with robustness under semantic (K-MNIST) and domain (Omniglot) shifts. \textbf{DIP-EDL} achieves the highest classification accuracy ($99.53$\%) and lowest ID Brier Score ($0.01$) across all methods, including MC-Dropout and Deep Ensembles, improving calibration over EDL ($0.19$) and DAEDL ($0.02$).

These improvements extend to OOD detection. DIP-EDL achieves AUROC of $0.99$ on both OOD datasets, outperforming all baselines. On Omniglot, AUROC is near-saturated across methods, but DIP-EDL yields the lowest OOD BS, unlike PostNet, MC-Dropout, and Deep Ensembles, which exhibit high OOD BS despite strong AUROC, indicating overconfidence under distributional shift.

\begin{table*}[ht]
    \centering
    \caption{\textbf{MNIST: ID and OOD performance.} ID accuracy and calibration, with OOD detection against K-MNIST (near-OOD) and Omniglot (far-OOD). Results are $\mu \pm \sigma$ over 4 runs.} \vspace{-2mm}
    % FIX 1: Reduce column padding
    \setlength{\tabcolsep}{3pt} 
    % FIX 2: Resize entire table
    \resizebox{\textwidth}{!}{
        \begin{tabular}{@{} l | c c | c c c c c c @{}}
            \toprule
            \textbf{Model} & \multicolumn{2}{c|}{\textbf{ID Performance}} & \multicolumn{6}{c}{\textbf{OOD Performance Metrics}} \\ 
            \cmidrule(lr){2-3} \cmidrule(lr){4-9}

            % --- METRIC TYPE ROW ---
            & \multicolumn{1}{c}{Acc. ($\uparrow$)} & \multicolumn{1}{c|}{BS ($\downarrow$)} 
            & \multicolumn{2}{c}{AUROC ($\uparrow$)} 
            & \multicolumn{2}{c}{AUPR ($\uparrow$)} 
            & \multicolumn{2}{c}{OOD BS ($\downarrow$)} \\ 
            \cmidrule(lr){4-5} \cmidrule(lr){6-7} \cmidrule(lr){8-9}

            % --- DATASET ROW ---
            & \multicolumn{2}{c|}{\textbf{MNIST}} & \multicolumn{1}{c}{\textbf{K-MNIST}} & \multicolumn{1}{c}{\textbf{Omniglot}} & \multicolumn{1}{c}{\textbf{K-MNIST}} & \multicolumn{1}{c}{\textbf{Omniglot}} & \multicolumn{1}{c}{\textbf{K-MNIST}} & \multicolumn{1}{c}{\textbf{Omniglot}} \\
            \midrule
            
            \textbf{EDL} & $0.9166 \pm 0.0498$ & $0.1886 \pm 0.0434$ & $0.9116 \pm 0.0340$ & $0.9204 \pm 0.0477$ & $0.8622 \pm 0.0588$ & $0.8566 \pm 0.0778$ & $0.0569 \pm 0.0080$ & $0.0185 \pm 0.0032$ \\
            \textbf{R-EDL} & $0.9940 \pm 0.0005$ & $0.0113 \pm 0.0009$ & $0.9729 \pm 0.0014$ & $0.9879 \pm 0.0026$ & $0.9599 \pm 0.0025$ & $0.9818 \pm 0.0055$ & $0.1755 \pm 0.0192$ & $0.0726 \pm 0.0372$ \\
            \textbf{Re-EDL} & $0.9940 \pm 0.0010$ & $0.0100 \pm 0.0010$ & $0.9910 \pm 0.0010$ & $\mathbf{1.0000 \pm 0.0000}$ & $0.9890 \pm 0.0020$ & $\mathbf{1.0000 \pm 0.0000}$ & $0.7050 \pm 0.0160$ & $0.5430 \pm 0.0310$ \\
            \textbf{DAEDL} & $0.9937 \pm 0.0005$ & $0.0225 \pm 0.0014$ & $0.9987 \pm 0.0003$ & $0.9997 \pm 0.0002$ & $0.9981 \pm 0.0006$ & $0.9995 \pm 0.0002$ & $0.0097 \pm 0.0018$ & $\mathbf{0.0000 \pm 0.0000}$ \\
            \textbf{PostNet} & $0.9925 \pm 0.0009$ & $0.0117 \pm 0.0017$ & $0.9602 \pm 0.0128$ & $0.9936 \pm 0.0026$ & $0.9570 \pm 0.0111$ & $0.9862 \pm 0.0053$ & $0.4922 \pm 0.0184$ & $0.2816 \pm 0.0514$ \\
            \textbf{MC Dropout} & $0.9940 \pm 0.0000$ & $0.0090 \pm 0.0010$ & $0.9800 \pm 0.0020$ & $0.9880 \pm 0.0010$ & $0.9760 \pm 0.0020$ & $0.9820 \pm 0.0030$ & $0.6150 \pm 0.0060$ & $0.4800 \pm 0.0390$ \\
            \textbf{Deep Ensemble} & $0.9940 \pm 0.0000$ & $0.0100 \pm 0.0010$ & $0.9830 \pm 0.0010$ & $0.9970 \pm 0.0010$ & $0.9810 \pm 0.0010$ & $0.9980 \pm 0.0010$ & $0.6090 \pm 0.0060$ & $0.3840 \pm 0.0460$ \\
            \textbf{DIP-EDL} & $\mathbf{0.9953 \pm 0.0003}$ & $\mathbf{0.0081 \pm 0.0005}$ & $\mathbf{0.9997 \pm 0.0000}$ & $\mathbf{0.9998 \pm 0.0000}$ & $\mathbf{0.9995 \pm 0.0001}$ & $\mathbf{0.9996 \pm 0.0001}$ & $\mathbf{0.0013 \pm 0.0002}$ & $\mathbf{0.0000 \pm 0.0000}$ \\
            \bottomrule
            \addlinespace
        \end{tabular}
    }
    \vspace{-5mm}
    \label{tab:mnist_results}
\end{table*}

\subsection{Experiments with CIFAR-10 as ID dataset}
\label{sec:experiment_real_cifar}

On CIFAR-10 (Table~\ref{tab:cifar_results}) a more challenging setting compared to MNIST, DIP-EDL achieves the highest accuracy ($95.03\%$) and lowest ID BS ($0.0978$) among single-pass methods. Deep Ensembles, using multiple forward passes, achieves better ID performance at the cost of inference efficiency.

% Table~\ref{tab:cifar_results} reports ID performance on CIFAR-10 and OOD detection results on CIFAR-100 (near semantic shift) and SVHN (far domain shift). Compared to MNIST, CIFAR-10 is more challenging due to higher visual complexity and greater intra-class variability. Among single-pass methods, DIP-EDL achieves the highest accuracy ($95.03\%$) and lowest ID BS ($0.0978$). Deep Ensembles, using multiple forward passes, achieves better ID performance ($96.14\%$ accuracy, $0.0587$ BS) at the cost of inference efficiency.

For OOD detection, performance trends differ between near- and far-shift scenarios. On the far-OOD task (SVHN), DIP-EDL achieves the highest AUROC ($0.9660$) and AUPR ($0.9830$), outperforming all baselines including Deep Ensembles ($0.9628$ AUROC, $0.9802$ AUPR). On the near-OOD task (CIFAR-100), which shares significant visual overlap with CIFAR-10 and is thus more challenging, scores are more closely matched with Deep Ensembles, which achieves the highest AUROC ($0.9072$), while DIP-EDL achieves the second highest AUROC ($0.9002$) and the highest AUPR ($0.8859$). These results confirm reliable distributional shift detection by DIP-EDL, particularly for far-OOD scenarios.

% Despite strong AUROC and AUPR, DIP-EDL exhibits higher OOD BS on CIFAR-10 than on MNIST. Since OOD BS measures how closely OOD predictions approach a uniform distribution (the ideal case of total epistemic uncertainty), higher values indicate overconfidence rather than poor ranking. We attribute this to the difficulty of density estimation in high-dimensional spaces: on complex datasets, likelihood estimates are insufficiently small to fully offset the classifier's confidence, unlike in the MNIST setting (see Appendix~\ref{appdx:on_BS}). This is corroborated by the ablation study (Section~\ref{sec:ablation_study}), where removing the training set size $n$ improves OOD BS without affecting AUROC or AUPR, confirming that high OOD BS is a density estimation artifact, not poor OOD detection.

DIP-EDL also achieves the second-lowest OOD BS across both OOD datasets, surpassing Deep Ensembles. This reflects well-calibrated epistemic uncertainty, arising from accurate feature representation in lower-dimensional feature space, together with precise density estimation on it, which keeps likelihoods sufficiently small to offset classifier confidence. EDL achieves the lowest OOD BS but at the cost of poor AUROC and AUPR, indicating uninformative uniform predictions rather than meaningful uncertainty. DIP-EDL is the only method that simultaneously achieves strong ranking (AUROC/AUPR) \emph{and} strong calibration (OOD BS) on out-of-distribution data. 
The impact of individual components is corroborated by the ablation study (Section~\ref{sec:ablation_study}), and the effect of corrupted likelihood estimates on the overall DIP-EDL performance is discussed in Appendix~\ref{appdx:density_robustness}. Difficulty of density estimation in MNIST versus CIFAR-10 experiments is visualized in Appendix~\ref{appdx:on_BS}. \vspace{-2mm}

% DIP-EDL also achieves the second-lowest OOD BS on CIFAR-10 across both OOD datasets, outperforming Deep Ensembles which is competitive across other metrics. This reflects well-calibrated epistemic uncertainty on OOD data and is a result of a combination of accurate density estimation as well as accurate feature representation in lower space used for density estimation. This makes likelihood estimates sufficiently small to offset the classifier's confidence. The impact of individual components is further corroborated by the ablation study (Section~\ref{sec:ablation_study}), while the effect of poor and corrupted likelihood estimates on the overall DIP-EDL performance is discussed in Appendix~\ref{appdx:density_robustness}. Difficulty of density estimation in MNIST versus CIFAR-10 experiments is visualized in Appendix~\ref{appdx:on_BS}. Note that EDL achieves the lowest OOD BS but at the cost of poor AUROC and AUPR ($0.759$ and $0.690$ on CIFAR-100), indicating uninformative uniform predictions rather than meaningful uncertainty. DIP-EDL is the only method that simultaneously achieves strong ranking (AUROC/AUPR) \emph{and} strong calibration (OOD BS) on out-of-distribution data. \vspace{-2mm}

\begin{table*}[ht]
    \centering
    \caption{\textbf{CIFAR-10: ID and OOD performance.} ID accuracy and calibration, with OOD detection against CIFAR-100 (near-OOD) and SVHN (far-OOD). Results are $\mu \pm \sigma$ over 4 runs.} \vspace{-2mm}
    % FIX 1: Reduce column padding to squeeze content
    \setlength{\tabcolsep}{3pt} 
    % FIX 2: Resize entire table to fit text width
    \resizebox{\textwidth}{!}{
        \begin{tabular}{@{} l | c c | c c c c c c @{}}
            \toprule
            \textbf{Model} & \multicolumn{2}{c|}{\textbf{ID Performance}} & \multicolumn{6}{c}{\textbf{OOD Performance Metrics}} \\ 
            \cmidrule(lr){2-3} \cmidrule(lr){4-9}

            % --- METRIC TYPE ROW ---
            & \multicolumn{1}{c}{Acc. ($\uparrow$)} & \multicolumn{1}{c|}{BS ($\downarrow$)} 
            & \multicolumn{2}{c}{AUROC ($\uparrow$)} 
            & \multicolumn{2}{c}{AUPR ($\uparrow$)} 
            & \multicolumn{2}{c}{OOD BS ($\downarrow$)} \\ 
            \cmidrule(lr){4-5} \cmidrule(lr){6-7} \cmidrule(lr){8-9}

            % --- DATASET ROW ---
            & \multicolumn{2}{c|}{\textbf{CIFAR-10}} & \multicolumn{1}{c}{\textbf{CIFAR-100}} & \multicolumn{1}{c}{\textbf{SVHN}} & \multicolumn{1}{c}{\textbf{CIFAR-100}} & \multicolumn{1}{c}{\textbf{SVHN}} & \multicolumn{1}{c}{\textbf{CIFAR-100}} & \multicolumn{1}{c}{\textbf{SVHN}} \\
            \midrule
            
            \textbf{EDL} & $0.7535 \pm 0.1526$ & $0.3520 \pm 0.1355$ & $0.7588 \pm 0.0880$ & $0.7519 \pm 0.1465$ & $0.6900 \pm 0.0956$ & $0.8094 \pm 0.0992$ & $\mathbf{0.0960 \pm 0.0326}$ & $\mathbf{0.0492 \pm 0.0067}$ \\  % 4 seeds [10, 20, 30, 40]
            \textbf{R-EDL} & $0.8957 \pm 0.0030$ & $0.1737 \pm 0.0040$ & $0.8484 \pm 0.0040$ & $0.8741 \pm 0.0101$ & $0.8016 \pm 0.0051$ & $0.9150 \pm 0.0096$ & $0.3603 \pm 0.0123$ & $0.3262 \pm 0.0396$ \\  % 4 seeds [10, 20, 30, 40]
            \textbf{Re-EDL} & $0.8939 \pm 0.0009$ & $0.1737 \pm 0.0020$ & $0.8603 \pm 0.0020$ & $0.9102 \pm 0.0059$ & $0.8229 \pm 0.0039$ & $0.9427 \pm 0.0058$ & $0.6613 \pm 0.0108$ & $0.6303 \pm 0.0173$ \\  % 4 seeds [10, 20, 30, 40]
            \textbf{DAEDL} & $0.8879 \pm 0.0024$ & $0.1760 \pm 0.0026$ & $0.8323 \pm 0.0015$ & $0.8606 \pm 0.0135$ & $0.7913 \pm 0.0015$ & $0.9088 \pm 0.0153$ & $0.4087 \pm 0.0079$ & $0.3843 \pm 0.0426$ \\  % 4 seeds [10, 20, 30, 40]
            \textbf{PostNet} & $0.8169 \pm 0.0127$ & $0.2623 \pm 0.0159$ & $0.7691 \pm 0.0113$ & $0.7760 \pm 0.0342$ & $0.7254 \pm 0.0139$ & $0.8532 \pm 0.0229$ & $0.4609 \pm 0.0271$ & $0.4734 \pm 0.0687$ \\  % 4 seeds [10, 20, 30, 40]
            \textbf{MC Dropout} & $\mathit{0.9534 \pm 0.0011}$ & $\mathit{0.0729 \pm 0.0008}$ & $0.8833 \pm 0.0032$ & $0.8802 \pm 0.0223$ & $0.8407 \pm 0.0034$ & $0.9053 \pm 0.0153$ & $0.5958 \pm 0.0059$ & $0.6111 \pm 0.0513$ \\  % 4 seeds [10, 20, 30, 40]
            \textbf{Deep Ensemble} & $\mathbf{0.9614 \pm 0.0006}$ & $\mathbf{0.0587 \pm 0.0007}$ & $\mathbf{0.9072 \pm 0.0009}$ & $\mathit{0.9628 \pm 0.0056}$ & $\mathit{0.8807 \pm 0.0009}$ & $\mathit{0.9802 \pm 0.0037}$ & $0.5222 \pm 0.0016$ & $0.4200 \pm 0.0302$ \\  % 4 seeds [10, 20, 30, 40]
            \textbf{DIP-EDL} & $0.9503 \pm 0.0017$ & $0.0978 \pm 0.0014$ & $\mathit{0.9002 \pm 0.0006}$ & $\mathbf{0.9660 \pm 0.0034}$ & $\mathbf{0.8859 \pm 0.0003}$ & $\mathbf{0.9830 \pm 0.0028}$ & $\mathit{0.3251 \pm 0.0044}$ & $\mathit{0.1109 \pm 0.0088}$ \\  % 4 seeds [10, 20, 30, 40]
            \bottomrule
            \addlinespace
        \end{tabular}
    }
    \vspace{-4mm}
    \label{tab:cifar_results}
\end{table*}

\subsection{Experiments with LAMOST as ID dataset}
\label{sec:experiment_lamost}

On LAMOST stellar spectra dataset (Table~\ref{tab:lamost_results}; dimension 3000), the ID task is Galaxy/Quasar classification with Star spectra as OOD. LAMOST is the most challenging setting in this benchmark, with severe class imbalance causing several methods to degenerate. EDL exhibits near-random accuracy with high variance across seeds, while R-EDL and DAEDL plateau at the Quasar fraction in the training set, indicating majority-class collapse. Among the remaining methods, DIP-EDL ties Deep Ensembles for the highest accuracy ($0.8929$) and achieves the lowest ID Brier Score ($0.1555$) among single-model approaches, surpassed marginally by Deep Ensembles.

% Table~\ref{tab:lamost_results} reports results on the LAMOST stellar spectra dataset (with dimension 3000), where the ID task is binary classification between Galaxy and Quasar spectra, and Star spectra serve as OOD. LAMOST represents the most challenging setting in this benchmark, characterized by severe class imbalance which causes several methods to degenerate. EDL exhibits near-random accuracy with high variance across seeds, while R-EDL and DAEDL plateau at the Quasar fraction in the training set, indicating majority-class collapse. Among the remaining methods, DIP-EDL ties Deep Ensembles with the highest accuracy ($0.8929$) and achieves the lowest ID Brier Score ($0.1555$) among single-model approaches, surpassed only marginally by Deep Ensembles.

OOD scores are moderate across all methods, reflecting two inherent difficulties of this task: Star spectra share low-level spectral features with the ID classes, and the limited effective training set size constrains the quality of density estimates. Deep Ensembles achieves the highest AUROC, with DIP-EDL second ($0.6789$), both well above the remaining baselines. AUPR is uniformly high across methods due to the large proportion of Star spectra, making it an uninformative discriminator in this setting. PostNet achieves the lowest OOD BS but near-random AUROC, paralleling EDL on CIFAR-10. DIP-EDL is the strongest single-model method on this benchmark, achieving competitive ID classification and OOD detection performance on a real-world dataset where most baselines fail.

% OOD detection scores are moderate across all methods, reflecting the inherent difficulty of this task due to Star spectra sharing low-level spectral features with the ID classes, as well as the limited effective training set size constraining the quality of density estimates. Deep Ensembles achieves the highest AUROC ($0.7425$), with DIP-EDL second ($0.6789$), both well above the remaining baselines ($\text{AUROC} \leq 0.6022$). AUPR is uniformly high across methods due to the large proportion of Star spectra in LAMOST, making it an uninformative discriminator in this setting. PostNet achieves the lowest OOD BS ($0.0297$) but near-random AUROC ($0.532$), paralleling the EDL behavior on CIFAR-10. DIP-EDL is the strongest single-model method on this benchmark, achieving competitive ID classification and OOD detection performance on a real-world dataset where most baselines fail.
    \vspace{-1mm}

% ── LAMOST ──────────────────────────────────────────
\begin{table*}[ht]
    \centering
    \caption{\textbf{LAMOST: ID and OOD performance.} ID accuracy and calibration, with OOD detection against Star spectra. Results are $\mu \pm \sigma$ over 4 runs.} \vspace{-2mm}
    \setlength{\tabcolsep}{3pt}
    \resizebox{0.98\textwidth}{!}{
        \begin{tabular}{@{} l | c c | c c c @{}}
            \toprule
            \textbf{Model} & \multicolumn{2}{c|}{\textbf{ID Performance}} & \multicolumn{3}{c}{\textbf{OOD Performance Metrics}} \\
            \cmidrule(lr){2-3} \cmidrule(lr){4-6}
            & \multicolumn{1}{c}{Acc. ($\uparrow$)} & \multicolumn{1}{c|}{BS ($\downarrow$)}
            & \multicolumn{1}{c}{AUROC ($\uparrow$)} & \multicolumn{1}{c}{AUPR ($\uparrow$)} & \multicolumn{1}{c}{OOD BS ($\downarrow$)} \\
            \cmidrule(lr){4-4} \cmidrule(lr){5-5} \cmidrule(lr){6-6}
            & \multicolumn{2}{c|}{\textbf{LAMOST}} & \multicolumn{1}{c}{\textbf{Star}} & \multicolumn{1}{c}{\textbf{Star}} & \multicolumn{1}{c}{\textbf{Star}} \\
            \midrule
            \textbf{EDL} & $0.4955 \pm 0.2707$ & $0.7585 \pm 0.4638$ & $0.4934 \pm 0.1710$ & $0.9884 \pm 0.0053$ & $0.3595 \pm 0.2033$ \\  % 4 seeds [10, 20, 30, 40]
            \textbf{R-EDL} & $0.7706 \pm 0.0096$ & $0.4273 \pm 0.0728$ & $0.6022 \pm 0.1055$ & $0.9933 \pm 0.0026$ & $0.0783 \pm 0.0796$ \\  % 4 seeds [10, 20, 30, 40]
            \textbf{Re-EDL} & $0.7706 \pm 0.0096$ & $0.4272 \pm 0.0728$ & $0.5957 \pm 0.0990$ & $0.9932 \pm 0.0024$ & $\mathit{0.0752 \pm 0.0764}$ \\  % 4 seeds [10, 20, 30, 40]
            \textbf{DAEDL} & $0.7706 \pm 0.0096$ & $0.2802 \pm 0.0028$ & $0.5748 \pm 0.0138$ & $0.9904 \pm 0.0004$ & $0.1418 \pm 0.0054$ \\  % 4 seeds [10, 20, 30, 40]
            \textbf{PostNet} & $0.7328 \pm 0.0449$ & $0.4231 \pm 0.0662$ & $0.5315 \pm 0.1036$ & $0.9913 \pm 0.0027$ & $\mathbf{0.0297 \pm 0.0226}$ \\  % 4 seeds [10, 20, 30, 40]
            \textbf{MC Dropout} & $0.5851 \pm 0.2215$ & $0.6055 \pm 0.2897$ & $0.4974 \pm 0.1334$ & $0.9886 \pm 0.0051$ & $0.3729 \pm 0.1132$ \\  % 4 seeds [10, 20, 30, 40]
            \textbf{Deep Ensemble} & $\mathbf{0.8929 \pm 0.0133}$ & $\mathbf{0.1497 \pm 0.0076}$ & $\mathbf{0.7425 \pm 0.0191}$ & $\mathbf{0.9957 \pm 0.0006}$ & $0.2394 \pm 0.0377$ \\  % 4 seeds [10, 20, 30, 40]
            \textbf{DIP-EDL} & $\mathbf{0.8929 \pm 0.0115}$ & $\mathit{0.1555 \pm 0.0059}$ & $\mathit{0.6789 \pm 0.0389}$ & $\mathit{0.9948 \pm 0.0010}$ & $0.2517 \pm 0.0432$ \\  % 4 seeds [10, 20, 30, 40]
            \bottomrule
        \end{tabular}
    }
    \label{tab:lamost_results}
    \vspace{-4mm}
\end{table*}

\subsection{Ablation Study}
\label{sec:ablation_study}

We ablate DIP-EDL's three components to isolate their contributions: (i) training set size, (ii) the density estimator, and (iii) the discriminative classifier. The results (Table~\ref{tab:ablation_main}) confirm the distinct role of each component: excluding the discriminative classifier yields near-random accuracy, while excluding the density estimator yields near-random OOD detection. Full results are in Appendix~\ref{appdx:ablation_full}.

Specifically, the training set size acts as a scaling factor required for concentration (Theorem~\ref{thm:asymptotic_consistency_DIP_EDL}), with no effect on ID accuracy or OOD ranking. Scaling by $n$ increases Dirichlet concentration parameters proportionally, improving ID calibration (ID BS) while amplifying OOD deviation from the uniform prior (OOD BS), as seen in configurations 3--4 of Table~\ref{tab:ablation_main}. Without scaling by the sample size and the density estimator, ID calibration degrades even when ID accuracy is preserved (Appendix~\ref{appdx:ablation_full}). The effect of $n$ on OOD BS is more pronounced on CIFAR-10 than on MNIST, reflecting harder density estimation in higher-dimensional spaces and underscoring the importance of a high-quality density estimator, whose accurate likelihoods minimize the scaler's effect. Additional sensitivity analyses on the density scale parameter $\gamma$ and on corrupted density estimates are provided in Appendix~\ref{appdx:density_robustness}, confirming robustness to density estimation imprecision. \vspace{-2mm}

% Particularly, we note that the training set size acts as a scaling factor required for concentration (Theorem~\ref{thm:asymptotic_consistency_DIP_EDL}), with no effect on ID accuracy or OOD ranking. Scaling by $n$ increases Dirichlet concentration parameters proportionally, improving ID calibration (ID BS) while amplifying OOD deviation from the uniform prior (OOD BS), as seen in configurations 3--4 of Table~\ref{tab:ablation_main}. We observe that without scaling by the sample size and the density estimator, ID calibration degrades even when ID accuracy is preserved (Appendix~\ref{appdx:ablation_full}). The effect of $n$ on OOD BS is more pronounced on CIFAR-10 than on MNIST, reflecting harder density estimation in higher-dimensional spaces and underscoring the importance of a high-quality density estimator, whose accurate likelihoods minimize the scaler's effect. Additional sensitivity analyses on the density scale parameter $\gamma$ and on corrupted density estimates are provided in Appendix~\ref{appdx:density_robustness}, confirming robustness to density estimation imprecision.

\begin{table*}[th]
    \centering
    \caption{\textbf{Ablation study of DIP-EDL.} Individual and pairwise contributions of training set size, the density estimator, and the discriminative classifier on MNIST and CIFAR-10.} \vspace{-2mm}
    \setlength{\tabcolsep}{3pt} 
    \small 
    \begin{adjustbox}{max width=0.98\textwidth, center}
        \begin{tabular}{@{} ccc | cc | cc | cc | cc @{}}
            \toprule
            \multicolumn{3}{c|}{\textbf{Components}} & \multicolumn{2}{c|}{\textbf{ID Performance}} & \multicolumn{6}{c}{\textbf{OOD Performance Metrics}} \\ 
            \cmidrule(r){1-3} \cmidrule(lr){4-5} \cmidrule(l){6-11}

            % --- METRIC TYPE ROW ---
            & & & \multicolumn{1}{c}{Acc. ($\uparrow$)} & \multicolumn{1}{c|}{BS ($\downarrow$)} 
            & \multicolumn{2}{c}{AUROC ($\uparrow$)} 
            & \multicolumn{2}{c}{AUPR ($\uparrow$)} 
            & \multicolumn{2}{c}{OOD BS ($\downarrow$)} \\ 
            \cmidrule(lr){4-5} \cmidrule(lr){6-7} \cmidrule(lr){8-9} \cmidrule(lr){10-11}

            % --- DATASET ROW ---
            $n$ & $\mathrm{DE}^{\psi}_{X_{i}}$ & $\mathrm{NN}^{\phi}_{X_{i}}$ & \multicolumn{2}{c|}{\textbf{MNIST}} & \textbf{K-MNIST} & \textbf{Omniglot} & \textbf{K-MNIST} & \textbf{Omniglot} & \textbf{K-MNIST} & \textbf{Omniglot} \\
            \midrule
            
            % --- MNIST SECTION ---
            \checkmark & \checkmark & $\times$ & $0.0980$ & $0.9000$ & $\mathbf{0.9998}$ & $\mathbf{0.9998}$ & $\mathbf{0.9996}$ & $\mathbf{0.9997}$ & $\mathbf{0.0000}$ & $\mathbf{0.0000}$ \\
            \checkmark & $\times$ & \checkmark & $\mathbf{0.9958}$ & $\mathbf{0.0069}$ & $0.5138$ & $0.5299$ & $0.5742$ & $0.6408$ & $0.6826$ & $0.7124$ \\
            $\times$ & \checkmark & \checkmark & $0.9952$ & $0.7202$ & $0.9996$ & $0.9996$ & $0.9992$ & $0.9994$ & $\mathbf{0.0000}$ & $\mathbf{0.0000}$ \\
            \cmidrule(lr){1-11}
            \checkmark & \checkmark & \checkmark & $0.9955$ & $0.0079$ & $\mathbf{0.9998}$ & $\mathbf{0.9998}$ & $0.9995$ & $\mathbf{0.9997}$ & $0.0014$ & $\mathbf{0.0000}$ \\
            
            \midrule \midrule 

            % --- CIFAR-10 DATASET ROW ---
            & & & \multicolumn{2}{c|}{\textbf{CIFAR-10}} & \textbf{CIFAR-100} & \textbf{SVHN} & \textbf{CIFAR-100} & \textbf{SVHN} & \textbf{CIFAR-100} & \textbf{SVHN} \\
            \cmidrule(lr){4-11}

            % --- CIFAR SECTION ---
            \checkmark & \checkmark & $\times$ & $0.1000$ & $0.9000$ & $\mathbf{0.9000}$ & $\mathbf{0.9686}$ & $\mathbf{0.8864}$ & $\mathbf{0.9850}$ & $\mathbf{0.0000}$ & $\mathbf{0.0000}$ \\
            \checkmark & $\times$ & \checkmark & $\mathbf{0.9496}$ & $\mathbf{0.0831}$ & $0.5036$ & $0.5014$ & $0.5014$ & $0.7224$ & $0.6945$ & $0.6412$ \\
            $\times$ & \checkmark & \checkmark & $\mathbf{0.9496}$ & $0.7653$ & $\mathbf{0.9000}$ & $\mathbf{0.9686}$ & $\mathbf{0.8864}$ & $\mathbf{0.9850}$ & $0.0004$ & $\mathbf{0.0000}$ \\
            \cmidrule(lr){1-11}
            \checkmark & \checkmark & \checkmark & $\mathbf{0.9496}$ & $0.0978$ & $\mathbf{0.9000}$ & $\mathbf{0.9686}$ & $\mathbf{0.8864}$ & $\mathbf{0.9850}$ & $0.3303$ & $0.1032$ \\
            \bottomrule
        \end{tabular}
    \end{adjustbox}
    \label{tab:ablation_main} 
    \vspace{-2mm}
\end{table*}

    \vspace{-1mm}
\subsection{ID Calibration}
    \vspace{-1mm}

Standard neural networks often suffer from miscalibration, even with regularization \citep{guo2017calibration, vashistha2025calibration}. Conversely, DIP-EDL achieves calibration inherently through its probabilistic construction: as $n$ grows, Dirichlet concentration parameters scale proportionally, yielding asymptotically calibrated ID predictions (see Theorem~\ref{thm:asymptotic_consistency_DIP_EDL}), as validated in Tables~\ref{tab:mnist_results},~\ref{tab:cifar_results}, and~\ref{tab:lamost_results}.

    \vspace{-1mm}
\section{Discussion}
\label{sec:conclusion}
    \vspace{-1mm}

We provided a principled statistical interpretation of EDL as amortized VI in a hierarchical Bayesian model, clarifying the regularization parameter's role and exposing its fundamental limitation. Particularly, EDL is unable to faithfully quantify uncertainty, leading to overconfidence on ID and OOD data. To address this, we introduced DIP-EDL, which decouples class prediction from uncertainty by scaling pseudo-counts with marginal covariate density, thus enabling distributional awareness. Theoretically, DIP-EDL achieves asymptotic concentration of the posterior to the true conditional distribution. Empirically, it outperforms established baselines (EDL, R-EDL, Re-EDL, DAEDL, PostNet, MC Dropout) in ID calibration and OOD detection on MNIST, CIFAR-10, and LAMOST.

The broader significance of this work lies in at least three directions. First, it provides a principled statistical foundation for EDL, clarifying its theoretical basis. Second, DIP-EDL generalizes the EDL framework by decoupling classification and uncertainty quantification, training each component independently; moreover, it applies to arbitrary classifier classes, including non-differentiable ones. Third, these gains come without sacrificing the practical efficiency of EDL: a single forward pass at inference time makes DIP-EDL a scalable and modular framework for reliable uncertainty-aware prediction. Future work could refine density estimation and establish finite-sample guarantees.     \vspace{-1mm}

\begin{ack}
    \vspace{-2mm}
    This work was supported by the NSF under Cooperative Agreement 2421782, the Simons Foundation grant MPS-AI-00010515 awarded to the NSF-Simons AI Institute for Cosmic Origins (\href{https://www.cosmicai.org/}{CosmicAI}), and the \href{https://www.bancaditalia.it/chi-siamo/lavorare-bi/borse-di-studio/stringher-mortara-menichella/}{Giorgio Mortara scholarship} by the Bank of Italy. The authors acknowledge the \href{http://www.tacc.utexas.edu}{Texas Advanced Computing Center (TACC)} at The University of Texas at Austin for providing computational resources that have contributed to the research results reported within this paper.
\end{ack}

%%%%%%%%%%%%% REFERENCES %%%%%%%%%%%%%

\bibliography{references}
\bibliographystyle{plainnat}

%%%%%%%%%%%%%%%%% APPENDIX %%%%%%%%%%%%%%%%%
\newpage

\appendix

\section{Reproducibility}
The code to reproduce the results of the numerical experiments is available at: \url{https://github.com/NevenaGligic/DIP-EDL}.

\section{Related Literature.}
\label{sec:appendix_related_literature}

This section contextualizes the proposed DIP-EDL method within the broader landscape of uncertainty quantification with EDL approaches. We first contrast our approach with existing EDL paradigms before summarizing the critical shortcomings identified in current EDL literature. For extensive surveys, we refer the reader to \citet{ulmer2021prior} for EDL-specific developments and \citet{shen2024uncertainty} for a comprehensive overview of existing limitations.

\subsection{Contrasting DIP-EDL with Existing EDL Paradigms}
\label{sec:appendix_contrasting_edl}

Following the taxonomy by \citet{ulmer2021prior}, EDL methods for Dirichlet-based classification are primarily distinguished by two criteria: (i) whether the network parameterizes the prior or the posterior Dirichlet distribution, and (ii) whether the training process incorporates OOD samples to enforce uncertainty calibration.

Regarding the first criterion, \citet{ulmer2021prior} distinguish between \textit{prior} and \textit{posterior} networks. While both share a common optimization objective, balancing a predictive loss (e.g., cross-entropy) with an uncertainty-promoting regularizer (e.g., KL divergence toward a flat Dirichlet), their modeling mechanisms differ. Prior networks \citep{haussmann2019bayesian, tsiligkaridis2021information} parameterize an input-dependent concentration vector that defines a Dirichlet prior and derive the posterior via Bayes' theorem given the observed labels. Conversely, posterior networks \citep{sensoy2018evidential, postnet} bypass the explicit Bayesian update by directly predicting the pseudo-counts that define the concentration parameters of the Dirichlet posterior.

As for the second criterion, the idea is to train a model to output sharp Dirichlet distributions for in-distribution data and flat Dirichlet distributions for OOD data. Therefore, to achieve this kind of uncertainty calibration, the loss function is augmented with an additional regularization term, typically a KL divergence measure between the predicted Dirichlet distribution and a flat Dirichlet distribution, that penalizes confident predictions on OOD samples. In this paradigm, training is agnostic to the network type and can be applied to both prior \citep{malinin2018predictive,malinin2019reverse} and posterior \citep{zhao2019quantifying, sensoy2020uncertainty} network architectures.

While the aforementioned criteria categorize most EDL classification methods, certain approaches based on knowledge distillation \citep{hinton2015distilling} fall outside this taxonomy. These methods transfer uncertainty estimates from a complex teacher model to a more efficient student EDL model, training the latter to mimic the teacher's distributional output. Notable examples include \citet{malinin2019ensemble}, which distills the diversity of an ensemble into a single Dirichlet model, and \citet{fathullah2022self}, which leverages self-distillation from Gaussian stochastic dropout.

Another notable departure from the standard taxonomy is the Fisher Information-based EDL framework proposed by \citet{deng2023uncertainty}. This approach leverages the Fisher information of the Dirichlet distribution to quantify the informativeness of the evidence provided by each observation. By incorporating this metric, the model adjusts its concentration parameters, enhancing its sensitivity to evidence associated with uncertain or underrepresented classes.

\subsection{Shortcomings of Existing EDL Approaches}
\label{sec:appendix_edl_shortcomings}

Despite the growing popularity of EDL, several studies have identified fundamental limitations that challenge its ability to provide reliable uncertainty estimates. These critiques are summarized below.

More specifically, \citet{bengs2022pitfalls} demonstrated that loss minimization is not a theoretically viable framework for learning distributions over probability distributions, such as the Dirichlet distributions employed in EDL. A critical consequence of this result is that the distributional uncertainty in EDL models fails to vanish even in the asymptotic limit of infinite training data. \citet{bengs2023second} further extended this critique, proving that no loss function can successfully incentivize a second-order model to faithfully represent its epistemic uncertainty.

In a more recent theoretical analysis, \citet{shen2024uncertainty} generalized previous critiques by characterizing the optimal distribution targeted by standard EDL models. They demonstrated that this distribution depends fundamentally on a sample-size-independent regularization coefficient, leading to two critical conclusions: (i) EDL models fail to faithfully represent aleatoric uncertainty, as its quantification is governed by an arbitrary hyperparameter rather than solely by the DGP; and (ii) they cannot reliably capture epistemic uncertainty, because the predicted Dirichlet distributions fail to concentrate around the true conditional distribution—even in the asymptotic limit—due to this persistent dependence on the regularization term.

Other works, such as \citet{chen2024r} have also correctly identified the issue of the fundamental dependence of EDL models on a regularization hyperparameter and proposed possible remedies to it. In particular, \citet{chen2024r} introduced R-EDL, treats the regularization hyperparameter as an adjustable hyperparameter, as opposed to a fixed constant, and tunes it with respect to some performance metric (e.g., OOD detection AUROC) over a validation set. While this approach improves empirical performance, it does not address the fundamental theoretical limitations of EDL models identified in prior works.

\section{Proofs and Additional Derivations}
\label{sec:appendix_proofs}

We provide proofs for all propositions and theorems in the main text, beginning with a lemma on the Dirichlet distribution, which is stated without proof, as it is a well-known result.

\begin{lemma}
\label{lemma:dirichlet_expectation}
    \textbf{Expectation of the Dirichlet Distribution.} If $p \sim \mathrm{Dir}(\alpha)$ with $\alpha \in \R_{+}^{K}$, then
    \begin{equation*}
        \E[p(k)] = \frac{\alpha(k)}{\alpha_0}, \quad k = 1, \ldots, K,
    \end{equation*}
    where $\alpha_0 = \sum_{k=1}^{K} \alpha(k)$.
\end{lemma}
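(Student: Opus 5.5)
We compute the expectation directly from the Dirichlet density and reduce it to a ratio of normalizing constants. Write the density of $\mathrm{Dir}(\alpha)$ on the simplex $\Delta^{K-1}$ as
\begin{equation*}
    \pi(p) = \frac{1}{B(\alpha)} \prod_{j=1}^{K} p(j)^{\alpha(j)-1}, \qquad B(\alpha) = \frac{\prod_{j=1}^{K} \Gamma(\alpha(j))}{\Gamma(\alpha_0)}.
\end{equation*}
The key fact is that $B(\alpha) = \int_{\Delta^{K-1}} \prod_{j} p(j)^{\alpha(j)-1}\, dp$ holds for every $\alpha \in \R_+^K$, since the density integrates to one.

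Fix $k$. Multiplying the integrand by $p(k)$ raises the exponent of $p(k)$ by one, which turns the unnormalized Dirichlet kernel with parameter $\alpha$ into the kernel with parameter $\alpha + e_k$. Here $e_k$ is the one-hot vector, as in Proposition~\ref{prop:icd_model_posterior}. Hence
\begin{equation*}
    \E[p(k)] = \frac{1}{B(\alpha)} \int_{\Delta^{K-1}} \prod_{j=1}^{K} p(j)^{(\alpha + e_k)(j) - 1}\, dp = \frac{B(\alpha + e_k)}{B(\alpha)}.
\end{equation*}
This step uses the normalization identity with the parameter $\alpha + e_k$, which is still in $\R_+^K$, so the integral is finite.

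Finally we simplify the ratio using the Gamma recursion $\Gamma(x+1) = x\,\Gamma(x)$. In the numerator only the $k$-th factor changes, contributing $\Gamma(\alpha(k)+1)/\Gamma(\alpha(k)) = \alpha(k)$. In the denominator, the total concentration increases from $\alpha_0$ to $\alpha_0 + 1$, contributing $\Gamma(\alpha_0)/\Gamma(\alpha_0+1) = 1/\alpha_0$. Together these give $\E[p(k)] = \alpha(k)/\alpha_0$.

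The only point requiring care is the normalization identity $B(\alpha) = \int_{\Delta^{K-1}} \prod_j p(j)^{\alpha(j)-1}\,dp$ itself. If it is not taken as known, one can prove it by induction on $K$ via the Beta integral. Alternatively, one can use the Gamma representation: $p = G/\sum_j G_j$ with independent $G_j \sim \mathrm{Gamma}(\alpha(j),1)$. In that representation $\sum_j G_j$ is independent of $p$, so $\E[G_k] = \E[p(k)]\,\E[\sum_j G_j]$, which gives $\alpha(k) = \E[p(k)]\,\alpha_0$ directly.

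The independence claim in the Gamma route is the less elementary step, since it requires a change-of-variables computation. For that reason the density-and-normalizing-constant argument above is the primary route.
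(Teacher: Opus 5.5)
Your proof is correct: the ratio $B(\alpha+e_k)/B(\alpha)$ simplifies via $\Gamma(x+1)=x\,\Gamma(x)$ to exactly $\alpha(k)/\alpha_0$, and your Gamma-representation alternative is also valid. The paper states this lemma without proof as a well-known result, so your normalizing-constant argument supplies the standard derivation the paper omits; as you note, it takes the Dirichlet integral $B(\alpha)=\prod_j\Gamma(\alpha(j))/\Gamma(\alpha_0)$ as known.
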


\subsection{Proof of Proposition~\ref{prop:icd_model_posterior}}
\label{proof:icd_model_posterior}

\begin{proof}
    By Bayes' theorem, we have that
    \begin{align*}
        \pi_{Y_i}(p_i) & \propto f_{p_{1:n}}(Y_{1:n}) \; \pi(p_{1:n}) \\
        & \propto f_{p_i}(Y_i) \; \pi(p_i) \\
        & \propto \prod_{k=1}^{K} p_i(k)^{\mathbb{I}(Y_i = k)} \; \prod_{k=1}^{K} p_i(k)^{\alpha(k) - 1} \\
        & = \prod_{k=1}^{K} p_i(k)^{\alpha(k) + \mathbb{I}(Y_i = k) - 1}.
    \end{align*}
    We recognize this as the kernel of a $\mathrm{Dir}(\alpha + e_{Y_i})$, so
    \begin{equation*}
        p_i \mid Y_{1:n} \sim \pi_{Y_i} = \mathrm{Dir}(\alpha + e_{Y_i}).
    \end{equation*}
    Since the likelihood and prior factorize across observations, so does the joint posterior:
    \begin{equation*}
        \pi_{Y_{1:n}}(p_{1:n}) \propto f_{p_{1:n}}(Y_{1:n}) \; \pi(p_{1:n}) = \prod_{i=1}^{n} f_{p_i}(Y_i) \; \pi(p_i) \propto \prod_{i=1}^{n} \pi_{Y_i}(p_i).
    \end{equation*}
    As a consequence, the posterior distribution of new class probabilities is equal to the prior:
    \begin{equation*}
        \pi_{Y_{1:n}}(p_{n+1}) \propto \prod_{i=1}^{n} f_{p_i}(Y_i) \; \prod_{i=1}^{n+1} \pi(p_i) \propto \pi(p_{n+1}) = \mathrm{Dir}(\alpha).
    \end{equation*}
    Finally, the posterior predictive distribution follows from Lemma~\ref{lemma:dirichlet_expectation}:
    \begin{align*}
        \Prob(Y_{n+1} = k \mid Y_{1:n}) & = \int \Prob(Y_{n+1} = k \mid p_{n+1}) \; \pi_{Y_{1:n}}(p_{n+1}) \; dp_{n+1} \\
        & = \int p_{n+1}(k) \; \pi(p_{n+1}) \; dp_{n+1} \\
        & = \E_{\pi} [p_{n+1}(k)] \\
        & = \frac{\alpha(k)}{\alpha_0}, \quad k = 1, \ldots, K.
    \end{align*}
    We conclude that
    \begin{equation*}
        Y_{n+1} \mid Y_{1:n} \sim \mathrm{Cat}\!\left( \frac{\alpha}{\alpha_0} \right).
    \end{equation*}
\end{proof}

\subsection{Covariate-Indexed Categorical-Dirichlet Model}
\label{proof:covariate_indexed_cd_model_posterior}

\begin{definition}
\label{def:covariate_indexed_cd_model}
    \textbf{Covariate-Indexed Categorical-Dirichlet Model.} The model is defined as follows:
    \begin{align*}
        Y_i \mid p_i & \ind f_{p_i} = \mathrm{Cat}(p_i), \quad i = 1, \ldots, n, \\
        p_i &= p_{X_i}, \quad X_i \iid P^*_X, \\
        p_j & \iid \pi = \mathrm{Dir}(\alpha), \quad j = 1, \ldots, M_n,
    \end{align*}
    for some $\alpha \in \R_{+}^{K}$, where $M_n = |\{X_{1:n}\}|$ is the number of distinct covariate values.
\end{definition}

This model retains Dirichlet-Categorical conjugacy, enabling closed-form posterior inference.

\begin{proposition}
\label{prop:covariate_indexed_cd_model_posterior}
    Given Definition~\ref{def:covariate_indexed_cd_model}, the posterior and posterior predictive are
    \begin{align*}
        p_{j} \mid X_{1:n}, Y_{1:n} &\sim \mathrm{Dir}(\alpha + c_{j}), \; j = 1, \ldots, M_{n}, \\
        p_{1:M_{n}} \mid X_{1:n}, Y_{1:n} &\sim \prod_{j=1}^{M_{n}} \mathrm{Dir}(\alpha + c_{j}), \\
        Y_{n+1} \mid X_{1:n+1}, Y_{1:n} &\sim \mathrm{Cat}\!\left( \frac{\alpha + c_{x}}{\alpha_0 + S_{x}} \right),
    \end{align*}
    where $c_{j} = \sum_{i=1}^{n} \mathbb{I}(X_{i} = x^{*}_{j}) \, e_{Y_{i}}$ is the label count vector at $x^{*}_{j}$, $x^{*}_{1:M_{n}}$ are the distinct covariate values in $X_{1:n}$, $c_{x} = c_{j}$ if $x = x_{j}$ for some $j$ and $\mathbf{0}_{K}$ otherwise, and $S_{x} = \sum_{k=1}^{K} c_{x}(k)$.
\end{proposition}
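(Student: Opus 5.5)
The plan is to mirror the proof of Proposition~\ref{prop:icd_model_posterior}, replacing the single-observation likelihood by the product of likelihoods of all observations that share a covariate value. Conditional on $X_{1:n}$, the vectors $p_{1:M_n}$ are i.i.d.\ $\mathrm{Dir}(\alpha)$ under $\pi$, and each observation $i$ uses $p_{X_i}$. Write $x^*_{1:M_n}$ for the distinct values and $p_j$ for $p_{x^*_j}$. The joint likelihood then regroups as
\begin{equation*}
f(Y_{1:n}\mid p_{1:M_n},X_{1:n})=\prod_{i=1}^n\prod_{k=1}^K p_{X_i}(k)^{\mathbb{I}(Y_i=k)}=\prod_{j=1}^{M_n}\prod_{k=1}^K p_j(k)^{c_j(k)}.
\end{equation*}
This holds because $\sum_{i:X_i=x^*_j}\mathbb{I}(Y_i=k)=c_j(k)$ by the definition of $c_j$.

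Next I multiply by the prior $\prod_j\prod_k p_j(k)^{\alpha(k)-1}$. By Bayes' theorem the posterior kernel is $\prod_j\prod_k p_j(k)^{\alpha(k)+c_j(k)-1}$. This factorizes over $j$, and each factor is the kernel of $\mathrm{Dir}(\alpha+c_j)$. That gives both the joint product form and the marginal posteriors.

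For the predictive at $X_{n+1}=x$ I split into two cases.
\begin{itemize}
\item If $x=x^*_j$ for some $j$, then $p_{n+1}=p_j$. Integrating $p_j(k)$ against $\mathrm{Dir}(\alpha+c_j)$ and applying Lemma~\ref{lemma:dirichlet_expectation} gives $(\alpha(k)+c_j(k))/(\alpha_0+S_x)$, since the concentrations sum to $\alpha_0+\sum_k c_j(k)$.
\item If $x$ is new, $p_{n+1}$ is a fresh $\mathrm{Dir}(\alpha)$ draw that does not appear in the likelihood. Exactly as in the last step of Proposition~\ref{prop:icd_model_posterior}, its posterior equals the prior, so the predictive is $\alpha/\alpha_0$. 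This matches the formula with $c_x=\mathbf{0}_K$ and $S_x=0$.
\end{itemize}

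The main point needing care is the indexing. The definition indexes the prior draws by $j=1,\ldots,M_n$, but a new covariate value needs an additional draw $p_{M_n+1}$. I would make this explicit by treating the model as assigning an independent $\mathrm{Dir}(\alpha)$ to every distinct covariate value among $X_{1:n+1}$, and conditioning on $X_{1:n+1}$ throughout. The remaining steps are routine conjugacy computations.
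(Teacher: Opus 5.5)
Your proposal is correct and follows essentially the same route as the paper. You regroup the likelihood by distinct covariate values to get the $\mathrm{Dir}(\alpha + c_j)$ kernels, use the factorization for the joint posterior, and split the predictive into the observed and new covariate cases via Lemma~\ref{lemma:dirichlet_expectation}; your explicit extra $\mathrm{Dir}(\alpha)$ draw for a new covariate value is the same device the paper uses implicitly when it writes $\pi(p_{1:M_n}, p_{n+1} \mid X_{1:n+1})$.
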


\begin{proof}
    By Bayes' theorem, we have that
    \begin{align*}
        \pi_{X_{1:n}, Y_{1:n}}(p_{j}) & \propto f_{p_{1:M_{n}}}(Y_{1:n} \mid X_{1:n}) \; \pi(p_{1:M_{n}} \mid X_{1:n}) \\
        & = \prod_{i : X_{i} = x^{*}_{j}} f_{p_{j}}(Y_{i}) \; \pi(p_{j}) \\
        & \propto \prod_{k=1}^{K} p_{j}(k)^{\sum_{i : X_{i} = x^{*}_{j}} \mathbb{I}(Y_{i} = k)} \; \prod_{k=1}^{K} p_{j}(k)^{\alpha(k) - 1} \\
        & = \prod_{k=1}^{K} p_{j}(k)^{\alpha(k) + c_{j}(k) - 1}.
    \end{align*}
    This is the kernel of a $\mathrm{Dir}(\alpha + c_{j})$, so
    \begin{equation*}
        p_{j} \mid X_{1:n}, Y_{1:n} \sim \pi_{X_{1:n}, Y_{1:n}} = \mathrm{Dir}(\alpha + c_{j}).
    \end{equation*}
    Since the likelihood and prior factorize across unique covariate values, so does the joint posterior:
    \begin{equation*}
        \pi_{X_{1:n}, Y_{1:n}}(p_{1:M_{n}}) \propto f_{p_{1:M_{n}}}(Y_{1:n} \mid X_{1:n}) \; \pi(p_{1:M_{n}} \mid X_{1:n}) \propto \prod_{j=1}^{M_{n}} f_{p_{j}}(Y_{i : X_{i} = x^{*}_{j}}) \; \pi(p_{j}) \propto \prod_{j=1}^{M_{n}} \pi_{X_{1:n}, Y_{1:n}}(p_{j}).
    \end{equation*}
    As a consequence, the posterior distribution of new class probabilities is equal to the prior if the covariate is new and to the corresponding posterior if the covariate has been observed:
    \begin{align*}
        \pi_{X_{1:n+1}, Y_{1:n}}(p_{n+1}) & \propto f_{p_{1:M_{n}}}(Y_{1:n} \mid X_{1:n}) \; \pi(p_{1:M_{n}}, p_{n+1} \mid X_{1:n+1}) \\
        & \propto \begin{cases}
            \pi(p_{n+1}) = \mathrm{Dir}(\alpha), & \text{if } X_{n+1} \notin \{X_{1:n}\}, \\
            \pi_{X_{1:n}, Y_{1:n}}(p_{j}) = \mathrm{Dir}(\alpha + c_{j}), & \text{if } X_{n+1} = x^{*}_{j} \text{ for some } j = 1, \ldots, M_{n}.
        \end{cases}
    \end{align*}
    Finally, the posterior predictive distribution follows from Lemma~\ref{lemma:dirichlet_expectation}:
    \begin{align*}
        \Prob(Y_{n+1} = k \mid X_{1:n+1}, Y_{1:n}) & = \int \Prob(Y_{n+1} = k \mid p_{n+1}) \; \pi_{X_{1:n+1}, Y_{1:n}}(p_{n+1}) \; dp_{n+1} \\
        & = \int p_{n+1}(k) \; \pi_{X_{1:n+1}, Y_{1:n}}(p_{n+1}) \; dp_{n+1} \\
        & = \E_{\pi_{X_{1:n+1}, Y_{1:n}}} [p_{n+1}(k)] \\
        & = \begin{cases}
            \frac{\alpha(k)}{\alpha_0}, & \text{if } X_{n+1} \notin \{X_{1:n}\}, \\
            \frac{\alpha(k) + c_{j}(k)}{\alpha_0 + \sum_{k=1}^{K} c_{j}(k)}, & \text{if } X_{n+1} = x^{*}_{j} \text{ for some } j = 1, \ldots, M_{n}.
        \end{cases}
    \end{align*}
\end{proof}

\subsection{Proof of Proposition~\ref{prop:amortized_vi_icd_predictive}}
\label{proof:amortized_vi_icd_predictive}

\begin{proof}
    By Definition~\ref{def:amortized_vi_icd}, the approximate posterior distribution of new class probabilities is
    \begin{equation*}
        p_{n+1} \mid X_{n+1} \approx q^{\widehat{\phi}_{n}}_{X_{n+1}} = \mathrm{Dir} \left( \alpha + \mathrm{NN}^{\widehat{\phi}_{n}}_{X_{n+1}} \right).
    \end{equation*}
    Therefore, the posterior predictive distribution follows from Lemma~\ref{lemma:dirichlet_expectation}:
    \begin{align*}
        \Prob(Y_{n+1} = k \mid X_{1:n+1}, Y_{1:n}) & = \int \Prob(Y_{n+1} = k \mid p_{n+1}) \; q^{\widehat{\phi}_{n}}_{X_{n+1}}(p_{n+1}) \; dp_{n+1} \\
        & = \int p_{n+1}(k) \; q^{\widehat{\phi}_{n}}_{X_{n+1}}(p_{n+1}) \; dp_{n+1} \\
        & = \E_{q^{\widehat{\phi}_{n}}_{X_{n+1}}} [p_{n+1}(k)] \\
        & = \frac{\alpha(k) + \mathrm{NN}^{\widehat{\phi}_{n}}_{X_{n+1}}(k)}{\alpha_0 + \sum_{k=1}^{K} \mathrm{NN}^{\widehat{\phi}_{n}}_{X_{n+1}}(k)}.
    \end{align*}
\end{proof}

\subsection{Proof of Proposition~\ref{prop:KL_expansion_icd_model}}
\label{proof:KL_expansion_icd_model}

\begin{proof}
    Since both the variational distribution and the posterior factorize across observations, we have
    \begin{align*}
        \text{KL} \left(q^{\phi}_{X_{1:n}} \; || \; \pi_{Y_{1:n}}\right) 
        &= \E_{q^{\phi}_{X_{1:n}}} \left[ \log \frac{\prod_{i=1}^{n} q^{\phi}_{X_{i}}(p_{i})}{\prod_{i=1}^{n} \pi_{Y_{i}}(p_{i})} \right] \\
        &= \sum_{i=1}^{n} \E_{q^{\phi}_{X_{i}}} \left[ \log \frac{q^{\phi}_{X_{i}}(p_{i})}{\pi_{Y_{i}}(p_{i})} \right] \\
        &\propto \sum_{i=1}^{n} \E_{q^{\phi}_{X_{i}}} \left[ \log \frac{q^{\phi}_{X_{i}}(p_{i})}{f_{p_{i}}(Y_{i}) \; \pi(p_{i})} \right] \\
        &= \sum_{i=1}^{n} - \E_{q^{\phi}_{X_{i}}} \left[ \log f_{p_{i}} (Y_{i}) \right] + \E_{q^{\phi}_{X_{i}}} \left[ \log \frac{q^{\phi}_{X_{i}}(p_{i})}{\pi(p_{i})} \right] \\
        &= \sum_{i=1}^{n} - \E_{q^{\phi}_{X_{i}}} \left[ \log f_{p_{i}} (Y_{i}) \right] + \text{KL}\left( q^{\phi}_{X_{i}} \; || \; \pi \right).
    \end{align*}
\end{proof}

\subsection{Proof of Proposition~\ref{prop:tempered_icd_model_posterior}}
\label{proof:tempered_icd_model_posterior}

\begin{proof}
    The proof follows the same steps as in the proof of Proposition~\ref{prop:icd_model_posterior}, with the only difference being the tempered likelihood. Specifically, by Bayes' theorem, the posterior distribution of the class probabilities for each observation satisfies
    \begin{align*}
        \pi^{\nu}_{Y_i}(p_i) & \propto f_{p_{1:n}}(Y_{1:n})^{\nu} \; \pi(p_{1:n}) \\
        & \propto f_{p_i}(Y_i)^{\nu} \; \pi(p_i) \\
        & \propto \left( \prod_{k=1}^{K} p_i(k)^{\mathbb{I}(Y_i = k)} \right)^{\nu} \; \prod_{k=1}^{K} p_i(k)^{\alpha(k) - 1} \\
        & = \prod_{k=1}^{K} p_i(k)^{\alpha(k) + \nu \; \mathbb{I}(Y_i = k) - 1}.
    \end{align*}
    This is the kernel of a $\mathrm{Dir}(\alpha + \nu \; e_{Y_i})$, so
    \begin{equation*}
        p_i \mid Y_{1:n} \sim \pi^{\nu}_{Y_i} = \mathrm{Dir}(\alpha + \nu \; e_{Y_i}).
    \end{equation*}
    Since the likelihood and prior factorize across observations, so does the joint posterior:
    \begin{equation*}
        \pi^{\nu}_{Y_{1:n}}(p_{1:n}) \propto f_{p_{1:n}}(Y_{1:n})^{\nu} \; \pi(p_{1:n}) \propto \prod_{i=1}^{n} f_{p_i}(Y_i)^{\nu} \; \pi(p_i) \propto \prod_{i=1}^{n} \pi^{\nu}_{Y_i}(p_i).
    \end{equation*}
    As a consequence, the posterior distribution of new class probabilities is equal to the prior:
    \begin{equation*}
        \pi^{\nu}_{Y_{1:n}}(p_{n+1}) \propto \prod_{i=1}^{n} f_{p_i}(Y_i)^{\nu} \; \prod_{i=1}^{n+1} \pi(p_i) \propto \pi(p_{n+1}) = \mathrm{Dir}(\alpha).
    \end{equation*}
    Finally, the posterior predictive distribution follows from Lemma~\ref{lemma:dirichlet_expectation}:
    \begin{align*}
        \Prob(Y_{n+1} = k \mid Y_{1:n}) & = \int \Prob(Y_{n+1} = k \mid p_{n+1}) \; \pi^{\nu}_{Y_{1:n}}(p_{n+1}) \; dp_{n+1} \\
        & = \int p_{n+1}(k) \; \pi(p_{n+1}) \; dp_{n+1} \\
        & = \E_{\pi} [p_{n+1}(k)] \\
        & = \frac{\alpha(k)}{\alpha_0}.
    \end{align*}
\end{proof}

\subsection{Proof of Theorem~\ref{thm:equivalence_edl_amortized_vi}}
\label{proof:equivalence_edl_amortized_vi}

\begin{proof}
    By the same steps as in the proof of Proposition~\ref{prop:KL_expansion_icd_model}:
    \begin{align*}
        \text{KL} \left(q^{\phi}_{X_{1:n}} \; || \; \pi^{\nu}_{Y_{1:n}}\right) 
        &= \E_{q^{\phi}_{X_{1:n}}} \left[ \log \frac{\prod_{i=1}^{n} q^{\phi}_{X_{i}}(p_{i})}{\prod_{i=1}^{n} \pi^{\nu}_{Y_{i}}(p_{i})} \right] \\
        &= \sum_{i=1}^{n} \E_{q^{\phi}_{X_{i}}} \left[ \log \frac{q^{\phi}_{X_{i}}(p_{i})}{\pi^{\nu}_{Y_{i}}(p_{i})} \right] \\
        &\propto \sum_{i=1}^{n} \E_{q^{\phi}_{X_{i}}} \left[ \log \frac{q^{\phi}_{X_{i}}(p_{i})}{f_{p_{i}}(Y_{i})^{\nu} \; \pi(p_{i})} \right] \\
        &= \sum_{i=1}^{n} - \nu \; \E_{q^{\phi}_{X_{i}}} \left[ \log f_{p_{i}} (Y_{i}) \right] + \E_{q^{\phi}_{X_{i}}} \left[ \log \frac{q^{\phi}_{X_{i}}(p_{i})}{\pi(p_{i})} \right] \\
        &= \nu \; \sum_{i=1}^{n} - \E_{q^{\phi}_{X_{i}}} \left[ \log f_{p_{i}} (Y_{i}) \right] + \frac{1}{\nu} \; \text{KL}\left( q^{\phi}_{X_{i}} \; || \; \pi \right) \\
        & = \nu \; \mathcal{L}^{\nu} (\phi; Y_{1:n}, X_{1:n}).
    \end{align*}
    Therefore, since $\nu$ is a positive constant, we have that
    \begin{equation*}
        \widehat{\phi}_{n} = \argmin_{\phi} \left\{ \text{KL} \left(q^{\phi}_{X_{1:n}} \; || \; \pi^{\nu}_{Y_{1:n}}\right) \right\} = \argmin_{\phi} \left\{ \mathcal{L}^{\nu} (\phi; Y_{1:n}, X_{1:n}) \right\}.
    \end{equation*}
\end{proof}

\subsection{Proof of Proposition~\ref{prop:perfect_interpolation_condition}}
\label{proof:perfect_interpolation_condition}

Let us begin with the following lemma on the properties of the KL divergence, which is stated without proof, as it is a standard result.

\begin{lemma}
\label{lemma:kl_divergence_non_negative}
    \textbf{Properties of the Kullback-Leibler Divergence.}  
    Let $f$ and $g$ be two probability density functions defined on the same probability space. Then, the KL divergence from $f$ to $g$ is non-negative:
    \begin{equation*}
        \text{KL} (f \; || \; g) \geq 0,
    \end{equation*}
    with equality if and only if $f$ and $g$ are equal almost surely.
\end{lemma}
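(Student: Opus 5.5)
We will prove Lemma~\ref{lemma:kl_divergence_non_negative} with the elementary pointwise inequality $\log t \le t - 1$ for $t>0$, which holds with equality if and only if $t=1$. We prefer this to Jensen's inequality because it handles integrability and infinite values of the KL divergence with less fuss. Throughout we use the conventions $0 \log(0/g) = 0$, and $f\log(f/0) = +\infty$ when $f>0$. We write $A = \{x : f(x) > 0\}$ for the support of $f$, so that $\mathrm{KL}(f \,\|\, g) = \int_A f \log(f/g)$.

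\textbf{Step 1 (pointwise bound).} On $A$, apply the inequality with $t = g/f$ and multiply by $f>0$:
\begin{equation*}
    f \log \frac{f}{g} = - f \log \frac{g}{f} \;\ge\; f - g .
\end{equation*}
This also holds trivially where $g = 0$, since the left side is then $+\infty$. The right-hand side $f-g$ is integrable, so the negative part of the integrand is integrable. Hence the integral defining $\mathrm{KL}(f\,\|\,g)$ is well defined in $[0,+\infty]$ or bounded below.

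\textbf{Step 2 (integrate).} Integrating over $A$ gives
\begin{equation*}
    \mathrm{KL}(f\,\|\,g) \;\ge\; \int_A f - \int_A g \;=\; 1 - \int_A g \;\ge\; 1 - \int g = 0 ,
\end{equation*}
using that $f$ and $g$ are probability densities.

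\textbf{Step 3 (equality case).} If $f = g$ almost everywhere, the integrand vanishes almost everywhere, so the KL divergence is $0$. Conversely, suppose $\mathrm{KL}(f\,\|\,g)=0$. Then both inequalities in Step 2 must be equalities.

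The first equality forces $\int_A \bigl[f\log(f/g) - (f-g)\bigr] = 0$. The integrand is nonnegative, so it vanishes almost everywhere on $A$. By the strict equality condition of $\log t \le t-1$, this means $g/f = 1$, i.e.\ $g = f$, almost everywhere on $A$.

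The second equality forces $\int_{A^c} g = 0$, so $g = 0 = f$ almost everywhere on $A^c$. Together these give $f = g$ almost everywhere.

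The only point requiring care is the bookkeeping in Step 1 around sets where $g=0$ or $f=0$, and the justification that the integral is well defined even when KL is infinite. The pointwise lower bound by the integrable function $f-g$ resolves both issues, so I expect no real obstacle.
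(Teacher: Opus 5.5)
The paper gives no proof of this lemma. It is quoted as a standard result and used only in the proof of Proposition~\ref{prop:perfect_interpolation_condition}, to pass from a vanishing KL divergence between the variational family and the tempered posterior to equality of the two densities. So there is no argument of the authors to match, and yours is correct and complete.

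The bound $f\log(f/g) \ge f - g$ on $A$ controls the negative part of the integrand by $g$. Hence the integral is well defined in $(-\infty,+\infty]$; that is the precise content of your phrase ``in $[0,+\infty]$ or bounded below'', and non-negativity only comes out of Step 2.

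Your equality analysis covers every case. A.e.\ vanishing of the nonnegative integrand $f\log(f/g)-(f-g)$ on $A$ makes $A\cap\{g=0\}$ null, since the integrand is $+\infty$ there. On the rest of $A$ it gives $g=f$ by strictness of $\log t\le t-1$. Finally, $\int_{A^c} g = 0$ disposes of the complement.

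The textbook route the paper implicitly defers to is Gibbs' inequality via Jensen, $-\mathrm{KL}(f\,\|\,g) = \int_A f\log(g/f) \le \log\int_A g \le 0$. It is shorter, but it has two costs:
\begin{itemize}
\item it applies Jensen to $\log(g/f)$, which can equal $-\infty$ on a set of positive $f$-mass, so an extended-valued version of Jensen is needed unless finiteness of the divergence is assumed;
\item its equality case only says that $g/f$ is $f$-a.s.\ constant, after which one still has to identify the constant as $1$ and show that $g$ has no mass off $A$.
\end{itemize}
Your pointwise inequality builds both of these points into the two equalities of Step 2, at the cost of a few extra lines.
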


\begin{proof}
    By Theorem~\ref{thm:equivalence_edl_amortized_vi}, we have that
    \begin{equation*}
        \min_{\phi \in \Phi} \left\{ \mathcal{L}^{\nu} (\phi; Y_{1:n}, X_{1:n}) \right\} = \frac{1}{\nu} \; \min_{\phi \in \Phi} \left\{ \text{KL} \left(q^{\phi}_{X_{1:n}} \; || \; \pi^{\nu}_{Y_{1:n}}\right) \right\}.
    \end{equation*}
    If $\Phi$ is sufficiently expressive and the optimization procedure achieves the minimizer of the KL divergence, then by Lemma~\ref{lemma:kl_divergence_non_negative}, we have that
    \begin{equation*}
        \text{KL} \left(q^{\widehat{\phi}_{n}}_{X_{1:n}} \; || \; \pi^{\nu}_{Y_{1:n}}\right) = 0,
    \end{equation*}
    which holds if and only if
    \begin{equation*}
        q^{\widehat{\phi}_{n}}_{X_{1:n}} = \pi^{\nu}_{Y_{1:n}}.
    \end{equation*}
    In turn, since both distributions factorize across observations, this is equivalent to requiring that
    \begin{equation*}
        q^{\widehat{\phi}_{n}}_{X_{i}} = \pi^{\nu}_{Y_{i}}, \quad \text{for } i = 1, \ldots, n.
    \end{equation*}
    By Definitions~\ref{def:amortized_vi_tempered_icd} and~\ref{def:tempered_icd_model}, this is equivalent to
    \begin{equation*}
        \mathrm{Dir} (\alpha + \mathrm{NN}^{\widehat{\phi}_{n}}_{X_{i}}) = \mathrm{Dir}(\alpha + \nu \; e_{Y_{i}}), \quad \text{for } i = 1, \ldots, n,
    \end{equation*}
    which holds if and only if
    \begin{equation*}
        \mathrm{NN}^{\widehat{\phi}_{n}}_{X_{i}} = \nu \; e_{Y_{i}}, \quad \text{for } i = 1, \ldots, n.
    \end{equation*}
\end{proof}

\subsection{Proof of Proposition~\ref{prop:vacuity_perfect_interpolation}}
\label{proof:vacuity_perfect_interpolation}

By Definition~\ref{def:vacuity}, the vacuity of the approximate posterior predictive distribution is as follows:
\begin{align*}
    u \left( \alpha + \mathrm{NN}^{\widehat{\phi}_{n}}_{X_{n+1}} \right) & = \frac{K}{\alpha_0 + \sum_{k=1}^{K} \mathrm{NN}^{\widehat{\phi}_{n}}_{X_{n+1}}(k)} \\
    & = \frac{K}{\alpha_0 + \sum_{k=1}^{K} \nu \; e_{Y_{n+1}}(k)} \\
    & = \frac{K}{\alpha_0 + \nu},
\end{align*}
where we used the perfect interpolation condition from Proposition~\ref{prop:perfect_interpolation_condition} in the second step.

\subsection{Proof of Theorem~\ref{thm:equivalence_edl_erm}}
\label{proof:equivalence_edl_erm}

We can rewrite the empirical risk as follows:
\begin{align*}
    \widehat{\mathcal{R}}^{\nu}_{n} (\phi) & = \frac{1}{n} \; \sum_{i=1}^{n} \text{KL} \left( q^{\phi}_{X_{i}} \; || \; \pi^{\nu}_{Y_{i}} \right) \\
    & = \frac{1}{n} \; \text{KL} \left( q^{\phi}_{X_{1:n}} \; || \; \pi^{\nu}_{Y_{1:n}} \right) \\
    & = \frac{\nu}{n} \; \mathcal{L}^{\frac{1}{\nu}} (\phi; Y_{1:n}, X_{1:n}),
\end{align*}
where in the last step we skipped the same steps as in the proof of Theorem~\ref{thm:equivalence_edl_amortized_vi}.

\subsection{Proof of Theorem~\ref{thm:oracle_variational_distribution}}
\label{proof:oracle_variational_distribution}

By the law or iterated expectations, we can rewrite the population risk as follows:
\begin{align*}
    \mathcal{R}^{\nu} (\phi) & = \E_{P^{*}_{X}} \left[ \E_{P^{*}_{Y|X}} \left[ \text{KL}\left( q^{\phi}_{X} \; || \; \pi^{\nu}_{Y} \right) \right] \right] \\
    & = \E_{P^{*}_{X}} \left[ \E_{P^{*}_{Y|X}} \left[ \E_{q^{\phi}_{X}} \left[ \log q^{\phi}_{X}(p) - \log \pi^{\nu}_{Y}(p) \right] \right] \right] \\
    & = \E_{P^{*}_{X}} \left[ \E_{q^{\phi}_{X}} \left[ \log q^{\phi}_{X}(p) - \E_{P^{*}_{Y|X}} \left[ \log \pi^{\nu}_{Y}(p) \right] \right] \right],
\end{align*}
where we interchanged the order of expectations in the last step, since the first term is independent of $Y$, and the second term involves an expectation over $P^{*}_{Y|X}$, which is a finite sum and can therefore be freely reordered with the expectation over the variational distribution.

Then, for each fixed $X$, by Gibb's variational principle, the population risk is minimized by choosing the parameter $\phi^{*}$ such that the variational distribution satisfies
\begin{align*}
    q^{\phi^{*}}_{X} & \propto \exp \left( \E_{P^{*}_{Y|X}} \left[ \log \pi^{\nu}_{Y} (p) \right] \right) \\
    & = \exp \left( \E_{P^{*}_{Y|X}} \left[ \log \mathrm{Dir} (\alpha + \nu \; e_{Y}) \right] \right) \\
    & = \exp \left( \E_{P^{*}_{Y|X}} \left[ \sum_{k=1}^{K} (\alpha (k) + \nu \; e_{Y}(k) - 1) \; \log p (k) - \log B(\alpha + \nu \; e_{Y}) \right] \right) \\
    & \propto \exp \left( \sum_{k=1}^{K} \left( \alpha (k) + \nu \; P^{*}_{Y|X}(k \mid X ) - 1  \right) \; \log p (k) \right) \\
    & = \prod_{k=1}^{K} p (k)^{\alpha (k) + \nu \; P^{*}_{Y|X}(k \mid X ) - 1},
\end{align*}
where $B$ is the multivariate Beta function. We recognize this as the kernel of a $\mathrm{Dir}(\alpha + \nu \; P^{*}_{Y|X})$, so
\begin{equation*}
    q^{\phi^{*}}_{X} = \mathrm{Dir} \left( \alpha + \nu \; P^{*}_{Y|X} \right).
\end{equation*}

\subsection{Proof of Theorem~\ref{thm:asymptotic_consistency_DIP_EDL}}
\label{proof:asymptotic_consistency_DIP_EDL}

Under Definition~\ref{def:DIP_EDL}, the approximate posterior mean is
\begin{align*}
    \E [p_{i} \mid X_{1:n}, Y_{1:n}] & \approx \frac{\alpha + n \, \mathrm{DE}^{\widehat{\psi}_{n}}_{X_{i}} \, \mathrm{NN}^{\widehat{\psi}_{n}}_{X_{i}}}{\alpha_{0} + n \, \mathrm{DE}^{\widehat{\psi}_{n}}_{X_{i}}} \\
    & = \frac{\frac{\alpha}{n} + \mathrm{DE}^{\widehat{\psi}_{n}}_{X_{i}} \, \mathrm{NN}^{\widehat{\psi}_{n}}_{X_{i}}}{\frac{\alpha_{0}}{n} + \mathrm{DE}^{\widehat{\psi}_{n}}_{X_{i}}} \\
    & \overset{p}{\to} \frac{P^{*}_{X} (X_{i}) \, P^{*}_{Y \mid X} (\cdot \mid X_{i})}{P^{*}_{X} (X_{i})} \\
    & = P^{*}_{Y \mid X} (\cdot \mid X_{i}),
\end{align*}
where the convergence in probability follows from the consistency of the two neural networks, as stated in Theorem~\ref{thm:asymptotic_consistency_DIP_EDL} and the continuous mapping theorem. Similarly, the approximate posterior variance of the class probabilities is
\begin{align*}
    \text{Var} [p_{i} (k) \mid X_{1:n}, Y_{1:n}] & \approx \frac{ \frac{ \alpha + n \, \mathrm{DE}^{\widehat{\psi}_{n}}_{X_{i}} \, \mathrm{NN}^{\widehat{\psi}_{n}}_{X_{i}} (k) }{ \alpha_{0} + n \, \mathrm{DE}^{\widehat{\psi}_{n}}_{X_{i}}} \left( 1 - \frac{ \alpha + n \, \mathrm{DE}^{\widehat{\psi}_{n}}_{X_{i}} \, \mathrm{NN}^{\widehat{\psi}_{n}}_{X_{i}} (k) }{ \alpha_{0} + n \, \mathrm{DE}^{\widehat{\psi}_{n}}_{X_{i}}} \right) } { \alpha_{0} + n \, \mathrm{DE}^{\widehat{\psi}_{n}}_{X_{i}} + 1} \\
    & = \frac{ \frac{ \frac{\alpha}{n} + \mathrm{DE}^{\widehat{\psi}_{n}}_{X_{i}} \, \mathrm{NN}^{\widehat{\psi}_{n}}_{X_{i}} (k) }{ \frac{\alpha_{0}}{n} + \mathrm{DE}^{\widehat{\psi}_{n}}_{X_{i}}} \left( 1 - \frac{ \frac{\alpha}{n} + \mathrm{DE}^{\widehat{\psi}_{n}}_{X_{i}} \, \mathrm{NN}^{\widehat{\psi}_{n}}_{X_{i}} (k) }{ \frac{\alpha_{0}}{n} + \mathrm{DE}^{\widehat{\psi}_{n}}_{X_{i}}} \right) } { \alpha_{0} + n \, \mathrm{DE}^{\widehat{\psi}_{n}}_{X_{i}} + 1} \\
    & \overset{p}{\approx} \frac{ \frac{ P^{*}_{X} (X_{i}) \, P^{*}_{Y \mid X} (k \mid X_{i}) }{ P^{*}_{X} (X_{i})} \left( 1 - \frac{ P^{*}_{X} (X_{i}) \, P^{*}_{Y \mid X} (k \mid X_{i}) }{ P^{*}_{X} (X_{i})} \right) } { \alpha_{0} + n \, P^{*}_{X} (X_{i}) + 1} \\
    & = \frac{ P^{*}_{Y \mid X} (k \mid X_{i}) \left( 1 - P^{*}_{Y \mid X} (k \mid X_{i}) \right) } { \alpha_{0} + n \, P^{*}_{X} (X_{i}) + 1} \\
    & \overset{n}{\to} 0,
\end{align*}
where the convergence in probability again follows from the consistency of the two neural networks and the continuous mapping theorem, while the final step follows from the fact that the denominator diverges to infinity as $n$ goes to infinity.

Therefore, by Chebyshev's inequality, we have that
\begin{align*}
    \Prob ( | p_{i}(k) - \E [p_{i}(k) \mid X_{1:n}, Y_{1:n}] | > \varepsilon \mid  X_{1:n}, Y_{1:n} ) \leq \frac{\text{Var} [p_{i} (k) \mid X_{1:n}, Y_{1:n}]}{\varepsilon^2} \overset{n}{\to} 0.
\end{align*}
Hence, we have that
\begin{align*}
    \Prob ( \| p_{i} - \E [p_{i} \mid X_{1:n}, Y_{1:n}] \|_{1} > \varepsilon \mid  X_{1:n}, Y_{1:n} ) & = \Prob \left( \sum^{K}_{k = 1} | p_{i}(k) - \E [p_{i}(k) \mid X_{1:n}, Y_{1:n}] | > \varepsilon \mid  X_{1:n}, Y_{1:n} \right) \\
    & \leq \sum^{K}_{k = 1} \Prob \left( | p_{i}(k) - \E [p_{i}(k) \mid X_{1:n}, Y_{1:n}] | > \frac{\varepsilon}{K} \mid  X_{1:n}, Y_{1:n} \right) \\
    & \overset{n}{\to} 0.
\end{align*}
Finally, by the triangular inequality we have that
\begin{align*}
    \| p_{i} - P^{*}_{Y \mid X} ( \cdot \mid X_{i}) \|_{1} &  \leq \| p_{i} - \E [p_{i} \mid X_{1:n}, Y_{1:n}] \|_{1} + \| \E [p_{i} \mid X_{1:n}, Y_{1:n}] - P^{*}_{Y \mid X} ( \cdot \mid X_{i}) \|_{1} \\
    & \overset{n}{\to} 0
\end{align*}
Therefore, $p_{i} \overset{L_{1}}{\to} P^{*}_{Y \mid X} ( \cdot \mid X_{i}) \mid X_{1:n}, Y_{1:n} $, which implies that $p_{i} \overset{p}{\to} P^{*}_{Y \mid X} ( \cdot \mid X_{i}) \mid X_{1:n}, Y_{1:n} $.

%One may be tempted to choose the temperature by treating as parameter of the loss function and thus minimizing $L$ with respect to both $\phi$ and $\nu$. However, this approach is not viable, since it easy to see that $L (\nu)$ has a unique global minimum at $\nu = 0$.

%First, observe that trivially we have
%\begin{equation}
%    L(0) = \E 
%\end{equation}

%\textcolor{red}{SKETCH OF THE PROOF:
%\begin{itemize}
%    \item $L(\nu) \geq 0$ and $L(0) = KL(Dirichlet(\alpha) || Dirichlet(\alpha)) = 0$. Therefore $L(\nu) \geq L(0)$ for all $\nu \geq 0$.
%    \item Since the loss function is minimized by $q^{*}_{X} (\nu) = Dirichlet(\alpha + \nu \; P^{*}_{Y|X})$, we use that in the following.
%    \item If $P^{*}_{Y|X}$ is degenerate, ie there exists a function $k : \R^{d} \to \{1, \ldots, K\}$ such that $P^{*}_{Y|X = x} = e_{k(x)} \; \forall x \in \R^{d}$ almost surely, then $KL (Dirichlet(\alpha + \nu q^{*}_{X}) || Dirichlet(\alpha + \nu e_{Y})) = 0$ if and only if $\nu = 1$.
%    \item However, if $P^{*}_{Y|X}$ is not degenerate, ie there exists $x \in \R^{d}$ such that $P^{*}_{Y|X = x}$ has at least two non-zero elements, then $KL (Dirichlet(\alpha + \nu q^{*}_{X}) || Dirichlet(\alpha + \nu e_{Y})) = 0$ if and only if $\nu P^{*}_{Y|X = x} = e_{k}$ for some $k$, which is impossible. Therefore, in this case $L(\nu) > 0$ for all $\nu > 0$.
%    \item This proves that for non-degenerate $P^{*}_{Y|X}$, $L(\nu)$ has a unique global minimum at $\nu = 0$.
%\end{itemize}
%}

\section{Detailed Experimental Configuration}
\label{sec:appendix_experiments}

This appendix provides the precise model architectures, training hyperparameters, and data processing steps used in our experiments to ensure reproducibility. All experiments were implemented in PyTorch~2.6~\cite{pytorch} with CUDA~12.4. Experiments were run on a high-performance computing cluster (TACC Vista \cite{vista}) equipped with NVIDIA H200 GPUs (96\,GB HBM3) paired with NVIDIA Grace Arm CPUs (72 cores, 116\,GB DDR5).

\subsection{MNIST Configuration for DIP-EDL}

For the MNIST dataset \cite{mnist}, we employ an architecture operating in pixel space. The density estimator is implemented using the \texttt{nflows} library \cite{nflows}.

\textbf{Backbone Architecture (LeNet-5).} We utilize a standard LeNet-style Convolutional Neural Network \cite{LeNet}.

\begin{table}[htbp]
\centering
\caption{LeNet-5 implementation details for MNIST.}
\label{tab:mnist_arch}
\begin{tabular}{ll}
\hline
\textbf{Layer Type} & \textbf{Specifications} \\ \hline
Conv2d & In: 1, Out: 20, Kernel: $5 \times 5$, Stride: 1, Pad: 2, ReLU \\
MaxPool2d & Kernel: $2 \times 2$, Stride: 2 \\
Conv2d & In: 20, Out: 50, Kernel: $5 \times 5$, Stride: 1, Pad: 0, ReLU \\
MaxPool2d & Kernel: $2 \times 2$, Stride: 2 \\
Flatten & Output Dimension: 1250 ($50 \times 5 \times 5$) \\
Linear & In: 1250, Out: 500, ReLU \\
Dropout & $p=0.5$ \\
Linear (Head) & In: 500, Out: 10 (No activation) \\ \hline
\addlinespace
\end{tabular}
\end{table}

\textbf{Density Estimator (MAF).} We employ a Masked Autoregressive Flow (MAF) \cite{MAF} to model the data density directly on flattened $28 \times 28$ pixel inputs. The flow consists of 10 autoregressive transforms, each implemented as a MADE network with 20 residual blocks and 1024 hidden units per block. We apply Batch Normalization between transforms to improve convergence. 

\textbf{Data Augmentation \& Preprocessing.}
For the density estimator, we adhere to the preprocessing protocol established in \cite{MAF}.
\begin{enumerate}
    \item \textbf{Dequantization:} We add uniform noise $u \sim U(0, 1)$ to the discrete pixel values $x \in \{0, \dots, 255\}$ to convert them into continuous variables: $x' = (x + u) / 256$.
    \item \textbf{Logit Transformation:} To map the bounded data $x' \in [0, 1]$ to the unconstrained real space $\mathbb{R}$ required by the flow, we apply a logit transformation: 
    \[ z = \text{logit}(\lambda + (1 - 2\lambda)x') \]
    where $\lambda = 10^{-6}$ is a regularization parameter to prevent numerical instability at the boundaries.
\end{enumerate}
No standard mean/variance normalization or geometric augmentations (e.g., rotation) were applied to the MNIST data.

\textbf{Training Hyperparameters.}  The training hyperparameters used in this work are listed in Table~\ref{tab:mnist_hyper}.

\begin{table}[htbp]
\centering
\caption{Training Hyperparameters for MNIST}
\label{tab:mnist_hyper}
\begin{tabular}{lll}
\hline
\textbf{Parameter} & \textbf{Backbone (EDL)} & \textbf{Density Estimator (MAF)} \\ \hline
Optimizer & Adam & Adam \\
Batch Size & 128 & 128 \\
Epochs & 50 & 50 \\
Learning Rate & $1 \times 10^{-3}$ & $1 \times 10^{-4}$ \\
Weight Decay & $5 \times 10^{-3}$ & $1 \times 10^{-5}$ \\
Scheduler & None & StepLR (Step: 5, Gamma: 0.5) \\
Annealing Steps & 10 Epochs & N/A \\ \hline
\addlinespace
\end{tabular}
\end{table}

\subsection{CIFAR-10 Configuration for DIP-EDL}

For CIFAR-10 dataset \cite{cifar10}, we utilize a WideResNet-28-10 backbone \cite{zagoruyko2016wide} and perform density estimation in the learned feature space.

% \textbf{Backbone Architecture (Modified ResNet-18).} We adapt the ResNet-18 architecture to handle low-resolution $32 \times 32$ inputs. Specifically, the initial $7 \times 7$ convolution (stride 2) is replaced with a $3 \times 3$ convolution (stride 1), and the first max-pooling layer is removed to preserve spatial resolution. To enforce feature compactness, we apply Spectral Normalization (SN) to all convolutional layers in the backbone.

\textbf{Backbone Architecture.} For a classifier, we use WideResNet-28-10 with pre-activation residual blocks (BatchNorm $\to$ ReLU $\to$ Conv) with widening factor 10, yielding 640-dimensional output embeddings. For training, we use SGD (lr $= 0.1$, 
momentum $= 0.9$, Nesterov, weight decay $= 5{\times}10^{-4}$, cosine annealing 
over 100 epochs). Spectral Normalization is applied to all convolutional and linear layers, which regularizes the Lipschitz constant of the network and produces well-behaved feature representations suitable for post-hoc density estimation.

% \textbf{Density Estimator (GDA).} We employ Gaussian Discriminant Analysis (GDA) on the 512-dimensional feature embeddings, adopting the official implementation from DAEDL \cite{DAEDL}. We perform class-conditional multivariate Gaussian fitting ($C=10$) in a post-hoc manner. After training the backbone, we freeze the weights and extract features for all ID training samples to compute the empirical mean and covariance matrix for each class. We utilize class-specific full covariance matrices (Quadratic Discriminant Analysis) to maximize density estimation precision.

\textbf{Density Estimator.} GDA is fitted post-hoc on the frozen 640-dimensional embeddings using a clean, non-augmented pass over the training set, with full per-class covariance matrices. The resulting log-likelihoods are z-scored using training-set statistics before being used for further evaluation. GDA is fitted on the non-augmented view of the data, as augmented features degrade density estimates. The training set size is $N = 40{,}000$, reflecting the 80/20 train/validation split. 

\textbf{Data Augmentation \& Preprocessing.} We apply standard data augmentations during the classifier training. These include random cropping to $32 \times 32$ with padding of 4, random horizontal flipping with a probability of $0.5$, and random rotation of $\pm 15$ degrees. All images are normalized using the standard CIFAR-10 mean and standard deviation ($\mu=(0.4914, 0.4822, 0.4465)$, $\sigma=(0.2023, 0.1994, 0.2010)$).

\textbf{Training Hyperparameters (CIFAR-10).} The training hyperparameters used in this work are listed in Table~\ref{tab:cifar_hyper}.

\begin{table}[htbp]
\centering
\caption{Training Hyperparameters for CIFAR-10}
\label{tab:cifar_hyper}
\begin{tabular}{ll}
\hline
\textbf{Parameter} & \textbf{Value} \\ \hline
Optimizer          & SGD (momentum $= 0.9$, Nesterov) \\
Learning Rate      & $0.1$ \\
Weight Decay       & $5 \times 10^{-4}$ \\
Batch Size         & 128 \\
Epochs             & 100 \\
Dropout Rate       & $0.3$ \\
LR Scheduler       & Cosine Annealing ($T_{\max} = 100$) \\ \hline
\addlinespace
\end{tabular}
\end{table}

\subsection{LAMOST Configuration for DIP-EDL}

\textbf{Dataset.} We use a subset of the LAMOST Data Release 9 (DR9) dataset, available at \url{https://www.lamost.org/dr9/}. The subset was created by \url{https://github.com/superdreamliner/LAMOST-Spectra-Classifier} (\cite{lamost_spectra_classifier}) and it can be downloaded from \url{https://www.dropbox.com/scl/fi/tp81mfopdqbep50vwhhhb/spectra_training_data.tar}. The dataset is created by drawing a total of 100{,}000 good-quality spectra (signal-to-noise ratio $> 10$) from DR9, spanning three stellar object classes: galaxy, quasar, and star. All spectra are evenly interpolated over the wavelength range 3{,}900--9{,}000\,\AA\ with 3{,}000 data points per spectrum. 

In our experiments, we use Galaxy and Quasar as the two in-distribution classes and hold out Star spectra as the OOD set. The data is partitioned into 60/20/20 train/validation/test splits prior to any feature extraction.

\textbf{Data preprocessing.} Following the Github repository which created this dataset (\cite{lamost_spectra_classifier}), raw flux arrays undergo energy normalization followed by Savitzky--Golay smoothing (window length 10, polynomial order 3). A three-stage feature extraction is then applied:
\begin{enumerate}
    \item \textbf{PCA}: 500 principal components fitted on the training spectra.
    \item \textbf{Spectral line features}: Mean flux within a $\pm3$\,\AA\ window around six known emission/absorption lines (H$\alpha$\,6563, H$\beta$\,4861, H$\gamma$\,4340, H$\delta$\,4102, Mg\,b\,5184, Na\,D\,5890; wavelengths in \AA), yielding 6 features.
    \item \textbf{Statistical block features}: Each spectrum is divided into 50 equal-length blocks. Mean, variance, argmax, argmin, maximum, and minimum are computed per block, yielding 300 features.
\end{enumerate}
The concatenated feature vector has dimension $500 + 6 + 300 = 806$, fed as a
single-channel 1-D signal $(1, 806)$ to the backbone.

\textbf{Backbone Architecture (Multi-Branch 1D CNN).} We use the architecture by
\cite{lamost_spectra_classifier}, a multi-branch 1D convolutional network designed for
stellar spectra classification.

\begin{table}[htbp]
\centering
\caption{Multi-branch 1D CNN architecture for LAMOST.}
\label{tab:lamost_arch}
\resizebox{\textwidth}{!}{
\begin{tabular}{ll}
\hline
\textbf{Component} & \textbf{Specifications} \\ \hline
Parallel branches        & 5 branches, kernel sizes $k \in \{3, 5, 7, 9, 11\}$ \\
Per-branch block ($\times$3) & Conv1d (32 filters, pad $\lfloor k/2 \rfloor$) $\to$ BatchNorm1d $\to$ ReLU $\to$ MaxPool1d (3, stride 3) \\
Merge                    & Concatenate and flatten all branch outputs \\
Dense 1                  & Linear $\to$ BatchNorm1d $\to$ ReLU, Out: 128 \\
Dense 2                  & Linear $\to$ BatchNorm1d $\to$ ReLU, Out: 64 \\
Dense 3                  & Linear $\to$ BatchNorm1d $\to$ ReLU, Out: 32 \\
Head                     & Linear, Out: 2 (no activation) \\ \hline
\end{tabular}
}
\end{table}

\textbf{Density Estimator (GDA).} GDA is fitted post-hoc on the frozen 32-dimensional
penultimate features (output of Dense 3) using the training split, with full per-class
covariance matrices. Log-likelihoods are z-scored (see Section~\ref{sec:zscore}) using training-set statistics before
use in evaluation.

\textbf{Training Hyperparameters.} The training hyperparameters used in this work are listed in Table~\ref{tab:lamost_hyper}.

\begin{table}[htbp]
\centering
\caption{Training Hyperparameters for LAMOST}
\label{tab:lamost_hyper}
\begin{tabular}{ll}
\hline
\textbf{Parameter} & \textbf{Value} \\ \hline
Optimizer     & Adam \\
Learning Rate & $1 \times 10^{-3}$ \\
Batch Size    & 128 \\
Epochs        & 50 \\
Weight Decay  & None \\
LR Scheduler  & None \\ \hline
\end{tabular}
\end{table}

\subsection{Addressing underflow in DIP-EDL due to log-likelihoods with large negative magnitudes.} \label{sec:zscore}
To ensure numerical stability, we address the scale of the output log-likelihoods. Raw log-densities in high-dimensional pixel space can take on large negative magnitudes (e.g., $<-1000$), leading to numerical underflow when exponentiated and converted to evidence. We mitigate this by Z-score normalizing the log-likelihoods using the mean and standard deviation statistics calculated on the In-Distribution (ID) training set.

\subsection{Baseline Implementation Details}
To ensure a rigorous and fair comparison, we utilize the official open-source implementations for all competing methods. For EDL \cite{sensoy2018evidential}, MC Dropout \cite{gal2016dropout}, and Deep Ensembles \cite{lakshminarayanan2017simple}, we align the backbone with DIP-EDL (LeNet-5 for MNIST, WideResNet-28-10 for CIFAR-10, and the multi-branch 1D CNN for LAMOST) to isolate the contribution of the uncertainty quantification mechanism. DAEDL \cite{DAEDL}, R-EDL \cite{REDL}, Re-EDL \cite{chen2025revisiting}, and PostNet \cite{postnet} use their own backbone 
architectures as specified in their respective implementations for MNIST and CIFAR-10, while for LAMOST, where no official spectral architecture exists, DAEDL is adapted with a 1D convolutional backbone following its own architectural design, and R-EDL, Re-EDL, and PostNet are adapted to use linear backbones.
The optimization hyperparameters for all baselines are detailed in Tables~\ref{tab:mnist_baselines}, \ref{tab:cifar_baselines}, and~\ref{tab:lamost_baselines}.

\begin{table}[htbp]
\centering
\caption{Baseline Training Hyperparameters (MNIST)}
\label{tab:mnist_baselines}
\resizebox{\textwidth}{!}{
\begin{tabular}{lccccc}
\hline
\textbf{Method} & \textbf{Epochs} & \textbf{Batch Size} & \textbf{LR} & \textbf{Optimizer} & \textbf{Specific Parameters} \\ \hline
EDL            & 50  & 128 & $1 \times 10^{-3}$ & Adam & Weight decay $= 5\times10^{-3}$, KL annealing $= 10$ epochs \\
R-EDL          & 60  & 64  & $1 \times 10^{-3}$ & Adam & $\lambda_1=1.0,\ \lambda_2=0.1$ \\
Re-EDL         & 60  & 64  & $1 \times 10^{-3}$ & Adam & $\lambda_1=1.0,\ \lambda_2=0.1,\ \kappa=0$ \\
DAEDL          & 50  & 64  & $1 \times 10^{-3}$ & Adam & Reg $= 0.05$, Dropout $= 0.5$ \\
PostNet        & 50  & 64  & $5 \times 10^{-5}$ & Adam & Latent dim $= 6$, Radial flow \\
MC Dropout     & 50  & 128 & $1 \times 10^{-3}$ & Adam & Dropout $= 0.5$, 50 MC samples \\
Deep Ensemble  & 50  & 128 & $1 \times 10^{-3}$ & Adam & 5 members \\ \hline
\end{tabular}
}
\end{table}

\begin{table}[htbp]
\centering
\caption{Baseline Training Hyperparameters (CIFAR-10)}
\label{tab:cifar_baselines}
\resizebox{\textwidth}{!}{
\begin{tabular}{lccccc}
\hline
\textbf{Method} & \textbf{Epochs} & \textbf{Batch Size} & \textbf{LR} & \textbf{Optimizer} & \textbf{Specific Parameters} \\ \hline
EDL            & 100 & 128 & $0.1$              & SGD (cosine) & Weight decay $= 5\times10^{-4}$, KL annealing $= 10$ epochs \\
R-EDL          & 200 & 64  & $1 \times 10^{-4}$ & Adam         & $\lambda_1=1.0,\ \lambda_2=0.1$ \\
Re-EDL         & 200 & 64  & $1 \times 10^{-4}$ & Adam         & $\lambda_1=1.0,\ \lambda_2=0.8,\ \kappa=0$ \\
DAEDL          & 100 & 64  & $1 \times 10^{-3}$ & Adam         & Reg $= 0.05$, Dropout $= 0.5$ \\
PostNet        & 200 & 64  & $5 \times 10^{-4}$ & Adam         & Latent dim $= 6$, Radial flow \\
MC Dropout     & 100 & 128 & $0.1$              & SGD (cosine) & Dropout $= 0.3$, 50 MC samples \\
Deep Ensemble  & 100 & 128 & $0.1$              & SGD (cosine) & 5 members \\ \hline
\end{tabular}
}
\end{table}

\begin{table}[htbp]
\centering
\caption{Baseline Training Hyperparameters (LAMOST)}
\label{tab:lamost_baselines}
\resizebox{\textwidth}{!}{
\begin{tabular}{lccccc}
\hline
\textbf{Method} & \textbf{Epochs} & \textbf{Batch Size} & \textbf{LR} & \textbf{Optimizer} & \textbf{Specific Parameters} \\ \hline
EDL            & 50 & 128 & $1 \times 10^{-3}$ & Adam & Weight decay $= 1\times10^{-4}$, KL annealing $= 10$ epochs \\
R-EDL          & 60 & 64  & $1 \times 10^{-3}$ & Adam & $\lambda_1=1.0,\ \lambda_2=0.1$ \\
Re-EDL         & 60 & 64  & $1 \times 10^{-3}$ & Adam & $\lambda_1=1.0,\ \lambda_2=0.1,\ \kappa=0$ \\
DAEDL          & 50 & 64  & $1 \times 10^{-3}$ & Adam & Reg $= 0.05$, Dropout $= 0.5$ \\
PostNet        & 50 & 64  & $5 \times 10^{-5}$ & Adam & Latent dim $= 6$, Radial flow \\
MC Dropout     & 50 & 128 & $1 \times 10^{-3}$ & Adam & Dropout $= 0.5$, 50 MC samples \\
Deep Ensemble  & 50 & 128 & $1 \times 10^{-3}$ & Adam & 5 members \\ \hline
\end{tabular}
}
\end{table}

\section{On the OOD Brier Score of DIP-EDL}
\label{appdx:on_BS}

We provide visual evidence for the claim in Section~\ref{sec:experiment_real_cifar} that density estimation difficulty explains DIP-EDL's higher OOD Brier Score on CIFAR-10. Figure~\ref{fig:confidence_comparison} shows the predicted confidence distributions for ID and OOD samples on MNIST and CIFAR-10.

\begin{figure*}[htbp!]
    \centering
    % Left side: MNIST
    \begin{subfigure}[b]{0.48\textwidth}
        \centering
        \includegraphics[width=\linewidth]{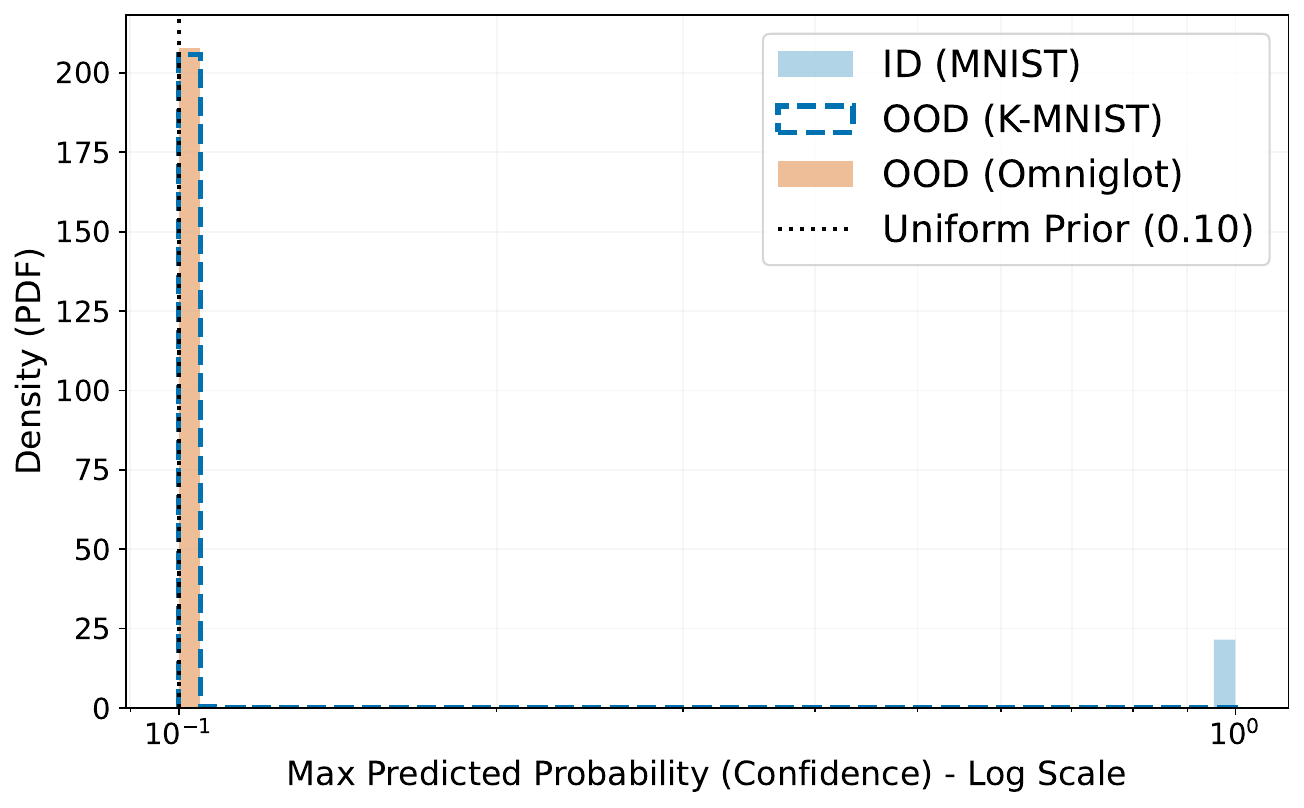}
        \caption{MNIST (ID) vs. KMNIST and Omniglot (OOD)}
        \label{fig:conf_mnist}
    \end{subfigure}
    \hfill
    % Right side: CIFAR-10
    \begin{subfigure}[b]{0.48\textwidth}
        \centering
        \includegraphics[width=\linewidth]{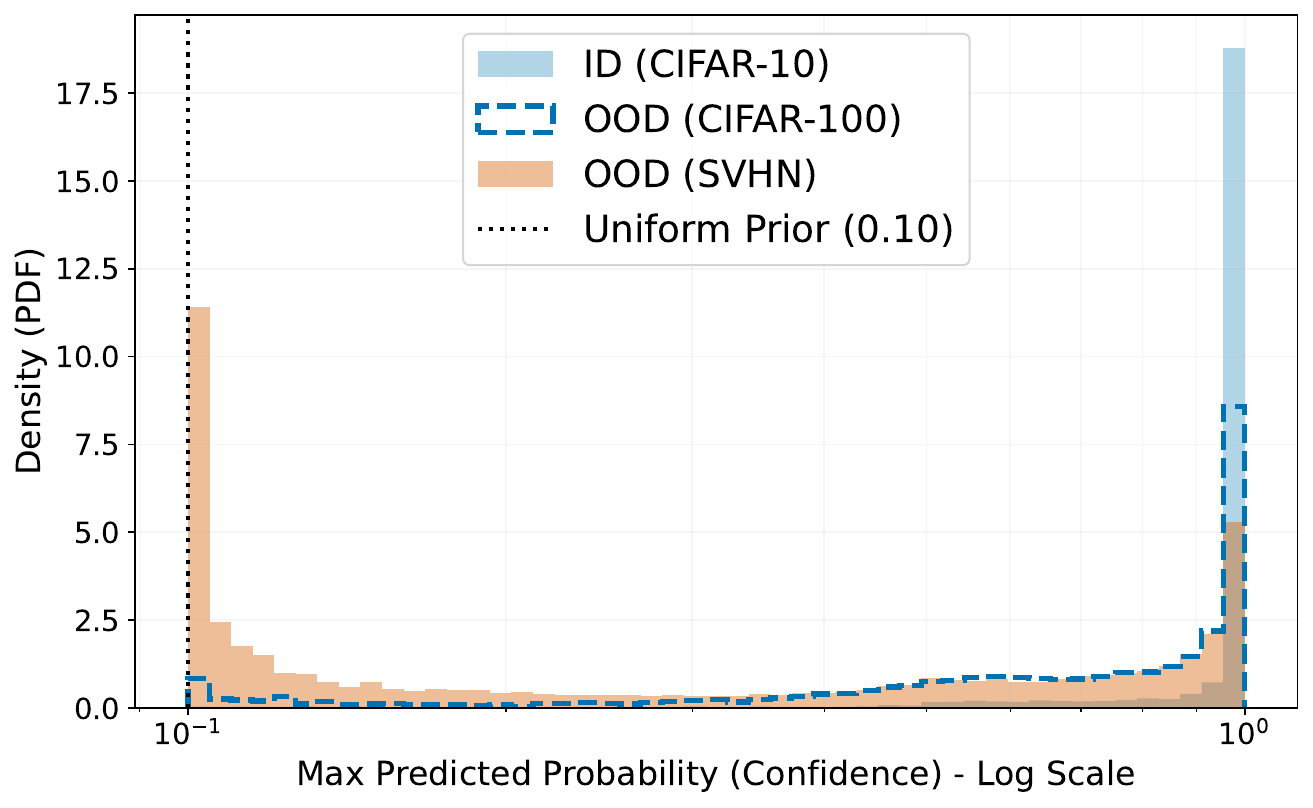}
        \caption{CIFAR-10 (ID) vs. SVHN and CIFAR-100 (OOD)}
        \label{fig:conf_cifar}
    \end{subfigure}
    
    \caption{\textbf{Predicted confidence distributions for DIP-EDL.} Confidence histograms for ID and OOD samples on MNIST (left) and CIFAR-10 (right).}
    \label{fig:confidence_comparison}
\end{figure*}

On MNIST, the density estimator assigns sufficiently low likelihoods to OOD inputs, driving their predicted confidence toward the uniform prior and yielding a near-zero OOD Brier Score. On CIFAR-10, density estimation in the higher-dimensional space is harder: likelihood estimates for OOD inputs remain large enough to sustain high classifier confidence, producing the larger OOD Brier Score discussed in Section~\ref{sec:experiment_real_cifar}, despite strong AUROC and AUPR performance.

\section{Ablation study - Full results}
\label{appdx:ablation_full}

Tables~\ref{tab:ablation_mnist_appdx} and~\ref{tab:ablation_cifar10_appdx} report the complete ablation results on MNIST and CIFAR-10, including all seven component configurations (individual, pairwise, and full model) discussed in Section~\ref{sec:ablation_study}.

\begin{table*}[ht]
    \centering
    \caption{Full results of the ablation study of DIP-EDL.}
    \setlength{\tabcolsep}{3pt} 

    % --- SUB-TABLE 1: MNIST ---
    \begin{subtable}{\textwidth}
        \centering
        \caption{MNIST (ID) vs. K-MNIST and Omniglot (OOD)}
        \resizebox{\textwidth}{!}{
            \begin{tabular}{@{} ccc | cc | cccccc @{}}
                \toprule
                \multicolumn{3}{c|}{\textbf{Components}} & \multicolumn{2}{c|}{\textbf{ID Performance}} & \multicolumn{6}{c}{\textbf{OOD Performance Metrics}} \\ 
                \cmidrule(r){1-3} \cmidrule(lr){4-5} \cmidrule(l){6-11}
    
                % --- SUB-HEADERS ---
                $n$ & $\mathrm{DE}^{\psi}_{X_{i}}$ & $\mathrm{NN}^{\phi}_{X_{i}}$ 
                & \multicolumn{1}{c}{Acc. ($\uparrow$)} & \multicolumn{1}{c|}{BS ($\downarrow$)} 
                & \multicolumn{2}{c}{AUROC ($\uparrow$)} 
                & \multicolumn{2}{c}{AUPR ($\uparrow$)} 
                & \multicolumn{2}{c}{OOD BS ($\downarrow$)} \\ 
                \cmidrule(lr){6-7} \cmidrule(lr){8-9} \cmidrule(lr){10-11}
    
                % --- DATASET ROW ---
                & & & & & \multicolumn{1}{c}{\textbf{K-MNIST}} & \multicolumn{1}{c}{\textbf{Omniglot}} & \multicolumn{1}{c}{\textbf{K-MNIST}} & \multicolumn{1}{c}{\textbf{Omniglot}} & \multicolumn{1}{c}{\textbf{K-MNIST}} & \multicolumn{1}{c}{\textbf{Omniglot}} \\
                \midrule
                
                % --- PAIR COMBINATIONS ---
                \checkmark & \checkmark & $\times$ & $0.0980$ & $0.9000$ & $\mathbf{0.9998}$ & $\mathbf{0.9998}$ & $\mathbf{0.9996}$ & $\mathbf{0.9997}$ & $\mathbf{0.0000}$ & $\mathbf{0.0000}$ \\ % Density + N (2a)
                \checkmark & $\times$ & \checkmark & $\mathbf{0.9958}$ & $\mathbf{0.0069}$ & $0.5138$ & $0.5299$ & $0.5742$ & $0.6408$ & $0.6826$ & $0.7124$ \\ % CNN + N (2b)
                $\times$ & \checkmark & \checkmark & $0.9952$ & $0.7202$ & $0.9996$ & $0.9996$ & $0.9992$ & $0.9994$ & $\mathbf{0.0000}$ & $\mathbf{0.0000}$ \\ % CNN + Density (2c)
    
                \midrule
    
                % --- FULL MODEL ---
                \checkmark & \checkmark & \checkmark & $0.9955$ & $0.0079$ & $\mathbf{0.9998}$ & $\mathbf{0.9998}$ & $0.9995$ & $\mathbf{0.9997}$ & $0.0014$ & $\mathbf{0.0000}$ \\ % Full (3)
    
                \midrule
                
                % --- SINGLE COMPONENTS ---
                \checkmark & $\times$ & $\times$ & $0.0980$ & $0.9000$ & $0.5000$ & $0.5000$ & $0.5000$ & $0.5686$ & $\mathbf{0.0000}$ & $\mathbf{0.0000}$ \\ % Only n (1a)
                $\times$ & \checkmark & $\times$ & $0.0980$ & $0.9000$ & $0.9997$ & $0.9997$ & $0.9993$ & $0.9995$ & $\mathbf{0.0000}$ & $\mathbf{0.0000}$ \\ % Only Density (1b)
                $\times$ & $\times$ & \checkmark & $\mathbf{0.9958}$ & $0.7446$ & $0.5076$ & $0.5174$ & $0.5386$ & $0.6021$ & $0.0056$ & $0.0059$ \\ % Only CNN (1c)
    
                \bottomrule
                \addlinespace
            \end{tabular}
        }
        \label{tab:ablation_mnist_appdx}
    \end{subtable}

    \vspace{1.5em}

    % --- SUB-TABLE 2: CIFAR-10 ---
    \begin{subtable}{\textwidth}
        \centering
        \caption{CIFAR-10 (ID) vs. CIFAR-100 and SVHN (OOD)}
        \resizebox{\textwidth}{!}{
            \begin{tabular}{@{} ccc | cc | cccccc @{}}
                \toprule
                \multicolumn{3}{c|}{\textbf{Components}} & \multicolumn{2}{c|}{\textbf{ID Performance}} & \multicolumn{6}{c}{\textbf{OOD Performance Metrics}} \\ 
                \cmidrule(r){1-3} \cmidrule(lr){4-5} \cmidrule(l){6-11}
    
                % --- SUB-HEADERS ---
                $n$ & $\mathrm{DE}^{\psi}_{X_{i}}$ & $\mathrm{NN}^{\phi}_{X_{i}}$ 
                & \multicolumn{1}{c}{Acc. ($\uparrow$)} & \multicolumn{1}{c|}{BS ($\downarrow$)} 
                & \multicolumn{2}{c}{AUROC ($\uparrow$)} 
                & \multicolumn{2}{c}{AUPR ($\uparrow$)} 
                & \multicolumn{2}{c}{OOD BS ($\downarrow$)} \\ 
                \cmidrule(lr){6-7} \cmidrule(lr){8-9} \cmidrule(lr){10-11}
    
                % --- DATASET ROW ---
                & & & & & \multicolumn{1}{c}{\textbf{CIFAR-100}} & \multicolumn{1}{c}{\textbf{SVHN}} & \multicolumn{1}{c}{\textbf{CIFAR-100}} & \multicolumn{1}{c}{\textbf{SVHN}} & \multicolumn{1}{c}{\textbf{CIFAR-100}} & \multicolumn{1}{c}{\textbf{SVHN}} \\
                \midrule
                
                % --- PAIR COMBINATIONS ---
                \checkmark & \checkmark & $\times$ & $0.1000$ & $0.9000$ & $\mathbf{0.9000}$ & $\mathbf{0.9686}$ & $\mathbf{0.8864}$ & $\mathbf{0.9850}$ & $\mathbf{0.0000}$ & $\mathbf{0.0000}$ \\ % Density + N (2a)
                \checkmark & $\times$ & \checkmark & $\mathbf{0.9496}$ & $\mathbf{0.0831}$ & $0.5036$ & $0.5014$ & $0.5014$ & $0.7224$ & $0.6945$ & $0.6412$ \\ % CNN + N (2b)
                $\times$ & \checkmark & \checkmark & $\mathbf{0.9496}$ & $0.7653$ & $\mathbf{0.9000}$ & $\mathbf{0.9686}$ & $\mathbf{0.8864}$ & $\mathbf{0.9850}$ & $0.0004$ & $\mathbf{0.0000}$ \\ % CNN + Density (2c)
    
                \midrule
    
                % --- FULL MODEL ---
                \checkmark & \checkmark & \checkmark & $\mathbf{0.9496}$ & $0.0978$ & $\mathbf{0.9000}$ & $\mathbf{0.9686}$ & $\mathbf{0.8864}$ & $\mathbf{0.9850}$ & $0.3303$ & $0.1032$ \\ % Full (3)
    
                \midrule
                
                % --- SINGLE COMPONENTS ---
                \checkmark & $\times$ & $\times$ & $0.1000$ & $0.9000$ & $0.5000$ & $0.5000$ & $0.5000$ & $0.7225$ & $\mathbf{0.0000}$ & $\mathbf{0.0000}$ \\ % Only n (1a)
                $\times$ & \checkmark & $\times$ & $0.1000$ & $0.9000$ & $\mathbf{0.9000}$ & $\mathbf{0.9686}$ & $\mathbf{0.8864}$ & $\mathbf{0.9850}$ & $\mathbf{0.0000}$ & $\mathbf{0.0000}$ \\ % Only Density (1b)
                $\times$ & $\times$ & \checkmark & $\mathbf{0.9496}$ & $0.7536$ & $0.5015$ & $0.4991$ & $0.5010$ & $0.7222$ & $0.0057$ & $0.0053$ \\ % Only CNN (1c)
    
                \bottomrule
                \addlinespace
            \end{tabular}
        } 
        \label{tab:ablation_cifar10_appdx}
    \end{subtable}
    
    \label{tab:ablation_combined_appdx}
\end{table*}

\section{Toy Example: Vacuity Under Perfect Interpolation}
\label{appdx:toy_example}

To empirically validate Proposition~\ref{prop:vacuity_perfect_interpolation} (that in-sample vacuity collapses to $\tfrac{K}{\alpha_0 + \nu}$ under perfect interpolation), we train a high-capacity neural network on a synthetic binary classification task designed to reach near-zero training error. We sample $100{,}000$ observations from a balanced mixture of $\mathcal{N}([0,0],I)$ and $\mathcal{N}([7,7],I)$; the two components are well-separated, making near-perfect interpolation achievable by a sufficiently expressive model. We set the prior to $\alpha = [1,1]^\top$ (so $\alpha_0 = 2$). The network uses 3 hidden layers with 64 ReLU units and a ReLU output layer to produce non-negative evidence values. It is trained with the EDL loss for 3000 iterations using Adam, and we repeat training across $\nu \in \{1, 5, 10, 50, 100, 500\}$ with a fixed random seed for reproducibility.

\begin{table}[ht]
\centering
\caption{\textbf{Empirical vacuity under perfect interpolation.} In-sample vacuity statistics for different values of $\nu$, compared to the theoretical prediction $\tfrac{K}{\alpha_0 + \nu}$ from Proposition~\ref{prop:vacuity_perfect_interpolation}.}
\label{tab:toy_vacuity}
\begin{tabular}{cccc}
\toprule
$\nu$ & $\tfrac{K}{\alpha_0 + \nu}$ & Mean & Std.\ Dev. \\
\midrule
1   & 0.667 & 0.667 & 0.0004 \\
5   & 0.286 & 0.287 & 0.0021 \\
10  & 0.167 & 0.167 & 0.0007 \\
50  & 0.039 & 0.038 & 0.0002 \\
100 & 0.020 & 0.020 & 0.0002 \\
500 & 0.004 & 0.004 & 0.0005 \\
\bottomrule
\end{tabular}
\end{table}

Table~\ref{tab:toy_vacuity} shows that empirical vacuity tracks $\tfrac{K}{\alpha_0 + \nu}$ with high precision across all tested values of $\nu$. As $\nu$ increases, the theoretical value $\tfrac{K}{\alpha_0 + \nu}$ decreases, reflecting a more informative prior; the empirical mean and median match this value closely in every case. The sharp concentration (near-zero standard deviation) confirms that, as the model approaches perfect interpolation, in-sample uncertainty is entirely determined by the arbitrary hyperparameter $\nu$, irrespective of the data distribution. This confirms that standard EDL cannot distinguish epistemic from aleatoric uncertainty, and directly motivates the density-informed reparameterization in DIP-EDL.

\section{Density Robustness: Likelihood Scaling and Corruption}
\label{appdx:density_robustness}

We run two experiments to assess the sensitivity of DIP-EDL with respect to the density estimator. In both cases, we verify that DIP-EDL remains robust to imprecision in the density estimate.

\subsection{Likelihood Scaling (\texorpdfstring{$\gamma$}{γ}-sensitivity)}

We introduce a scaling parameter $\gamma > 0$ so that concentration parameters become $\alpha = \mathbf{1} + \gamma \cdot n \cdot \mathrm{DE}^{\psi}_{X} \cdot \mathrm{NN}^{\phi}_{X}$, where $\gamma = 1$ corresponds to standard DIP-EDL. Tables~\ref{tab:nu_edl_gamma_mnist_appdx} and~\ref{tab:nu_edl_gamma_cifar_appdx} show results across two orders of magnitude of $\gamma$ on MNIST and CIFAR-10. ID accuracy, AUROC, and AUPR remain essentially unchanged, demonstrating robustness to the overall scale of the density estimate. The only metric that varies with $\gamma$ is OOD Brier Score, which increases monotonically as larger $\gamma$ amplifies model confidence on OOD inputs, consistent with our discussion in Section~\ref{sec:experiment_real_cifar}.

\begin{table*}[ht]
    \centering
    \caption{\textbf{Likelihood scaling ($\gamma$) on MNIST.} Effect of the likelihood scaling sensitivity parameter $\gamma$ on ID accuracy, ID BS, AUROC, AUPR, and OOD BS, evaluated on MNIST.}
    \setlength{\tabcolsep}{3pt}
    \resizebox{\textwidth}{!}{
        \begin{tabular}{@{} c | c c | c c c c c c @{}}
            \toprule
            \multirow{3}{*}{\textbf{$\gamma$}} & \multicolumn{2}{c|}{\textbf{ID Performance}} & \multicolumn{6}{c}{\textbf{OOD Performance Metrics}} \\
            \cmidrule(lr){2-3} \cmidrule(lr){4-9}
            & \multicolumn{1}{c}{Acc. ($\uparrow$)} & \multicolumn{1}{c|}{BS ($\downarrow$)}
            & \multicolumn{2}{c}{AUROC ($\uparrow$)}
            & \multicolumn{2}{c}{AUPR ($\uparrow$)}
            & \multicolumn{2}{c}{OOD BS ($\downarrow$)} \\
            \cmidrule(lr){4-5} \cmidrule(lr){6-7} \cmidrule(lr){8-9}
            & \multicolumn{2}{c|}{\textbf{MNIST}} & \multicolumn{1}{c}{\textbf{K-MNIST}} & \multicolumn{1}{c}{\textbf{Omniglot}} & \multicolumn{1}{c}{\textbf{K-MNIST}} & \multicolumn{1}{c}{\textbf{Omniglot}} & \multicolumn{1}{c}{\textbf{K-MNIST}} & \multicolumn{1}{c}{\textbf{Omniglot}} \\
            \midrule
            \textbf{0.1}  & $0.9951$ & $0.0092$ & $0.9997$ & $0.9998$ & $0.9995$ & $0.9996$ & $0.0005$ & $0.0000$ \\
            \textbf{0.5}  & $0.9951$ & $0.0090$ & $0.9997$ & $0.9998$ & $0.9995$ & $0.9996$ & $0.0012$ & $0.0000$ \\
            \textbf{1.0} {\small(DIP-EDL)}  & $0.9952$ & $0.0088$ & $0.9998$ & $0.9999$ & $0.9997$ & $0.9998$ & $0.0017$ & $0.0000$ \\
            \textbf{2.0}  & $0.9951$ & $0.0089$ & $0.9997$ & $0.9998$ & $0.9995$ & $0.9996$ & $0.0021$ & $0.0000$ \\
            \textbf{5.0}  & $0.9954$ & $0.0087$ & $0.9999$ & $0.9999$ & $0.9998$ & $0.9999$ & $0.0030$ & $0.0000$ \\
            \textbf{10.0} & $0.9953$ & $0.0088$ & $0.9998$ & $0.9999$ & $0.9997$ & $0.9998$ & $0.0034$ & $0.0000$ \\
            \bottomrule
        \end{tabular}
    }
    \label{tab:nu_edl_gamma_mnist_appdx}
\end{table*}

\begin{table*}[ht]
    \centering
    \caption{\textbf{Likelihood scaling ($\gamma$) on CIFAR-10.} Effect of the likelihood scaling sensitivity parameter $\gamma$ on ID accuracy, ID BS, AUROC, AUPR, and OOD BS, evaluated on CIFAR-10.}
    \setlength{\tabcolsep}{3pt}
    \resizebox{\textwidth}{!}{
        \begin{tabular}{@{} c | c c | c c c c c c @{}}
            \toprule
            \multirow{3}{*}{\textbf{$\gamma$}} & \multicolumn{2}{c|}{\textbf{ID Performance}} & \multicolumn{6}{c}{\textbf{OOD Performance Metrics}} \\
            \cmidrule(lr){2-3} \cmidrule(lr){4-9}
            & \multicolumn{1}{c}{Acc.\ ($\uparrow$)} & \multicolumn{1}{c|}{BS ($\downarrow$)}
            & \multicolumn{2}{c}{AUROC ($\uparrow$)}
            & \multicolumn{2}{c}{AUPR ($\uparrow$)}
            & \multicolumn{2}{c}{OOD BS ($\downarrow$)} \\
            \cmidrule(lr){4-5} \cmidrule(lr){6-7} \cmidrule(lr){8-9}
            & \multicolumn{2}{c|}{\textbf{CIFAR-10}} & \multicolumn{1}{c}{\textbf{CIFAR-100}} & \multicolumn{1}{c}{\textbf{SVHN}} & \multicolumn{1}{c}{\textbf{CIFAR-100}} & \multicolumn{1}{c}{\textbf{SVHN}} & \multicolumn{1}{c}{\textbf{CIFAR-100}} & \multicolumn{1}{c}{\textbf{SVHN}} \\
            \midrule
    \textbf{0.1} & $0.9480$ & $0.1148$ & $0.9010$ & $0.9630$ & $0.8860$ & $0.9819$ & $0.2132$ & $0.0604$ \\
    % \textbf{0.2} & $0.9480$ & $0.1087$ & $0.9010$ & $0.9630$ & $0.8860$ & $0.9819$ & $0.2455$ & $0.0764$ \\
    \textbf{0.5} & $0.9480$ & $0.1029$ & $0.9010$ & $0.9630$ & $0.8860$ & $0.9819$ & $0.2885$ & $0.1009$ \\
    \textbf{1.0} {\small(DIP-EDL)} & $0.9480$ & $0.0996$ & $0.9010$ & $0.9630$ & $0.8860$ & $0.9819$ & $0.3206$ & $0.1217$ \\
    \textbf{2.0} & $0.9480$ & $0.0969$ & $0.9010$ & $0.9630$ & $0.8860$ & $0.9819$ & $0.3518$ & $0.1439$ \\
    \textbf{5.0} & $0.9480$ & $0.0940$ & $0.9010$ & $0.9630$ & $0.8860$ & $0.9819$ & $0.3911$ & $0.1747$ \\
    \textbf{10.0} & $0.9480$ & $0.0920$ & $0.9010$ & $0.9630$ & $0.8860$ & $0.9819$ & $0.4191$ & $0.1988$ \\
            \bottomrule
        \end{tabular}
    }
    \label{tab:nu_edl_gamma_cifar_appdx}
\end{table*}

\subsection{Density Corruption (\texorpdfstring{$\sigma$}{σ}-robustness)}

We simulate degraded density estimates by injecting Gaussian noise $\mathcal{N}(0, \sigma^2)$ into the z-scored log-probability before computing $\mathrm{DE}^{\psi}_X$. The case $\sigma = 0$ corresponds to the standard clean run, and $\sigma = \infty$ replaces the density entirely with i.i.d.\ $\mathcal{N}(0,1)$ noise (a fully uninformative density estimator). Tables~\ref{tab:density_corruption_mnist_appdx} and~\ref{tab:density_corruption_cifar_appdx} show that DIP-EDL is very robust: all metrics remain essentially unchanged up to $\sigma = 2$, with graceful degradation at $\sigma = 5$. OOD detection becomes near-random only when the density estimator is fully uninformative ($\sigma = \infty$), consistent with the ablation results in Section~\ref{sec:ablation_study}.

\begin{table*}[ht]
    \centering
    \caption{\textbf{Density corruption ($\sigma$) on MNIST.} Effect of Gaussian density corruption sensitivity parameter $\sigma$ on ID accuracy, ID BS, AUROC, AUPR, and OOD BS, evaluated on MNIST.}
    \setlength{\tabcolsep}{3pt}
    \resizebox{\textwidth}{!}{
        \begin{tabular}{@{} c | c c | c c c c c c @{}}
            \toprule
            \multirow{3}{*}{\textbf{$\sigma$}} & \multicolumn{2}{c|}{\textbf{ID Performance}} & \multicolumn{6}{c}{\textbf{OOD Performance Metrics}} \\
            \cmidrule(lr){2-3} \cmidrule(lr){4-9}
            & \multicolumn{1}{c}{Acc. ($\uparrow$)} & \multicolumn{1}{c|}{BS ($\downarrow$)}
            & \multicolumn{2}{c}{AUROC ($\uparrow$)}
            & \multicolumn{2}{c}{AUPR ($\uparrow$)}
            & \multicolumn{2}{c}{OOD BS ($\downarrow$)} \\
            \cmidrule(lr){4-5} \cmidrule(lr){6-7} \cmidrule(lr){8-9}
            & \multicolumn{2}{c|}{\textbf{MNIST}} & \multicolumn{1}{c}{\textbf{K-MNIST}} & \multicolumn{1}{c}{\textbf{Omniglot}} & \multicolumn{1}{c}{\textbf{K-MNIST}} & \multicolumn{1}{c}{\textbf{Omniglot}} & \multicolumn{1}{c}{\textbf{K-MNIST}} & \multicolumn{1}{c}{\textbf{Omniglot}} \\
            \midrule
            \textbf{$0$} {\small(clean)}  & $0.9951$ & $0.0089$ & $0.9998$ & $0.9998$ & $0.9996$ & $0.9997$ & $0.0017$ & $0.0000$ \\
            \textbf{$0.5$}        & $0.9952 \pm 0.0001$ & $0.0089 \pm 0.0001$ & $0.9998 \pm 0.0001$ & $0.9998 \pm 0.0001$ & $0.9996 \pm 0.0001$ & $0.9997 \pm 0.0001$ & $0.0017 \pm 0.0000$ & $0.0000 \pm 0.0000$ \\
            \textbf{$1$}          & $0.9952 \pm 0.0001$ & $0.0089 \pm 0.0001$ & $0.9998 \pm 0.0001$ & $0.9998 \pm 0.0001$ & $0.9996 \pm 0.0001$ & $0.9997 \pm 0.0001$ & $0.0017 \pm 0.0001$ & $0.0000 \pm 0.0000$ \\
            \textbf{$2$}          & $0.9952 \pm 0.0001$ & $0.0093 \pm 0.0002$ & $0.9997 \pm 0.0001$ & $0.9998 \pm 0.0001$ & $0.9995 \pm 0.0001$ & $0.9997 \pm 0.0001$ & $0.0019 \pm 0.0001$ & $0.0000 \pm 0.0000$ \\
            \textbf{$5$}          & $0.9952 \pm 0.0001$ & $0.0433 \pm 0.0008$ & $0.9984 \pm 0.0001$ & $0.9998 \pm 0.0001$ & $0.9985 \pm 0.0001$ & $0.9997 \pm 0.0001$ & $0.0035 \pm 0.0002$ & $0.0000 \pm 0.0000$ \\
            \midrule
            \textbf{$\infty$}     & $0.9955 \pm 0.0000$ & $0.0078 \pm 0.0000$ & $0.5016 \pm 0.0031$ & $0.5023 \pm 0.0035$ & $0.5010 \pm 0.0026$ & $0.5707 \pm 0.0027$ & $0.6957 \pm 0.0000$ & $0.7443 \pm 0.0000$ \\
            \bottomrule
        \end{tabular}
    }
    \label{tab:density_corruption_mnist_appdx}
\end{table*}

\begin{table}[H]
    \centering
    \caption{\textbf{Density corruption ($\sigma$) on CIFAR-10.} Effect of Gaussian density corruption sensitivity parameter $\sigma$ on ID accuracy, ID BS, AUROC, AUPR, and OOD BS, evaluated on CIFAR-10.}
    \setlength{\tabcolsep}{3pt}
    \resizebox{\textwidth}{!}{
        \begin{tabular}{@{} c | c c | c c c c c c @{}}
            \toprule
            \multirow{3}{*}{\textbf{$\sigma$}} & \multicolumn{2}{c|}{\textbf{ID Performance}} & \multicolumn{6}{c}{\textbf{OOD Performance Metrics}} \\
            \cmidrule(lr){2-3} \cmidrule(lr){4-9}
            & \multicolumn{1}{c}{Acc.\ ($\uparrow$)} & \multicolumn{1}{c|}{BS ($\downarrow$)}
            & \multicolumn{2}{c}{AUROC ($\uparrow$)}
            & \multicolumn{2}{c}{AUPR ($\uparrow$)}
            & \multicolumn{2}{c}{OOD BS ($\downarrow$)} \\
            \cmidrule(lr){4-5} \cmidrule(lr){6-7} \cmidrule(lr){8-9}
            & \multicolumn{2}{c|}{\textbf{CIFAR-10}} & \multicolumn{1}{c}{\textbf{CIFAR-100}} & \multicolumn{1}{c}{\textbf{SVHN}} & \multicolumn{1}{c}{\textbf{CIFAR-100}} & \multicolumn{1}{c}{\textbf{SVHN}} & \multicolumn{1}{c}{\textbf{CIFAR-100}} & \multicolumn{1}{c}{\textbf{SVHN}} \\
            \midrule
    \textbf{$0$} {\small(clean)} & $0.9480$ & $0.0996$ & $0.9010$ & $0.9630$ & $0.8860$ & $0.9819$ & $0.3206$ & $0.1217$ \\
    \textbf{$0.5$} & $0.9480$ & $0.0996 \pm 0.0001$ & $0.8988 \pm 0.0006$ & $0.9623 \pm 0.0001$ & $0.8848 \pm 0.0003$ & $0.9817 \pm 0.0000$ & $0.3202 \pm 0.0005$ & $0.1220 \pm 0.0002$ \\
    \textbf{$1$} & $0.9480$ & $0.1000 \pm 0.0002$ & $0.8925 \pm 0.0010$ & $0.9601 \pm 0.0003$ & $0.8811 \pm 0.0006$ & $0.9809 \pm 0.0001$ & $0.3193 \pm 0.0009$ & $0.1231 \pm 0.0002$ \\
    \textbf{$2$} & $0.9480$ & $0.1023 \pm 0.0003$ & $0.8734 \pm 0.0015$ & $0.9514 \pm 0.0006$ & $0.8682 \pm 0.0010$ & $0.9778 \pm 0.0002$ & $0.3167 \pm 0.0015$ & $0.1274 \pm 0.0003$ \\
    \textbf{$5$} & $0.9480$ & $0.1451 \pm 0.0011$ & $0.8002 \pm 0.0027$ & $0.8995 \pm 0.0017$ & $0.8081 \pm 0.0021$ & $0.9574 \pm 0.0006$ & $0.3099 \pm 0.0010$ & $0.1505 \pm 0.0008$ \\
    \textbf{$\infty$} & $0.9480$ & $0.0851 \pm 0.0000$ & $0.5031 \pm 0.0040$ & $0.5016 \pm 0.0056$ & $0.5016 \pm 0.0046$ & $0.7226 \pm 0.0039$ & $0.7004 \pm 0.0000$ & $0.6514 \pm 0.0000$ \\
            \bottomrule
        \end{tabular}
    }
    \label{tab:density_corruption_cifar_appdx}
\end{table}

\end{document}